\theoremstyle{plain}
\newtheorem{theorem}{Theorem}[section]
\newtheorem{proposition}[theorem]{Proposition}
\newtheorem{lemma}[theorem]{Lemma}
\theoremstyle{definition}
\newtheorem{definition}[theorem]{Definition}
\newtheorem{assumption}[theorem]{Assumption}
\theoremstyle{remark}
\newtheorem{remark}[theorem]{Remark}
\newcommand{\mathbbm}[1]{\text{\usefont{U}{bbm}{m}{n}#1}} 
\newcommand{\CF}{\text{\tiny CF}}
\newcommand{\cF}{\mathcal{F}}
\newcommand{\F}{\text{\tiny F}}
 \newcommand{\AF}{\text{\tiny AF}}
\newcommand{\RCT}{\text{\tiny RCT}}
\newcommand{\ATE}{\text{\tiny ATE}}
\newcommand{\PEHE}{\text{\tiny PEHE}}
\newcommand{\IFF}{\text{IF}}
\newcommand{\ICF}{\text{ICF}}
\title{CATE Estimation With Potential Outcome Imputation From Local Regression}
\author{Ahmed Aloui \thanks{Equal contribution.}}
\author{Juncheng Dong $^*$}
\author{Cat P. Le}
\author{Vahid Tarokh}
\affil{
Department of Electrical and Computer Engineering, Duke University
}
\begin{document}
\maketitle
\begin{abstract}
One of the most significant challenges in Conditional Average Treatment Effect (CATE) estimation is the statistical discrepancy between distinct treatment groups. To address this issue, we propose a model-agnostic data augmentation method for CATE estimation. First, we derive regret bounds for general data augmentation methods suggesting that a small imputation error may be necessary for accurate CATE estimation. Inspired by this idea, we propose a contrastive learning approach that reliably imputes missing potential outcomes for a selected subset of individuals formed using a similarity measure. We augment the original dataset with these reliable imputations to reduce the discrepancy between different treatment groups while inducing minimal imputation error. The augmented dataset can subsequently be employed to train standard CATE estimation models. We provide both theoretical guarantees and extensive numerical studies demonstrating the effectiveness of our approach in improving the accuracy and robustness of numerous CATE estimation models.
\end{abstract}

\section{Introduction}\label{sec:intro}
One of the most significant challenges for Conditional Average Treatment Effect (CATE) estimation is the statistical disparity between distinct treatment groups~\citep{goldsmith2022measuring}. While Randomized Controlled Trials (RCTs) mitigate this issue \citep{rubin1974,imbensbook}, they can be expensive, unethical, and sometimes unfeasible to conduct. Consequently, we are often constrained to relying on observational studies susceptible to the aforementioned issue due to various selection biases. To address this, we introduce a \emph{model-agnostic data augmentation method} based on potential outcome imputation, comprising two key steps. First, our approach identifies a subset of individuals whose counterfactual outcomes can be reliably imputed. Subsequently, it performs imputation for the counterfactual outcomes of these selected individuals, thereby augmenting the original dataset with the imputed values. Crucially, our method functions as a data preprocessing module that is independent of the chosen CATE estimation model. 

Our key insight is that \textit{potential outcome imputation for individuals in the observational dataset can reduce the statistical discrepancy between treatment groups}. In particular,  
our method is motivated by an observed trade-off between \textit{(i)} the discrepancy across treatment groups and \textit{(ii)} the imputation error of the missing counterfactual outcomes. To give a concrete example, consider the scenario with a binary treatment assignment. In this context, no individual can appear in both the control and treatment groups due to the inaccessibility of counterfactual outcomes \citep{holland1986}. 
Suppose that we \textit{randomly} impute the missing counterfactual outcomes and subsequently integrate each individual along with their imputed outcomes back into the dataset. This data augmentation process ensures that individuals from both the control and treatment groups become identical, effectively eliminating all disparities. 
However, it becomes evident that any model trained on such a randomly augmented dataset would exhibit poor performance, primarily due to the substantial error introduced by the random imputation of the counterfactual outcomes. To tackle this challenge, we propose to identify a \textit{subset} of individuals whose counterfactual outcomes can be reliably imputed and limit the imputation to this group. Such a risk-averse approach effectively control the imputation error so that \emph{the positive impact of disparity reduction will outweigh the negative impact of imputation error}. 
We note that this is a general framework which can be instantiated with different choices of imputation methods. In Section~\ref{sec:general-theory}, we formalize our insights by providing theoretical guarantees for this framework.  


To instantiate this framework, this work considers a specific realization --- \emph{imputation with local regression methods} such as local Gaussian Process (GP) using close neighbors of individuals to impute their missing counterfactual outcome. 
We choose to use the number of close neighbors as the criterion for imputation: our method only impute for individuals with sufficient number of neighbors. 
Specifically, we impute the counterfactual outcomes for these individuals by utilizing the factual outcomes of their proximate neighbors
We explore two distinct methods for imputation: linear regression and GP. To comprehensively assess the efficacy of our method, we theoretically demonstrate that our approach asymptotically generates datasets whose probability densities converge to those of RCTs. In addition, we provide finite-sample generalization bounds for GP-based local regression in Section~\ref{sec:theory}. 

To further boost the performance of local regression methods, we propose to use contrastive learning to 
identify the close neighbors of individuals for potential outcome imputation. Through contrastive learning, we learn a representation space and a similarity measure, such that within this learned representation space, \textit{close} individuals identified by the similarity measure exhibit \textit{similar} potential outcomes. 
This smoothness property facilitates local regression methods as the outcomes are highly correlated locally. 
Consequently, \textit{this ensures that the imputation can be achieved locally within the representation space} with simple models that require minimal tuning. 

 Our empirical results in Section~\ref{sec:experiments} further demonstrate the efficacy of our method, showcasing consistent enhancements in the performance of state-of-the-art CATE estimation models, including TARNet, CFR-Wass, and CFR-MMD~\citep{shalit}, S-Learner and T-Learner integrated with neural networks, Bayesian Additive Regression Trees (BART)~\citep{hill2011,chipman2010bart,hill2020bayesian} with X-Learner~\citep{kunzel2019metalearners}, and Causal Forests (CF) ~\citep{athey2016recursive} with X-Learner. 

\textbf{Related Works.} One of the fundamental tasks in causal inference is to estimate \textit{Average Treatment Effects} (ATE) and \textit{Conditional Average Treatment Effects} (CATE)~\citep{neyman1923,rubin2005causal}. Various methods have been proposed for ATE estimation, including Covariate Adjustment~\citep{rubin}, Propensity Scores~\citep{rosenbaum1983}, Doubly Robust estimators~\citep{doubly}, Inverse Probability Weighting~\citep{hirano2003}, and recently Reisznet \citep{chernozhukov2022riesznet}. While these methods are successful for ATE estimation, they are not directly applicable to CATE estimation. 

On the other hand, recent advances in machine learning have led to new approaches for CATE estimation, such as decision trees~\citep{athey2016recursive}, Gaussian Processes~\citep{alaa2017bayesian}, Multi-task deep learning ensemble~\citep{jiang2023estimating}, Generative Modeling~\citep{ganite}, and representation learning with deep neural networks~\citep{shalit, johanson}. It is worth noting that alternative approaches for investigating causal relationships exist, such as do-calculus, proposed by Pearl~\citep{pearl2009, pearl2009causal}. Here, we adopt the Neyman-Rubin framework. At its core, the CATE estimation problem can be seen as a missing data problem~\citep{rubin1974, holland1986, ding2018causal} due to the unavailability of the counterfactual outcomes. In this context, we propose a new data augmentation approach for CATE estimation by imputing certain missing counterfactuals. Data augmentation, a well-established technique in machine learning, serves to enhance model performance and curb overfitting by artificially expanding the size of the training dataset~\citep{van2001,chawla2002smote,han2005borderline,jiang2020meshcut, chen2020gridmask,liu2020survey, feng2021survey}. 

A crucial aspect of our methodology is the identification of similar individuals. There are various methods to achieve this goal, including propensity score matching~\citep{rosenbaum1983}, and Mahalanobis distance matching~\citep{imai2008}. 
Nonetheless, these methods pose significant challenges, particularly in scenarios with large sample sizes or high-dimensional data, where they suffer from the curse of dimensionality. Recently, Perfect Match \citep{schwab2018perfect} is proposed to leverage importance sampling to generate replicas of individuals. It relies on propensity scores and other feature space metrics to balance the distribution between the treatment and control groups during the training process. In contrast, we utilize contrastive learning to construct a similarity metric within a representation space. Our method focuses on imputing missing counterfactual outcomes for a selected subset of individuals, without creating duplicates of the original data points. While the Perfect Match method is a universal CATE estimator, our method is a model-agnostic data augmentation method that serves as a data preprocessing step for other CATE estimation models. 


\section{Theoretical Background}
\label{sec:background}
Let $T \in \{0,1\}$ be a binary treatment assignment, $X \in \mathcal{X}\subset\mathbb{R}^{d}$ be the covariates (features), and $Y \in \mathcal{Y} \subset \mathbb{R}$ be the factual (observed) outcome. For each $j\in \{0,1\}$, we define $Y_{j}$ as the \textit{potential outcome}~\citep{rubin1974}, which represents the outcome that would have been observed if only the treatment $T=j$ was administered. The random tuple $(X,T,Y)$ jointly follows the \emph{factual (observational) distribution} denoted by $p_\F(x,t,y)$. 
Let $D_{\F} = \{\left(x_{i},t_{i},y_{i}\right)\}_{i=1}^{n}$ denote a dataset that consists of $n$ observations independently sampled from $p_\F$ where $n$ is the number of observations. 

\begin{definition}[CATE]
\label{def:CATE}
 The Conditional Average Treatment Effect (CATE) is defined as:
\begin{equation}
\tau(x) = \mathbb{E}[Y_{1}-Y_{0}|X=x].
\end{equation}  
\end{definition}
CATE is identifiable under the assumptions of \emph{positivity}, i.e., $0<p_\F(T=1|X)<1$, and \emph{conditional unconfoundedness}, i.e., $(Y_{1}, Y_{0}) \perp \!\!\! \perp T | X$~\citep{robins1986,imbensbook}. 
Let $\hat{\tau}(x) = h(x,1)-h(x,0)$ denote an estimator for CATE where $h$ is a hypothesis $h: \mathcal{X}\times\{0,1\}\rightarrow \mathcal{Y}$ that estimates the underlying causal relationship $f$ between $(X,T)$ and $Y$.
\begin{definition}[PEHE]
\label{def:epehe}
The Expected Precision in Estimating Heterogeneous Treatment Effect (PEHE) ~\citep{hill2011} is defined as:
\begin{equation}
\begin{aligned}
    \varepsilon_{\PEHE}(h)&=\int_{\mathcal{X}}(\hat{\tau}(x)-\tau(x))^2 p_\F(x) d x 
\end{aligned}
\end{equation}
\end{definition}

\begin{definition}
\label{def:loss}
For a joint distribution $p$ over $(X,T,Y)$ and a hypothesis $h$ the loss function is defined as:
$$\mathcal{L}_{p}(h) = \int (y - h(x,t))^2 p(x,t, y) \, dx \, dt \, dy,$$ 
\end{definition}

\begin{remark}
   $\varepsilon_{\PEHE}$ is widely-used as the performance metric for CATE estimation. 
However, directly estimating $\varepsilon_{\PEHE}$ from observational data $D_\F$ is a non-trivial task, as it requires knowledge of the counterfactual outcomes to compute the ground truth CATE values. This challenge underscores that models for CATE estimation need to be robust to overfitting the factual distribution. Our empirical results (in Section~\ref{sec:experiments}) indicate that our method mitigates the risk of overfitting for various CATE estimation models. 
\end{remark}





\section{Understanding Data Augmentation for CATE Estimation}\label{sec:general-theory}




In this section, we present a generalization bound for the performance of CATE estimation models \emph{trained using the augmented dataset}. This theoretical result will motivate our proposed data augmentation algorithm.
Given the observed factual dataset $D_{\F}$ with $n$ samples, a counterfactual data augmentation algorithm has two main components: 
\begin{itemize}
 \item \textit{Component I:} identifying a subset $\mathcal{R}_n \subset \mathcal{X} \times \{0,1\}$, where $\mathcal{R}^t_n \subset \mathcal{X}$ for $t \in \{0,1\}$ is the projection for the treatment and control groups on which to perform data augmentation.
\item \textit{Component II:} imputing the missing potential outcomes for individuals in $\mathcal{R}_n$ with an algorithm $\Tilde{f}_n:\mathcal{R}_n \to \mathcal{Y}$. 
\end{itemize}

The marginal distribution of $(X,T)$ in the augmented dataset
can be defined as follows:
\begin{equation*}
    p_{\AF}(x,t) = \frac{1}{1+\beta} p_{\F}(x,t) + \frac{\beta}{1+\beta} q(x,1-t),
\end{equation*}
where \( \frac{\beta}{1+\beta} \in [0,\frac{1}{2}] \) represents the ratio of the number of the select individuals for augmentation to the total number of samples in the augmented dataset, and $ q = \frac{p_{\F}(x,1-t)}{\alpha} \mathbbm{1}_{\mathcal{R}_n}$, with $\alpha$ as the normalizing constant,i.e., $\alpha = \int p_{\F}(x,1-t) \mathbbm{1}_{\mathcal{R}_n}(x,1-t) dx dt$. 
In other words, $q$ is the factual distribution of the alternative treatment group \emph{with its probability mass normalized to the augmentation region $\mathcal{R}_n$}.
Intuitively, an effective data imputation method $\Tilde{f}_n(x,t)$ should approximate the true function $f$ in the region $\mathcal{R}_n$, i.e., $\Tilde{f}_n(x,t) \approx f(x,t)$ for $x \in \mathcal{R}^t_n$. Hence, $p_{\AF}(y|x,t)$ can be defined as follows: it is equal to $p_{\F}(y|x,t)$ when $(x,t)$ is sampled from the factual distribution; for samples drawn from $q(x,1-t)$, $p_{\AF}(y|x,t)$ is defined as a point mass function $\delta(y=\tilde{f}_n(x,t))$.

Let \( p_{\RCT}(x,t,y) \) represent the distribution of \( (X,T,Y) \) when the observations are sampled from randomized controlled trials. To establish the generalization bound, we assume that there is a true potential outcome function $f$ such that $Y = f(X,T) + \eta$ with $\eta$ verifying that $\mathbb{E}[\eta] = 0$. 


\begin{proposition} [Generalization Bound]
\label{thm:bound}
Let $h$ be a hypothesis, its $\varepsilon_{\PEHE}$ is upper bounded as follows:
\begin{multline}\label{eqn:upper_bound1}
 \varepsilon_{\PEHE}(h) \leq 4 \cdot \Big(\underbrace{\mathcal{L}_{p_\AF}(h)}_{\text{(I)}} + 2 \underbrace{V\big(p_{\RCT}\big(X,T\big), p_{\AF}\big(X,T\big)\big)}_{\text{(II)}} \\
    + \underbrace{\frac{\beta}{1+\beta} \cdot b_\mathcal{A}(n)}_{\text{(III)}}  \Big)
\end{multline}
where $V(g_1,g_2) = \frac{1}{2}\int_{\mathcal{S}} |g_1(s) - g_2(s)| ds$ is the total variation distance between two distributions, and, 
$$ 
b_{\mathcal{A}}(n) = \mathbb{E}_{X,T \sim  q}\big[\|f(X,T) - \Tilde{f}_n(X,T)\|^2 \big] 
$$ 
\end{proposition}

\textbf{Interpretation.} We first note that term \textit{(I)} in Proposition~\ref{thm:bound} is essentially the training loss of a hypothesis $h$ on the augmented dataset. Term \textit{(II)} characterizes the statistical similarity between the individuals' features in the augmented dataset and those generated from an RCT. As there is no statistical disparity across treatment groups when $(X,T)$ follows $p_{\RCT}$, the closer $p_{\AF}$ is to $p_{\RCT}$ the less is the statistical disparity in the augmented dataset. 
Meanwhile, \textit{(III)} characterizes the accuracy of the data augmentation method. 
Hence, this theorem provides a rigorous illustration of the trade-off between the statistical disparity across treatment groups and the imputation error. It underscores that \textit{a data augmentation method with a low potential outcome imputation error can improve CATE estimation}. Also note that as $\frac{\beta}{1+\beta}$ (i.e., the ratio of imputed data points to all the data points) increases, \textit{(III)} increases while \textit{(II)} decreases. This captures another trade-off between the precision of data imputation and the discrepancy across treatment groups. 
It is also essential to highlight that if the local regression module can achieve more accurate estimation with more samples (e.g., local Gaussian Process) $b_\mathcal{A}(n)$ will converge to $0$, as proved in Section~\ref{sec:theory}.  

\section{Potential Outcome Imputation from Local Regression}
While Section~\ref{sec:general-theory} proposes a general framework relying on potential outcome imputation, we consider a specific instantiation --- imputation from local regression methods with simple function classes such as linear regression and GP.  
We opt for these relatively straightforward function classes motivated by the following three principles: 
\begin{itemize}
    \item \textit{Local Approximation}: Complex functions can be locally estimated with simple functions, e.g., continuous functions and complex distributions can be approximated by a linear function \citep{rudin1953principles} and Gaussian distributions~\citep{gaussian_approx}, respectively. 
    
    \item[$\star$] \textit{Sample Efficiency}: If a class of simple functions can estimate the true target function locally, then a class with less complexity will require fewer close neighbors for good approximations. 
 
    \item[$\dagger$] \textit{Practicality}: A simpler function class requires less hyper-parameter tuning, which remains one of the most significant challenges in causal inference applications.
\end{itemize}
We refer to these approaches as \emph{\underline{P}otential \underline{O}utcome via \underline{Lo}cal Regression} (POLO). 

\subsection{Algorithm}
\textbf{Overview.} POLO have two components. The first component is a classifier $g(x,x',t)$. For example, $g$ can be a threshold function based on the Euclidean distance in $\mathcal{X}$: $g(x,x',t) = \mathbbm{1}\{\|x-x'\| \le \epsilon_t\}$ where $\epsilon_t \ge 0$ is a pre-specified threshold. 
Recall that $D_F=\{x_i,y_i,t_i\}^n_{i=1}$ is the factual dataset.
For a given individual $x$, $g$ identifies $x$'s close neighbors $D_x$, that is, individuals in $D_F$ who are likely to exhibit similar outcomes when subjected to the same treatment $t$. The second component is a local regressor $\psi(x,D_x)$, which imputes the counterfactual outcome for $x$ after being fitted to its close neighbors $D_x$. 

For $t\in\{0,1\}$, we use $D^t \subset D_F$ to denote the factual observations in treatment group $t$, i.e, $D^t = \{(x_i,t_i,y_i) \in D_F | t_i=t\}$. Note that $D_F = D^0 \cup D^1$. 
For a given individual $x$ in $D_F$ within treatment group $t$ whose counterfactual outcome (i.e., potential outcome under treatment $1-t$) needs to be imputed, the classifier $g$ first selects from $D^{1-t}$ a subset of individuals\footnote{The terms "individual" and "indices of individuals" are used interchangeably.} who are close neighbors to $x$ denoted by $D_{x}$. 
Specifically, 
\begin{equation}\label{eqn:neighbor-selection}
    D_{x}=\{i \in [n]: t_i = 1-t, g(x,x_i,1-t)=1\}.
\end{equation}
Here, $D_{x}$ are individuals in treatment group $1-t$ who are likely to have similar potential outcomes to $x$ under treatment $1-t$. Subsequently, the non-parametric regressor $\psi$ utilizes the factual outcomes in $D_{x}$ to estimate the counterfactual outcome of $x$: $\widehat y_x = \psi(x,D_{x})$. Finally, the imputed outcome of $x$ is incorporated into the dataset, i.e., $D_A^{1-t} = D^{1-t}\cup\{(x,1-t,\widehat y_x)\}$. This process is repeated for every individuals in the factual dataset $D_F$. The augmented dataset $D_A = D^0_A\cup D^1_A$ will be used as the training dataset for CATE estimation models. 

However, as discussed in Section~\ref{sec:intro} and shown by Propsition~\ref{thm:bound}, the minimal error of the counterfactual imputation plays a crucial role in the success of data augmentation. \emph{To ensure the reliability of these imputations, we only perform imputations for individuals who possess a sufficient number of close neighbors}. 
In the worst case, no individuals will meet these criteria for imputation, resulting in \emph{no augmentation} of the dataset. To this end, POLO is a risk-averse method that ensures \emph{performance at least does not degrade}.

\textbf{Gaussian Process.} While there are many choices for the non-parametric local regressor $\psi$, we focus on GP in this work and next elaborate how a local GP $\psi(x,D_x)$ imputes the potential outcome. GP is a non-parametric method~\citep{seeger2004gaussian} that offers robust solutions to regression problems. It is fully characterized by a mean function $m:\mathcal{X} \rightarrow \mathbb{R}$ and a kernel $K:\mathcal{X}\times\mathcal{X}\rightarrow\mathbb{R}_0^+$ and it is denoted as $\mathcal{GP}(m, K)$. A GP is a random process $\phi(\mathcal{X})$ indexed by a set $\mathcal{X}$ such that any finite collection of these random variables follows a multivariate Gaussian distribution. Consider a finite index set of $n$ elements $\mathbf{x}_n = \{x_i\}_{i=1}^n$, then the $n$-dimensional random variable $\phi(\mathbf{x}_n) = \big[ \phi(x_1), \phi(x_2), \dots, \phi(x_n) \big]$ follows a Gaussian distribution:
\begin{equation}
\phi(\mathbf{x}_n) \sim \mathcal{N}\big(m(\mathbf{x}_n),K(\mathbf{x}_n,\mathbf{x}_n)\big)
\end{equation}
where $m(\mathbf{x}_n) = \big[m(x_1),\dots,m(x_n)\big]$ is the mean and the $K(\mathbf{x}_n,\mathbf{x}_n)$ is a $n \times n$ covariance matrix whose element on the $i$-th row and $j$-th column is defined as $K(\mathbf{x}_n,\mathbf{x}_n)_{ij} = K(x_i,x_j)$

Based on the principle of Local Approximation, if an individual $x$ in the factual dataset received treatment $t$, it is assumed that the potential outcome of the individual $x$ and those of its close neighbors (i.e., the individuals within $D_{x}$) under treatment $1-t$ follow a GP. \emph{Note that by construction of $D_{x}$, the potential outcome of $D_{x}$ under treatment $1-t$ is the observed factual outcome.} Thus, after constructing $D_{x}$, , the counterfactual outcome for $x$ is imputed as 
\begin{equation}
\widehat y^{1-t}_x = \psi(x,D_x)= \mathbb{E}[y^{1-t}|x, \{y_i\}_{ i \in D_{x}}].
\end{equation}
Under the assumption of GP, $\widehat y^{1-t}_x$ has a closed-form solution. Let $\sigma(i)$ denote the $i$-th smallest index in $D_{x}$ and $K$ denote the kernel (covariance function) of GP. Then 
\begin{equation}
\widehat y^{1-t}_x = \mathbf{K}^{\top}_x\mathbf{K}_{xx}\mathbf{y},
\end{equation}
where 
\begin{align*}
    &\mathbf{K}_x = [K(x,x_{\sigma(1)}), \dots, K(x,x_{\sigma(|D_{x}|)})],\\
    &\mathbf{y}=[y_{\sigma(1)},\dots,y_{\sigma(|D_{x}|)}],
\end{align*}
and $\mathbf{K}_{xx}$ is a $|D_{x}| \times |D_{x}|$ matrix whose element on the $i$-th row and $j$-column is $K(x_{\sigma(i)},x_{\sigma(j)})$. Finally, we append the tuple $(x,1-t,\widehat y^{1-t}_x)$ into the factual dataset to augment the training data.
 
\subsection{Theoretical Analysis}
\label{sec:theory}
Here we study the theoretical properties of the proposed method. Specifically, we present two main theoretical results regarding the efficacy of POLO: \textit{(i)} Our first result characterizes the asymptotic behavior, demonstrating that the distribution of the augmented dataset converges towards the distribution of randomized controlled trials (RCTs); \textit{(ii)} Our second result establishes finite-sample regret guarantees for POLO with GP, establishing that its imputation error can be effectively controlled. 

\textbf{Notation.} We use $\mathcal{O}$ to denote the standard big-O notation for asymptotic behaviors and $\Tilde{\mathcal{O}}$ to denote the big-O notation ignoring all the $\log$ terms. $||\cdot||_2$ denote the Euclidean norm. For any two values $a,b \in \mathbb{R}$, we let $a\vee b = \max(a,b)$ and $a \wedge b = \min(a,b)$.

Let $n_1$ and $n_0$ denote the number of individuals in the treatment and control groups, respectively. 
We define $u = \mathbb{P}(T=1)$ as the probability of an individual being in the treatment group, and let $\psi = \frac{u}{1-u}$. Moreover, let $X^t \stackrel{d}{=} (X | T= t)$ and $\gamma = \mathbb{P}(\rho(X^1,X^0)\geq \epsilon) \in (0,1)$ where $\rho(\cdot,\cdot)$ denotes the distance metric between features (e.g. the contrastive learning distance) of the treatment and control groups, and $\epsilon$ is a pre-defined threshold. POLO defines augmentation regions for the control and the treatment groups denoted as $\mathcal{R}_n^0$ and $\mathcal{R}_n^1$, respectively. For $t\in \{0,1\}$, we have that, 
$$
\begin{aligned}
\mathcal{R}^{1-t}_n = \{ x_j| j \in [n], t_j = 1-t, \;\;
&\exists i_1< \ldots < i_k  \in [n], \\ &t_{i_k} = t, \rho(x_{i_k},x)\leq \epsilon\}
\end{aligned}
$$
where $k$ is a positive constant denoting the number of neighbors.
We remark that for any given individual $x$, the likelihood of encountering neighboring data points is sufficiently high as the number of data points grows, which facilitates reliable imputation of its counterfactual outcome. This concept is formally captured in the following Proposition.
\begin{proposition}
\label{thm:neighbors}    
Let $j\in \{0,1\}$ and $\alpha_{n_j} = \mathbb{P}(X^j\in \mathcal{R}^j_n)$, be the probability of finding $k$ close neighbors for $X$ in the alternative group. Then
$$
\begin{aligned}
\alpha_{n_j} & \geq 1 -  n_j^k\gamma^{n_j}\sum_{i=0}^{k-1} \frac{1}{i!} \left(\frac{1-\gamma}{\gamma}\right)^i. 
\end{aligned}
$$
Hence, $1 - \alpha_{n_j} = \mathcal{O}(n_j^k \gamma^{n_j})$.
\end{proposition} 
This implies that with a sufficient number of samples, the probability of not encountering data points in close proximity to any given point $x$ becomes very small as the exponential decay $\gamma^{n_j}$ for $\gamma < 1$ dominates. Hence, positivity ensures that within the big data regime, we will encounter densely populated regions, enabling us to approximate counterfactual distributions locally. This facilitates the application of our methods. Next, we prove that \emph{ our data augmentation method converges to an RCT}. 
\begin{proposition}[Convergence to RCT]
\label{prop:conv_to_rct}
Let $p^1_{\AF}$ and $p^0_{\AF}$ be the distributions of the treatment and control groups, respectively, after data augmentation. The following upper bound holds:
\begin{equation}
\begin{aligned}
    V(p^1_{\AF},p^0_{\AF}) \leq 
    & \frac{1-\alpha_{n_0}}{1+ z^{-1}\alpha_{n_1}} + \frac{z \alpha_{n_0}\left(1-\alpha_{n_1}\right)}{1+ \alpha_{n_0} z} \\
    & + \frac{\left|1 - \alpha_{n_0}\alpha_{n_1}\right|}{\left(1+z^{-1} \alpha_{n_1}\right)\left(1 + \alpha_{n_0} z\right)},
\end{aligned}
\end{equation}
as $n_1$ and $n_0$ converge to infinity, we have that $\alpha_{n_1}$ and $\alpha_{n_0}$ converge to $1$ with the rates proved in Proposition \ref{thm:neighbors}. Hence, the right-hand side of the bound converges to $0$.
\end{proposition}

Now, we establish the finite-sample guarantees for the GP local regressor. From a functional perspective and by Mercer's decomposition~\cite{seeger2004gaussian}, a GP can be considered as a distribution on a function class $\cF\subset \{f:\mathcal{X}\rightarrow \mathbb{R}\}$, and $\cF$ is fully specified by GP's kernel $K:\mathcal{X}\times\mathcal{X}\rightarrow\mathbb{R}^+_0$. 
\begin{assumption}
The potential outcome functions belong to this function space $\cF$, i.e, 
$$\{f(X,T=t):\mathcal{X}\rightarrow \mathbb{R}\;|\;t \in \{0,1\}\} \subset \cF.$$
\end{assumption}
\begin{remark}
This assumption is not unreasonable because, by choosing a radial basis function (RBF) kernel, the function class $\cF$ is assumed to contain all continuous functions which commonly include the potential outcomes functions.
\end{remark}

\begin{definition}[Lipschitz Constant for GP Kernel]
Assume that $K:\mathcal{X} \times \mathcal{X} \rightarrow \mathbb{R}^+$ is the kernel of a Gaussian Process (GP). Its Lipschitz constant $L_K$ is defined as:
\begin{equation}
    L_K(\mathcal{X}) = \sup_{x,x' \in \mathcal{X}}||\nabla_{x}K(x,x')||_2.
\end{equation}
\end{definition}
\begin{remark}
For well-known kernels, such as RBF, $L_K$ is known and finite if $\mathcal{X}$ is a bounded space. Moreover, $L_K(\mathcal{X})$ is an increasing function of the input space $\mathcal{X}$, i.e., if $\mathcal{X} \subset \mathcal{X}'$, $L_K(\mathcal{X}) \le L_K(\mathcal{X}')$.
\end{remark}
In this part, we assume that the data generation process is as follows,
$$
Y = f(X,T) + \eta,
$$
where $\eta \sim \mathcal{N}(0,\sigma^2)$ and it is independent of $(X,T)$. We also assume that $\mathcal{X}\subset \mathbb{R}^d$ and the potential outcomes function $f$ are bounded, and $f$ is $L_f$-Lipschitz continuous.

Assume there is a dataset $\{x_i,y_i\}_{i=1}^{\Bar{n}_t}$ available with $\Bar{n}_t$ samples for the imputation of potential outcomes under treatment $t$. Let $\sigma_{\Bar{n}_t}(x) = K(x,x) - K(x,\mathbf{x}_{\Bar{n}_t})(K(\mathbf{x}_{\Bar{n}_t},\mathbf{x}_{\Bar{n}_t})+\sigma^2\cdot I_{\Bar{n}_t})^{-1}K(\mathbf{x}_{\Bar{n}_t},x)$ be the posterior standard deviation of GP at $x$ where 
\begin{align*}
    &K(x,\mathbf{x}_{\Bar{n}_t}) \in \mathbb{R}^{1\times \Bar{n}_t} = [K(x,x_1),\dots,K(x,x_{\Bar{n}_t})], \\
    & K(\mathbf{x}_{\Bar{n}_t},x) \in \mathbb{R}^{ \Bar{n}_t \times 1} = [K(x,x_1),\dots,K(x,x_{\Bar{n}_t})]^{\top},\\
    &K(\mathbf{x}_{\Bar{n}_t},\mathbf{x}_{\Bar{n}_t}) \in \mathbb{R}^{\Bar{n}_t\times \Bar{n}_t}, K(\mathbf{x}_{\Bar{n}_t},\mathbf{x}_{\Bar{n}_t})_{ij} = K(x_i,x_j).
\end{align*}
Let $\Tilde{f}_{\Bar{n}_t}(x,t)$ denote the GP-based imputation function given the dataset $\{x_i,y_i\}_{i=1}^{\Bar{n}_t} \subset D^t$, i.e.,  $\Tilde{f}_{\Bar{n}_t}(x,t)=K(x,\mathbf{x}_{\Bar{n}_t})(K(\mathbf{x}_{\Bar{n}_t},\mathbf{x}_{\Bar{n}_t})+\sigma^2\cdot I_{\Bar{n}_t})^{-1}\mathbf{y}_{\Bar{n}_t}$ where $\mathbf{y}_{\Bar{n}_t}=[y_1,\dots,y_{\Bar{n}_t}]^{\top}$. Note $\Tilde{f}_{\Bar{n}_t}$ is a random function, varying with the observed dataset. The following result addresses its generalization error.

\begin{proposition}
\label{prop:finite_sample}
For $t\in \{0,1\}$, let $L^t_K = L_K(\mathcal{R}^{1-t}_n)$ denote the Lipschitz constant of the kernel $K$ in region $\mathcal{R}^{1-t}_n$ and let $U^t_K = \sup_{x,x'\in\mathcal{R}^{1-t}_n}K(x,x')$ denote the "width" of region $\mathcal{R}^{1-t}_n$. Then for $t\in \{0,1\}$, with probability at least $1-\delta$ where $\delta \in (0,1)$, 
\begin{equation}\label{eqn:fs_error_bound}
\begin{aligned}
&\sup_{x \in \mathcal{R}^t_n}|f(x,t)-\Tilde{f}_{\Bar{n}_t}(x,t)| \\ 
    & \le \left(\sqrt{\frac{C^t_K}{\Bar{n}_t}}+\sqrt{\sup_{x \in \mathcal{R}^{1-t}_n}\sigma_{\Bar{n}_t}(x)}\right)\sqrt{d\log\left(\frac{1+\Bar{n}_t^2 r_t}{\delta}\right)} \\ 
     & + \mathcal{O}(1/\Bar{n}_t),
\end{aligned}
\end{equation}
where \(C^t_K = 4L^t_K + 2U^t_K/\sigma^2\)
is only related to the kernel $K$ and unrelated to the number of sample $\Bar{n}_t$; $r_t = \max_{x,x' \in \mathcal{R}^{1-t}_n}||x-x'||$ is the radius of the augmentation region. 
\end{proposition}

\begin{remark}
To control the error, observe that $\sigma_{\Bar{n}_t}(x)$ is a decreasing function of $\Bar{n}_t$ while $\sup_{x\in\mathcal{R}^t_n}\sigma_{\Bar{n}_t}(x)$ is an increasing function of the size of the augmentation region $\mathcal{R}^t_n$. Therefore, the data augmentation region must be chosen carefully such that it can be controlled and diminishes asymptotically to zero. 
\end{remark}

Proposition~\ref{prop:finite_sample} is a sufficient condition for controlling term (III) in Proposition~\ref{thm:bound} due to the fact that 
\begin{align*}
    &\mathbb{E}_{X,T \sim  q}\big[\|f(X,T) - \Tilde{f}_n(X,T)\| \big] \\
    &\le \sup_{t \in \{0,1\}}\sup_{X \in \mathcal{R}^{1-t}_n}|f(X,t)-\Tilde{f}_n(X,t)|,
\end{align*} 
and the following result:
\begin{proposition}\label{prop:finite-error}
With probability at least $1-\delta$ where $\delta \in (0,1)$, 
\begin{equation}
\begin{aligned}
&\sup_{t\in\{0,1\}}\sup_{x \in \mathcal{R}^{1-t}_n}|f(x,t)-\Tilde{f}_{\Bar{n}_t}(x,t)| \\ 
    & \le \sqrt{d}\Tilde{\mathcal{O}}\left(\sqrt{\frac{C^0_K\vee C^1_K}{\Bar{n}_0\wedge\Bar{n}_1}}+\sqrt{\sup_{x \in \mathcal{R}^1_n}\sigma_{\Bar{n}_0}(x)\vee \sup_{x \in \mathcal{R}^0_n}\sigma_{\Bar{n}_1}(x)}\right)\\ 
     & + \mathcal{O}(1/(\Bar{n}_0 \wedge \Bar{n}_1)),
\end{aligned}
\end{equation}
with all the constants defined in Proposition~\ref{prop:finite_sample}.
\end{proposition}

\begin{remark}
As proved in Proposition~\ref{thm:neighbors}, for any number of required neighbors $\Bar{n}_t$, the probability of a fixed $x$ not having more than $\Bar{n}_t$ neighbors \emph{decreases approximately exponentially} to $0$. This implies that \emph{the imputation error with local GP can be effectively controlled}. As the right-hand side of Equation \eqref{eqn:fs_error_bound} converges to $0$ as $n \rightarrow +\infty$, this demonstrates that asymptotically POLO with local GP will lead to unbiased learning of CATE. 
\end{remark}
\begin{remark}
POLO carefully selects the subset of individuals for counterfactual outcome imputation so that 
\begin{itemize}
    \item By only selecting individuals with a sufficient amount of close neighbors, $\mathcal{R}^{1-t}_n$ is reduced. $\sigma_{\Bar{n}_t}(x)$ is also decreased as the posterior of GP has less variance with more close neighbors. Hence, $\sup_{x \in \mathcal{R}^{1-t}_n}\sigma_{\Bar{n}_t}(x)$ is significantly reduced, leading to reduced error. 
    \item Smaller $\mathcal{R}^{1-t}_n$ decrease both $L^t_K$ and $U^t_K$, further decreasing the error.  
\end{itemize}
\end{remark}
\begin{remark}
The effect of the complexity of the true causal function $f$ is captured in $C^t_K$ and $\sigma_{\Bar{n}_t}(x)$: a simpler $f$ implies smoother kernel thus smaller $C^t_K$ and faster decrease of $\sigma_{\Bar{n}_t}(x)$. 
\end{remark}

\section{COCOA: Contrastive Counterfactual Augmentation}
\begin{figure*}[t]
    \centering
    \begin{minipage}{.40\textwidth}
        \centering
        \includegraphics[width=0.9\linewidth]{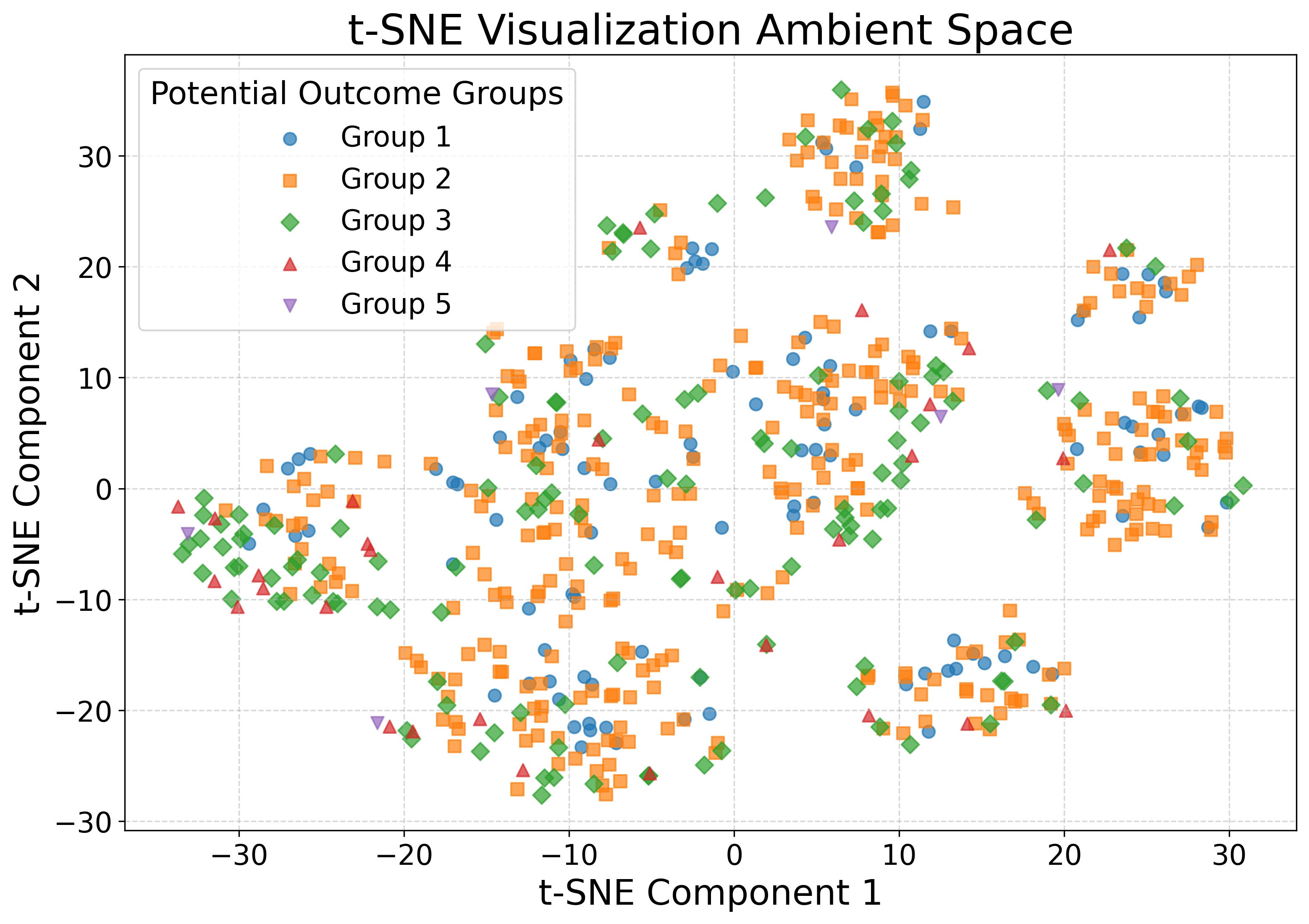}
       
        \label{fig:tsne-ambient}
    \end{minipage}
    \begin{minipage}{.40\textwidth}
        \centering
        \includegraphics[width=0.9\linewidth]{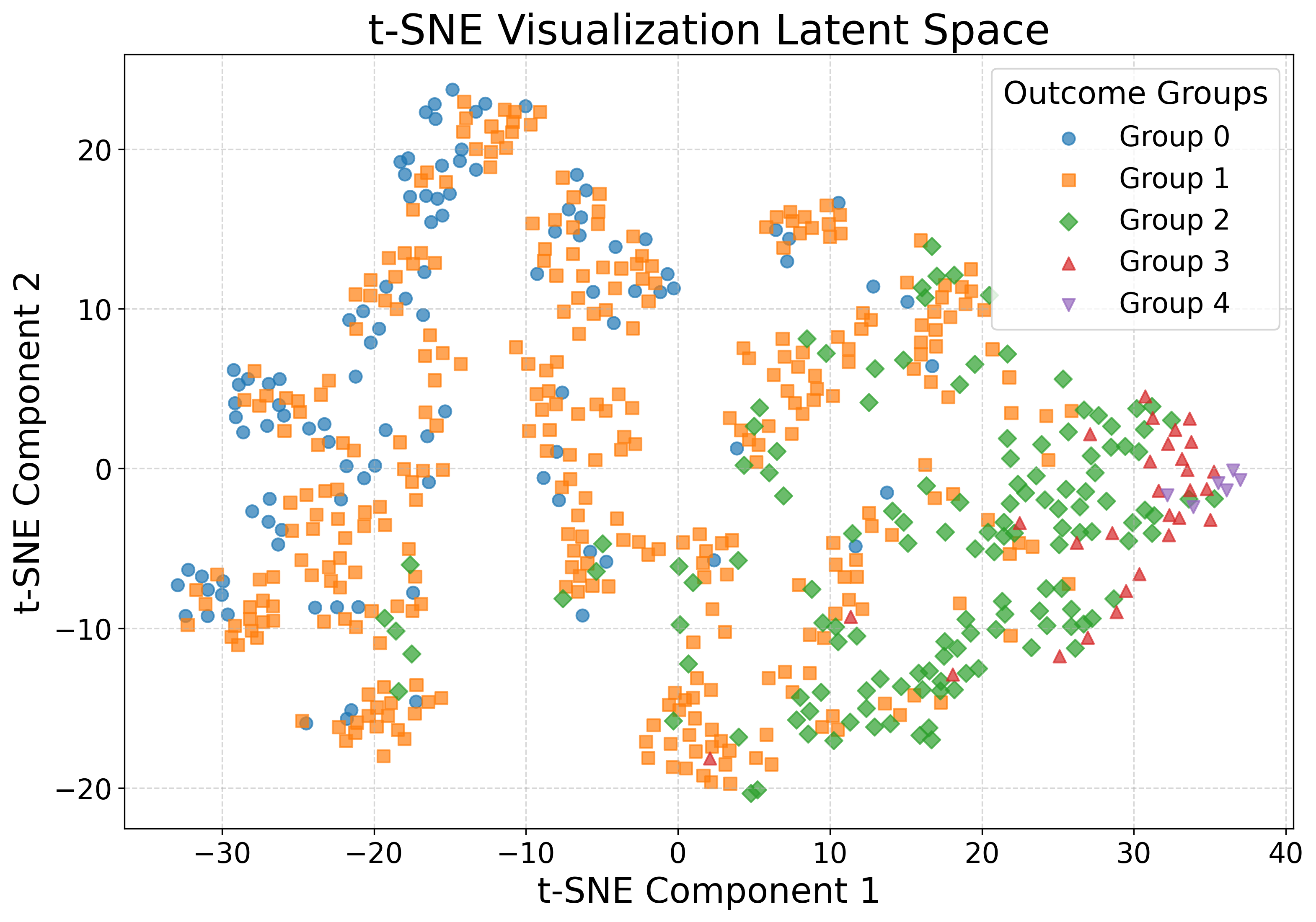}
        \label{fig:tsne-latent}
    \end{minipage}
    
    \caption{t-SNE visualization of IHDP features and potential outcome $Y_0$ in the ambient space (left) and the latent space (right) learned by contrastive learning. Groups are defined by dividing the potential outcome $Y_0$ values into five equal intervals from smallest to largest, with each individual labeled based on the value of its potential outcome.}
    \label{fig:tsne}
\end{figure*}

To further boost the performance of local regression methods, we propose to employ contrastive learning for an enhanced classifier $g$ to select neighbors. 
Our motivation is that the success of local regression methods depend on the strength of local correlation. In other words, neighbors selected by the classifier $g$ should exhibit similar outcomes when subjected to the same treatment. \emph{This property, however, may not hold in the original feature space under an arbitrarily selected metric $g$}. To this end, we propose to use contrastive learning to learn a classifier $g_\theta$ with parameters $\theta$ and a latent representation space which verify this desired property. 

Contrastive (representation) learning methods~\citep{wu2018unsupervised, bojanowski2017unsupervised, dosovitskiy2014discriminative, caron2020unsupervised, he2020momentum, chen2020simple, trinh2019selfie, misra2020self, tian2020makes} are based on the principle that similar individuals should be associated with closely related representations within an embedding space. This is achieved by training models to perform an auxiliary task: predicting whether two individuals are similar or dissimilar. 
In the context of CATE estimation, we consider two individuals as similar individuals if they show similar outcomes under the same treatment. 
Figure~\ref{fig:tsne} illustrates this: \emph{with contrastive learning, the features of the individuals with similar potential outcomes are more clustered in the representation space}, demonstrating the smoothness property that enables reliable local imputation.  

\textbf{Module Training.} The degree of similarity between outcomes is measured using a particular metric in the potential outcome space $\mathcal{Y}$. In our case, we employ the Euclidean norm in $\mathbb{R}^1$ for this purpose. With this perspective, given the factual (original) dataset $D_\F =\{(x_i,t_i,y_i) \}_{i=1}^n$, we construct a \textbf{\emph{positive dataset}} $D_{\epsilon}^+$ that includes pairs of similar individuals. Specifically, we define $D_{\epsilon}^{+} =\{(x_i, x_j, t_i): i,j \in [n], i \neq j, t_i = t_j, \|y_i-y_j\|\leq \epsilon\}$ where $\epsilon$ is user-defined sensitivity parameter specifying the desired level of precision. We also create a \textbf{\emph{negative dataset}} $D^{-} =\{(x_i, x_j, t_i): i,j \in [n], i \neq j, t_i = t_j, \|y_i-y_j\| > \epsilon\}$ containing pairs of individuals deemed dissimilar. Let $\ell: \{0,1\} \times \{0,1\} \rightarrow \mathbb{R}$ be any loss function for classification task . We learn a parametric classifier (neural network) $g_{\theta}: \mathcal{X} \times \mathcal{X} \rightarrow \{0,1\}$ with parameter $\theta$ by optimizing the following objective function:
\begin{equation*}
    \min_{\theta} \sum_{(x,x',t) \in D_{\epsilon}^+} \ell(g_{\theta}(x,x',t),1) +  \sum_{(x,x',t) \in D_{\epsilon}^-} \ell(g_{\theta}(x,x',t),0)
\end{equation*}

\textbf{Neighbor Identification.} For a given individual $x$ in $D_F$ within treatment group $t$, we utilize trained $g_{\theta}$ to identify its close neighbors $D_x$ for counterfactual imputation following Equation~\eqref{eqn:neighbor-selection} with $g_\theta$ as the classifier. 
We term this proposed approach with a learned classifier as \emph{\underline{CO}ntrastive \underline{CO}unterfactual \underline{A}ugmentation} (COCOA).
After selecting the neighbors, the potential outcome imputation of COCOA is the same as POLO. See Algorithm~\ref{alg:cocoa} in the Appendix for the Pseudocode of COCOA.

\begin{figure*}[ht]
    \centering
    \begin{minipage}{.32\textwidth}
        \centering
        \includegraphics[width=\linewidth]{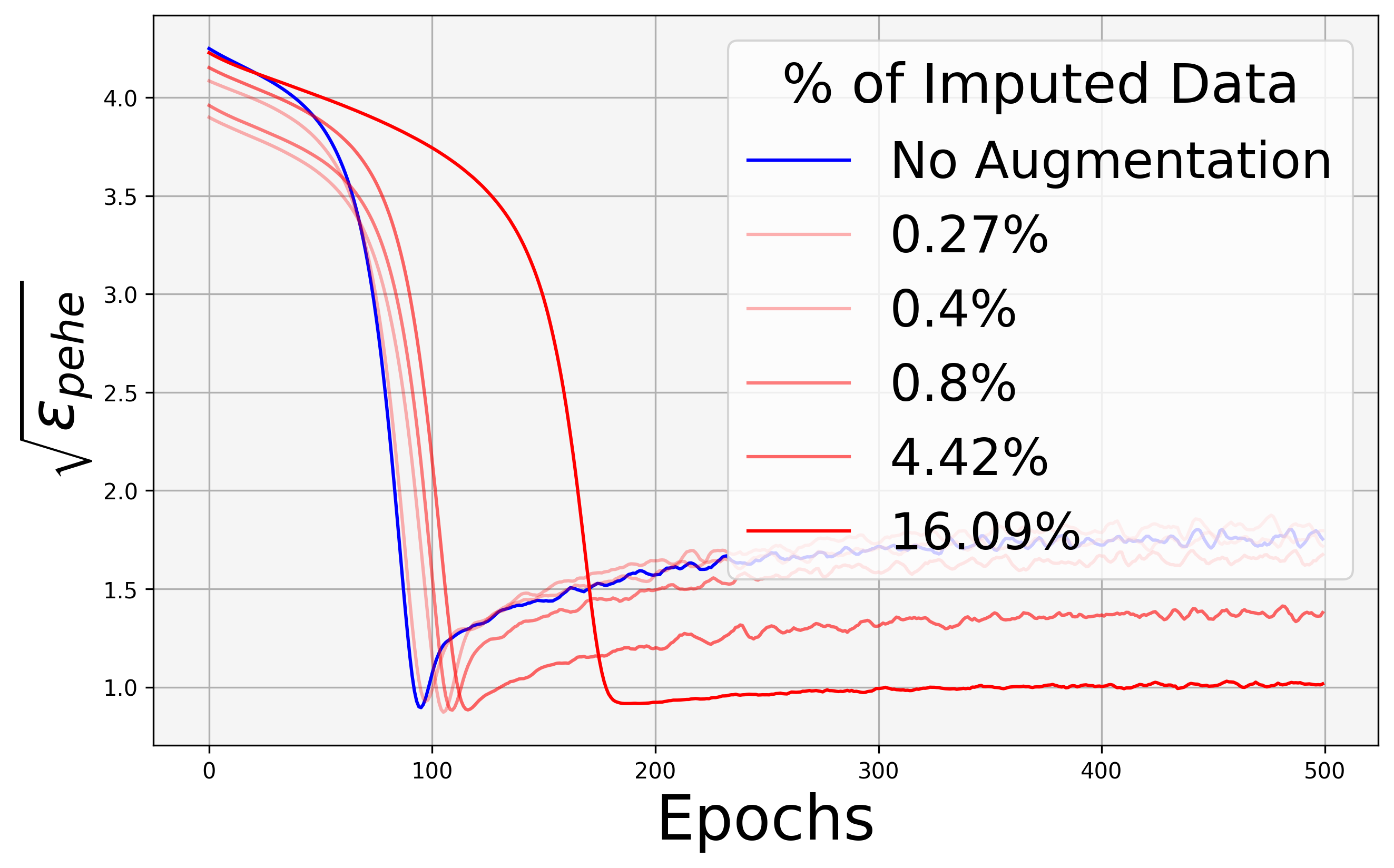}
       
        \label{fig:ihdp_tarnet_ate}
    \end{minipage}
    \hfill
    \begin{minipage}{.32\textwidth}
        \centering
        \includegraphics[width=\linewidth]{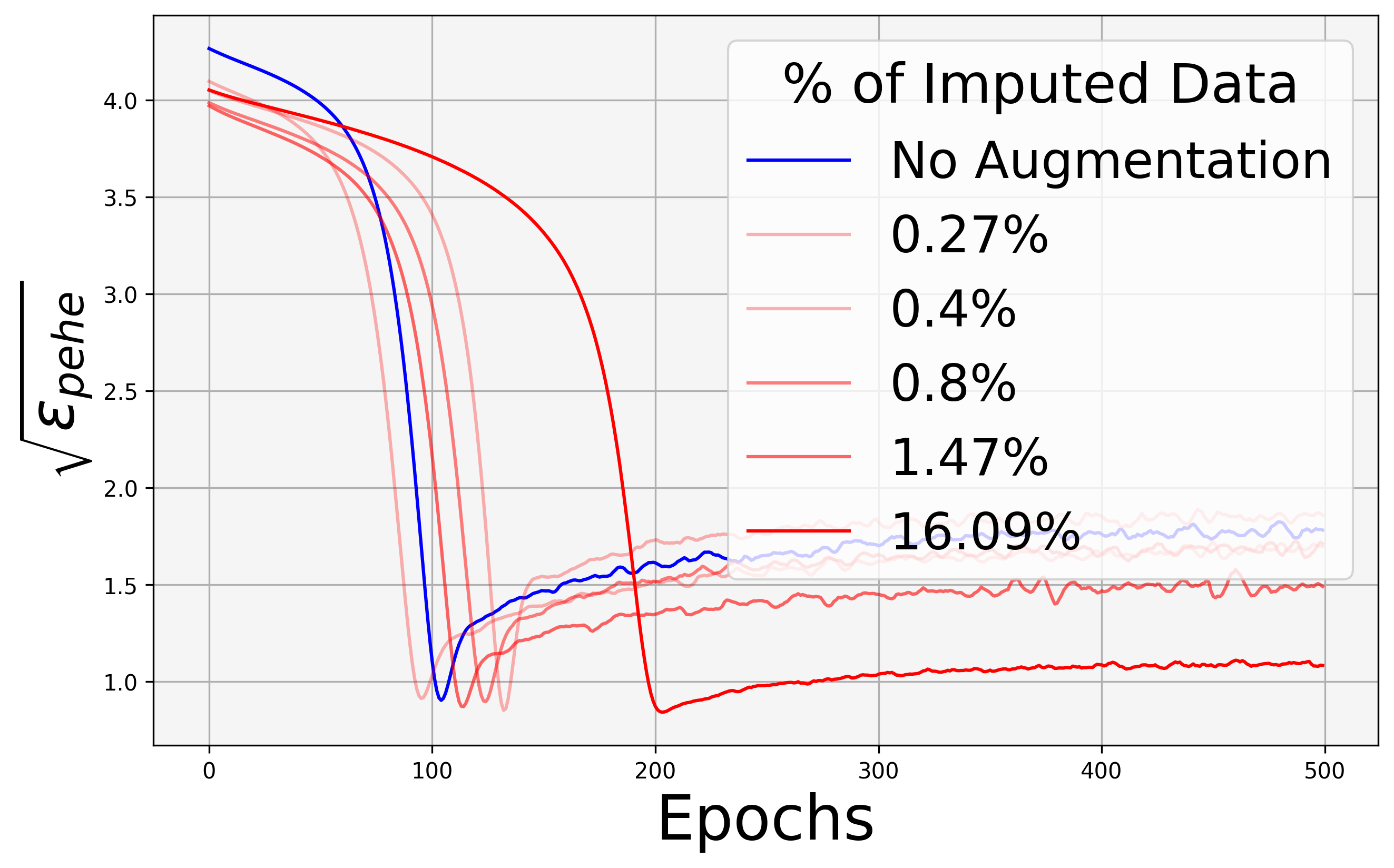}
      
        \label{fig:ihdp_CFR-Wass_linear}
    \end{minipage}
    \hfill
    \begin{minipage}{.32\textwidth}
        \centering
        \includegraphics[width=\linewidth]{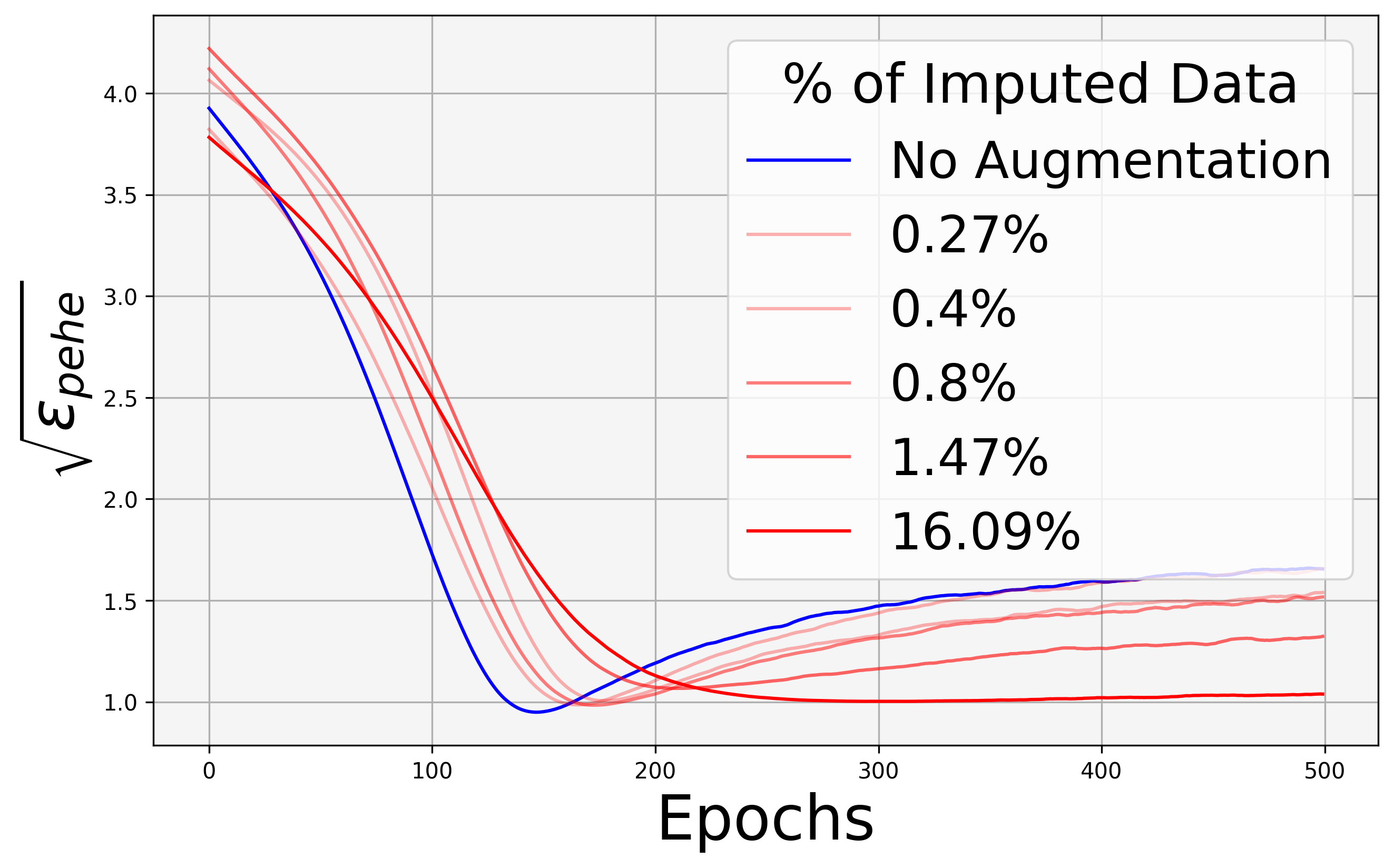}
       
        \label{fig:ihdp_tlearner_linear}
    \end{minipage}
    
    \caption{Effects of COCOA on \emph{preventing overfitting}. From left to right: IHDP with TARNet, CFR-Wass, and T-learner. X-axis has the training epochs; Y-axis shows the performance measure (not accessible in practice). The performance of the models trained without data augmentation decreases as the epoch number increases beyond the optimal stopping epoch (blue curves), overfitting to the factual distribution. In contrast, \emph{the error of the models trained with the augmented dataset barely increase} (red curves), demonstrating the effect of COCOA on preventing overfitting.}
    \label{fig:performance}
\end{figure*}
\section{Empirical Studies}\label{sec:experiments}
While the theoretical results in Sections~\ref{sec:general-theory} and~\ref{sec:theory} provide large-sample guarantees, here we empirically demonstrate that COCOA works for practical scenarios where the number of samples is only moderate. In particular, we observe that COCOA consistently improves the CATE estimation performance across state-of-the-art CATE models. More importantly, we observe that \emph{COCOA prevents CATE models from overfitting to the factual data} during training. We believe this property is particularly important in the setting of CATE estimation because the true performance of models cannot be validated in practice, making robustness to overfitting an especially desirable property. 

\textbf{Evaluation Setup.} We test our proposed methods on various benchmark datasets: the IHDP dataset ~\citep{ramey1992infant,hill2011}, the News dataset \citep{johanson,newman2008gender}, and the Twins dataset \citep{louizos2017causal}. Additionally, we apply our methods to two synthetic datasets: one with linear functions for potential outcomes and the other with non-linear functions, we include these results in Appendix \ref{appendix:synthetic}. A detailed description of these datasets is provided in Appendix \ref{sec:datasets}. To estimate the variance of our method, we randomly divide each of these datasets into a train (70\%) dataset and a test (30\%) dataset with varying seeds. Moreover, we demonstrate the efficacy of our methods across a variety of CATE estimation models.

\begin{table*}[ht]
\caption{$\sqrt{\varepsilon_{\PEHE}}$ across models, with COCOA augmentation (w/ aug.) and without augmentation (w/o aug.) on Twins, News, and IHDP datasets. Lower $\sqrt{\varepsilon_{\PEHE}}$ corresponds to better performance.}

\label{performance-table}
\begin{center}
\begin{tabular}{l|cc|cc|cc}
\hline
& \multicolumn{2}{c|}{\textbf{\texttt{Twins}}} & \multicolumn{2}{c|}{\textbf{\texttt{News}}} & \multicolumn{2}{c}{\textbf{\texttt{IHDP}}} \\
\textbf{Model} & \textbf{w/o aug.} & \textbf{w/ aug.} & \textbf{w/o aug.} & \textbf{w/ aug.} & \textbf{w/o aug.} & \textbf{w/ aug.} \\
\hline
TARNet & $0.59 \scriptstyle \pm .29$ & $0.57 \scriptstyle \pm .32$ & $5.34 \scriptstyle \pm .34$ & $5.31 \scriptstyle \pm .17$ & $0.92 \scriptstyle \pm .01$ & $0.87 \scriptstyle \pm .01$ \\
CFR-Wass & $0.50 \scriptstyle \pm .13$ & $0.14 \scriptstyle \pm .10$ & $3.51 \scriptstyle \pm .08$ & $3.47 \scriptstyle \pm .09$ & $0.85 \scriptstyle \pm .01$ & $0.83 \scriptstyle \pm .01$ \\
CFR-MMD & $0.19 \scriptstyle \pm .09$ & $0.18 \scriptstyle \pm .12$ & $5.05 \scriptstyle \pm .12$ & $4.92 \scriptstyle \pm .10$ & $0.87 \scriptstyle \pm .01$ & $0.85 \scriptstyle \pm .01$ \\
T-Learner & $0.11 \scriptstyle \pm .03$ & $0.10 \scriptstyle \pm .03$ & $4.79 \scriptstyle \pm .17$ & $4.73 \scriptstyle \pm .18$ & $2.03 \scriptstyle \pm .08$ & $1.69 \scriptstyle \pm .03$ \\
S-Learner & $0.90 \scriptstyle \pm .02$ & $0.81 \scriptstyle \pm .06$ & $3.83 \scriptstyle \pm .06$ & $3.80 \scriptstyle \pm .06$ & $1.85 \scriptstyle \pm .12$ & $0.86 \scriptstyle \pm .01$ \\
BART & $0.57 \scriptstyle \pm .08$ & $0.56 \scriptstyle \pm .08$ & $3.61 \scriptstyle \pm .02$ & $3.55 \scriptstyle \pm .00$ & $0.67 \scriptstyle \pm .00$ & $0.67 \scriptstyle \pm .00$ \\
CF & $0.57 \scriptstyle \pm .08$ & $0.51 \scriptstyle \pm .11$ & $3.58 \scriptstyle \pm .01$ & $3.56 \scriptstyle \pm .01$ & $0.72 \scriptstyle \pm .01$ & $0.63 \scriptstyle \pm .01$ \\
\hline
\end{tabular}
\end{center}
\end{table*}

\textbf{Performance Improvements.} 
Table~\ref{performance-table} summarizes the experimental results verifying COCOA's effect on \emph{consistently improving} the performance of various CATE estimation models. We observe significant improvements for certain models over specific benchmarks (e.g., Twins with CFR-Wass, IHDP with CD), lead to new state-of-the-art performance. Moreover, even in cases where the improvement is marginal, we note substantial enhancements in models' robustness to overfitting the factual distribution, as described in the following paragraph.

\textbf{Robustness Improvements.} In the context of CATE estimation, it is essential to notice the absence of a validation dataset due to the unavailability of the counterfactual outcomes. This poses a challenge in preventing the models from overfitting to the factual distribution. Our proposed data augmentation technique effectively addresses this challenge, as illustrated in Figure~\ref{fig:performance}, resulting in a significant enhancement of the overall effectiveness of various CATE estimation models.
Notably, counterfactual balancing frameworks \citep{johanson,shalit} significantly benefit from COCOA. This improvement can be attributed to the fact that data augmentation in dense regions helps narrow the discrepancy between the distributions of the control and the treatment groups. 
We include more results in Appendix \ref{appendix:overfitting}.

 \textbf{Ablation Studies.} We conducted ablation studies to assess the impact of the embedding ball size ($R$) and the number of neighbors ($k$) on the performance of CATE estimation models trained on the IHDP dataset. Detailed results are in Appendix~\ref{appendix:neighbors}. These experiments illustrate the trade-off between the quality of imputation and the discrepancy of the treatment groups. \emph{COCOA is robust to the choice of these hyperparameters}, with a wide range of values leading to performance improvements. Table \ref{table:similarity} compares our contrastive learning method to propensity scores and Euclidean distance as similarity measures. \emph{The significantly improved performance of contrastive learning over other close neighbor classifiers proves its efficacy}.
 Moreover, Appendix~\ref{appendix:ate} includes ATE estimation results, and Appendix~\ref{appendix:lr} covers ablations on GP and local linear regression kernels.
 
\begin{table}
\caption{$\sqrt{\varepsilon_{\PEHE}}$ across different similarity measures: Contrastive Learning (CL), propensity scores (PS), and Euclidean distance (ED), using CFR-Wass across IHDP, News, and Twins datasets.}
\label{table:similarity}
\centering
\begin{tabular}{l|c|c|c}
\hline
       & \textbf{ED} & \textbf{PS} & \textbf{CL} \\
\hline
IHDP   & $3.32 \scriptstyle \pm 1.13$ & $3.94 \scriptstyle \pm 0.21$ & $\textbf{0.83} \scriptstyle \pm 0.01$ \\
News   & $4.98 \scriptstyle \pm 0.10$ & $4.82 \scriptstyle \pm 0.11$ & $\textbf{3.47} \scriptstyle \pm 0.09$ \\
Twins  & $0.23 \scriptstyle \pm 0.10$ & $0.48 \scriptstyle \pm 0.09$ & $\textbf{0.14} \scriptstyle \pm 0.10$ \\
\hline
\end{tabular}
\end{table}

\section{Conclusion}
We present a data augmentation method for CATE estimation based on potential outcome imputation and local regression. We propose a generalization bound motivating our approach. We provide both asymptotic and finite sample guarantees to support the proposed method. Notably, we enhance both the performance and robustness of various CATE estimation models across various datasets. 

\subsubsection*{Acknowledgments}
{Ahmed Aloui, Juncheng Dong, and Vahid Tarokh were supported in part by the National Science Foundation (NSF) under the National AI Institute for Edge Computing Leveraging Next Generation Wireless Networks Grant \#  2112562.}

\begin{figure}[h]
    \centering
    \includegraphics[width=0.5\linewidth]{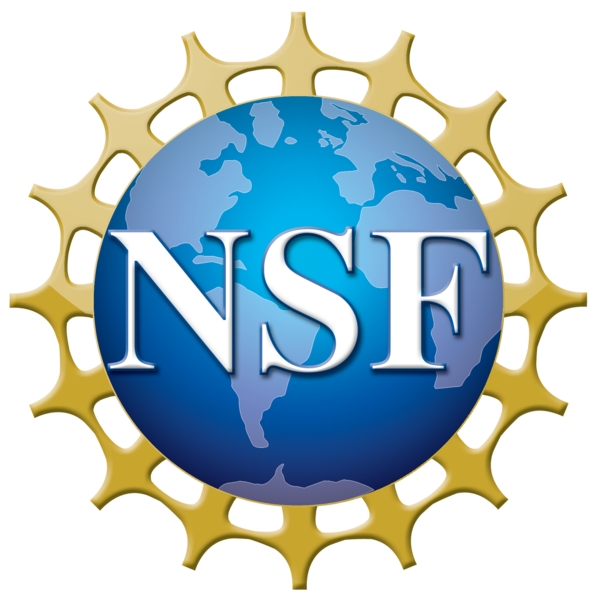}
\end{figure}

\bibliography{uai2025-template}

\newpage

\onecolumn

\title{CATE Estimation With Potential Outcome Imputation From Local Regression\\(Supplementary Material)}
\maketitle

\appendix

\section{Schematic Illustration of COCOA}  
Figure~\ref{fig:pipeline} provides an overview of the COCOA framework, where similarity learning and local imputations are leveraged to augment observational data, reducing statistical discrepancies while minimizing imputation error. This augmented dataset is then used to train CATE estimation models, improving accuracy and robustness.  

Additionally, Figure~\ref{fig:tradeoff} illustrates the trade-off between statistical discrepancy and imputation error as the augmentation level varies. This observation motivates the COCOA approach by highlighting the balance required between data alignment and the reliability of imputed counterfactuals.

\begin{figure}[h]
    \centering
    \begin{subfigure}{0.62\textwidth}  
        \includegraphics[width=\textwidth]{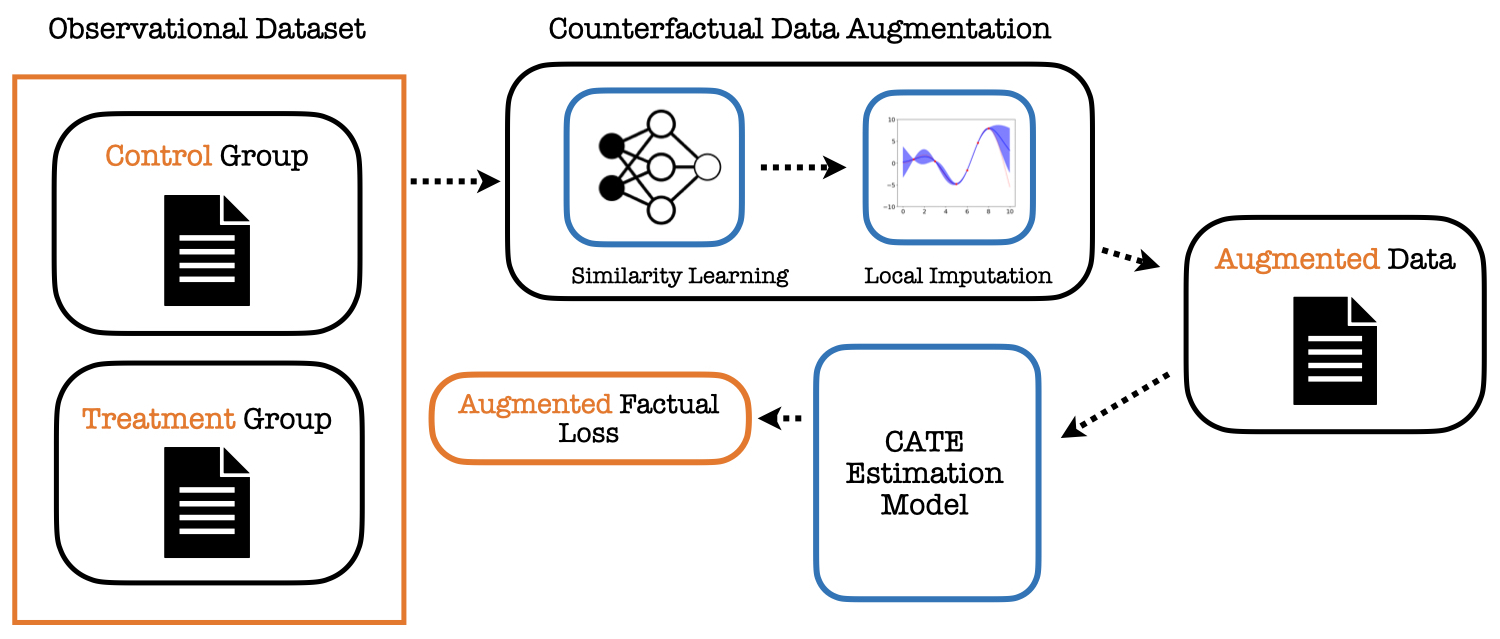}
        \caption{Similarity learning is used to select a subset of individuals, followed by reliable local imputations to generate their counterfactuals. These imputations augment the original dataset, reducing the statistical discrepancy between treatment groups while minimizing imputation error. The augmented data is then used to train off-the-shelf CATE estimation models, improving their accuracy and robustness.}
        \label{fig:pipeline}
    \end{subfigure}
    \hfill
    \begin{subfigure}{0.32\textwidth}  
        \centering  
        \includegraphics[width=0.9\textwidth]{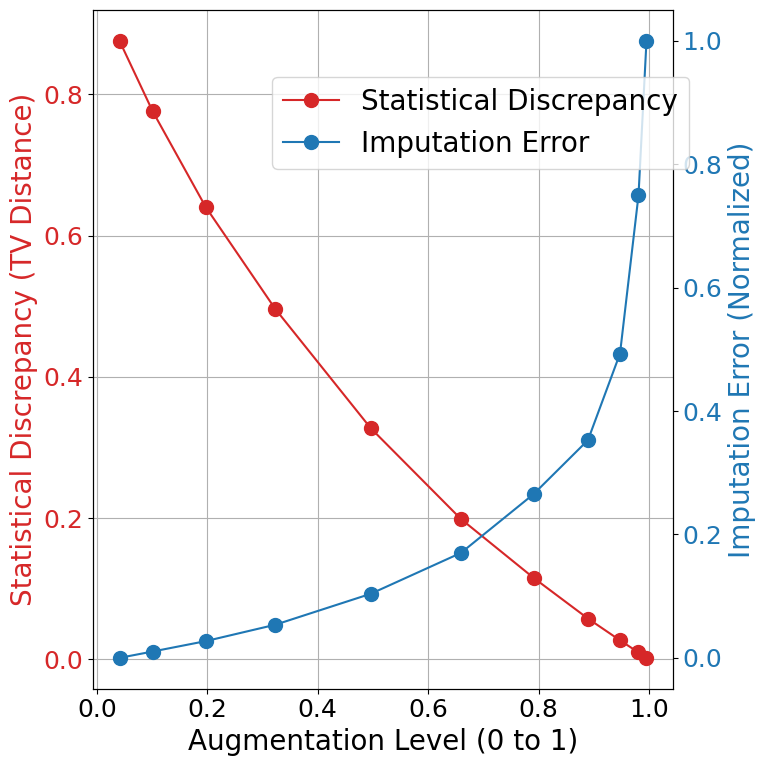}
        \caption{Trade-off between statistical discrepancy and imputation error across different augmentation levels (0 to 1). A full description of the synthetic toy dataset and implementation details can be found in Appendix \ref{appendix:tradeoff}.}
        \label{fig:tradeoff}
    \end{subfigure}
    \caption{(a) Overview of the proposed model-agnostic data augmentation method for CATE estimation, and (b) the observed trade-off that motivated the proposed method.}
    \label{fig:main_figure}
\end{figure}

\section{Proofs of the theoretical results}
In this section, we include the proofs for the theoretical results presented in the main text.

\subsection{Proof of Proposition~\ref{thm:bound}}
To prove the generalization bound, we first define a notion of consistency for data augmentation. And, we demonstrate a lemma proving that the proposed consistency is equivalent to emulating RCTs. 

\begin{definition}[Consistency of Factual Distribution]
\label{def:factual_distribtuion_consistency}
A factual distribution $p_F$ is consistent if for every hypothesis $h:\mathcal{X}\times \{0,1\} \rightarrow \mathcal{Y}, \mathcal{L}_{\F}(h) = \mathcal{L}_{\CF}(h)$. 
\end{definition}

\begin{definition}[Consistency of Data Augmentation]
\label{def:augmentation_consistency}
A data augmentation method is said to be consistent if the augmented data follows a factual distribution that is consistent.  
\end{definition}

\begin{lemma} [Consistency is Equivalent Randomized Controlled Trials]
\label{thm:optimality}
Suppose we have a factual distribution $p_\F$ and its corresponding counterfactual distribution $p_{\CF}$ such that for every hypothesis $h:\mathcal{X}\times \{0,1\} \rightarrow \mathcal{Y}, \mathcal{L}_{\F}(h) = \mathcal{L}_{\CF}(h)$. This implies that the data must originate from a randomized controlled trial, i.e., $p_\F(X|T=1) = p_\F(X|T=0)$. 
\end{lemma}

\begin{proof}[Proof of Proposition~\ref{thm:optimality}]
$\;$\\
Suppose that for every hypothesis $h:\mathcal{X}\times \{0,1\} \rightarrow \mathcal{Y}, \mathcal{L}_{\F}(h) = \mathcal{L}_{\CF}(h)$. \\
By definition,
    $$ 
    \begin{aligned}
    \mathcal{L}_\F(h) & = \int (y - h(x,t))^2 p_{\F}(x,t, y) \, dx \, dt \, dy \\
    \end{aligned}
    $$
    and 
    $$
    \begin{aligned}
    \mathcal{L}_{\CF}(h) & = \int (y - h(x,t))^2 p_{\CF}(x,t, y) \, dx \, dt \, dy \\
    \end{aligned}
    $$
We can write this as
$$
\mathbb{E}_{p_{\F}}
\left[\left(Y-h(X,T)^2\right)\right] = \mathbb{E}_{p_{\CF}}
\left[\left(Y-h(X,T)^2\right)\right] $$
Since this holds for every function $h$, consider two Borel sets A and B in $\mathcal{X} \times \mathcal{T} \times \mathcal{Y}$, and we let $h_1(X,T) = \mathbb{E}\left[Y|X,T\right] - \mathbbm{1}_{A}$ and $h_2(X,T) = \mathbb{E}\left[Y|X,T\right] - \mathbbm{1}_{B}$. Hence we have that,
$$
\begin{aligned}
\mathbb{E}_{p_{\F}}
\left[\left(Y-h_1(X,T)\right)^2\right]  & = \mathbb{E}_{p_{\F}}
\left[\left(Y - \mathbb{E}\left[Y|X,T\right] + \mathbbm{1}_{A}\right)^2\right] \\
& = \mathbb{E}_{p_{\F}}
\left[\left(Y - \mathbb{E}\left[Y|X,T\right]\right)^2\right] + \mathbb{E}_{p_{\F}}\left[\mathbbm{1}_A\right] + 2 \mathbb{E}_{p_{\F}}\left[\mathbbm{1}_{A}\left(Y - \mathbb{E}\left[Y|X,T\right]\right) \right]\\
\end{aligned}
$$
And we have that,
$\mathbb{E}_{p_{\F}}\left[\mathbbm{1}_{A}\left(Y - \mathbb{E}\left[Y|X,T\right]\right) \right] = 0$ since by definition of the conditional expectation we have that $\mathbb{E}[Y \mathbbm{1}_{A}] = \mathbb{E}[ \mathbb{E}\left[Y|X,T\right]\mathbbm{1}_{A}]$. We denote by $MSE(p_{\F}) = \mathbb{E}_{p_{\F}}
\left[\left(Y - \mathbb{E}\left[Y|X,T\right]\right)^2\right]$. Therefore we have that
$$
\mathbb{E}_{p_{\F}}
\left[\left(Y-h_1(X,T)\right)^2\right]
= MSE(p_{\F})+ \mathbb{E}_{p_{\F}}\left[\mathbbm{1}_A\right]
$$
Using the same argument for $p_{\CF}$ we have the following result:
$$
\begin{aligned}
\mathbb{E}_{p_{\CF}}
\left[\left(Y-h_1(X,T)\right)^2\right]
& = MSE(p_{\CF})+ \mathbb{E}_{p_{\CF}}\left[\mathbbm{1}_A\right]
\end{aligned}
$$
Similarly, we have the following for $h_2$:

$$
\begin{aligned}
&\mathbb{E}_{p_{\F}}
\left[\left(Y-h_2(X,T)\right)^2\right]
 = MSE(p_{\F})+ \mathbb{E}_{p_{\F}}\left[\mathbbm{1}_B\right]\\
&\mathbb{E}_{p_{\CF}} \left[\left(Y-h_2(X,T)\right)^2\right]
 = MSE(p_{\CF})+ \mathbb{E}_{p_{\CF}}\left[\mathbbm{1}_B\right]
\end{aligned}
$$

Therefore we have
$$
MSE(p_{\F}) - MSE(p_{\CF}) =  \mathbb{E}_{p_{\F}}\left[\mathbbm{1}_A\right] - \mathbb{E}_{p_{\CF}}\left[\mathbbm{1}_A\right]
$$
and 
$$
MSE(p_{\F}) - MSE(p_{\CF}) =  \mathbb{E}_{p_{\F}}\left[\mathbbm{1}_B\right] - \mathbb{E}_{p_{\CF}}\left[\mathbbm{1}_B\right]
$$
Therefore
$$
\mathbb{E}_{p_{\F}}\left[\mathbbm{1}_A\right] - \mathbb{E}_{p_{\CF}}\left[\mathbbm{1}_A\right] = \mathbb{E}_{p_{\F}}\left[\mathbbm{1}_B\right] - \mathbb{E}_{p_{\CF}}\left[\mathbbm{1}_B\right]
$$
Hence it follows,
$$
\mathbb{E}_{p_{\F}}\left[\mathbbm{1}_{A\cap B}\right] = \mathbb{E}_{p_{\CF}}\left[\mathbbm{1}_{A\cap B}\right]
$$
And as this holds for every Borel measurable set $A$ and $B$, therefore we have that $p_{\F} = p_{\CF}$.

Denote by $u = p_{\F}(T=1)$ we have $p_{\F}(X) = u p_{\F}(X|T=1) + (1-u) p_{\F}(X|T=0)$. Similarly we have that $p_{\CF}(X) = (1-u) p_{\CF}(X|T=1) + u p_{\CF}(X|T=0)$. Therefore, since $ p_{\F} = p_{\CF} $, 
$$
\begin{aligned}
u p_{\F}(X|T=1) + (1-u) p_{\F}(X|T=0) & = (1-u) p_{\CF}(X|T=1) + u p_{\CF}(X|T=0) \\
& = (1-u) p_{\F}(X|T=1) + u p_{\F}(X|T=0)
\end{aligned}
$$
Hence
$$
(2u - 1) \; p_{\F}(X|T=1) = (2u - 1) \; p_{\F}(X|T=0)
$$
Therefore we conclude the result that,
$$
p_{\F}(X|T=1) = p_{\F}(X|T=0).
$$
This concludes the proof.
\end{proof}

For completeness, we also include this result.
\begin{lemma}[Consistency of Randomized Controlled Trials]
The factual distribution of any randomized controlled trial =verifying $p_\F(T=1) = p_\F(T=0)$ is consistent, i.e., if $p_\F(X|T=1) = p_\F(X|T=0)$ and $p_\F(T=1) = p_\F(T=0)$, then for all $h:\mathcal{X}\times \{0,1\} \rightarrow \mathcal{Y}$,
$$
\mathcal{L}_{\F}(h) = \mathcal{L}_{\CF}(h)
$$
\end{lemma}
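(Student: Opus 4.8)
The plan is to exploit the factorization of each loss into a conditional outcome part and a marginal part over $(X,T)$, and then to show that the two RCT hypotheses force the marginals appearing in $\mathcal{L}_\F$ and $\mathcal{L}_\CF$ to coincide. Concretely, for a fixed hypothesis $h$ I would define the conditional risk $\ell_t(x) \doteq \int_{\mathcal{Y}} (y-h(x,t))^2\, p_\F(y\mid x,t)\, dy$. Because the potential outcomes are intrinsic to each individual and only the treatment-assignment mechanism is inverted between the factual and counterfactual worlds, the conditional outcome law is unchanged, i.e. $p_\CF(y\mid x,t) = p_\F(y\mid x,t)$; under the stated unconfoundedness this is simply the statement that both equal $p(Y_t = y\mid X=x)$. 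Hence the same $\ell_t$ governs both losses.

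Next I would write out both losses as sums over the binary treatment, treating the integral over $t$ as a sum over $\{0,1\}$:
\[
\mathcal{L}_\F(h) = \int_{\mathcal{X}}\ell_0(x)\,p_\F(x,0)\,dx + \int_{\mathcal{X}}\ell_1(x)\,p_\F(x,1)\,dx,
\]
while the identity $p_\CF(x,t) = p_\F(x,1-t)$ from Section~\ref{sec:background} gives
\[
\mathcal{L}_\CF(h) = \int_{\mathcal{X}}\ell_0(x)\,p_\F(x,1)\,dx + \int_{\mathcal{X}}\ell_1(x)\,p_\F(x,0)\,dx.
\]

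The final step is to invoke the balance conditions. From $p_\F(X\mid T=1) = p_\F(X\mid T=0)$ together with $p_\F(T=1)=p_\F(T=0)=\tfrac12$ I obtain $p_\F(x,0) = \tfrac12 p_\F(x\mid T=0) = \tfrac12 p_\F(x\mid T=1) = p_\F(x,1)$; that is, the two $(X,T)$ marginals coincide. Substituting $p_\F(x,0)=p_\F(x,1)$ into the two displays above makes them identical term by term, yielding $\mathcal{L}_\F(h)=\mathcal{L}_\CF(h)$ for every $h$. Equivalently, $p_\CF(x,t)=p_\F(x,1-t)=p_\F(x,t)$ together with the shared conditional shows $p_\CF = p_\F$ as full joint distributions, from which equality of the losses is immediate.

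I expect the only genuinely delicate point to be the justification that $p_\CF(y\mid x,t)=p_\F(y\mid x,t)$, i.e. that inverting the treatment-assignment mechanism leaves the conditional outcome law untouched; this is where unconfoundedness enters. Everything after that is a one-line rearrangement driven by the two balance conditions $p_\F(x\mid T=0)=p_\F(x\mid T=1)$ and $p_\F(T=0)=p_\F(T=1)$.
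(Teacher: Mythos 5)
Your proof is correct and, at bottom, takes the same route as the paper's: decompose the loss over the two treatment arms using $p_\F(T=1)=p_\F(T=0)=\tfrac{1}{2}$, then use the RCT conditions to match the factual integrands with the counterfactual ones. The difference lies in how that matching is justified. The paper's proof does it in one stroke, replacing $p_\F(x,y\mid T=1)$ by $p_\F(x,y\mid T=0)$ and vice versa; read literally, with $y$ denoting the observed outcome, this swaps the law of $(X,Y_1)$ given $T=1$ for the law of $(X,Y_0)$ given $T=0$, and what the step actually uses is the joint independence of $(X,Y_1,Y_0)$ from $T$ that randomization supplies, left implicit in the notation. Your finer factorization $p(x,t,y)=p(y\mid x,t)\,p(x,t)$ separates the two ingredients cleanly: unconfoundedness enters only to give the shared conditional outcome law $p_\CF(y\mid x,t)=p_\F(y\mid x,t)$ (exactly the point you flag as delicate), while the two balance hypotheses enter only to equate the $(X,T)$ marginals via $p_\CF(x,t)=p_\F(x,1-t)=p_\F(x,t)$. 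This buys two things. First, it makes explicit an assumption that the lemma's stated hypotheses do not literally contain --- they constrain only the $(X,T)$ marginal, so the shared-conditional step is an additional assumption (true by design in an RCT, and also assumed globally in Section~\ref{sec:background}) rather than a consequence of them; the paper's proof needs the same assumption but buries it in the swap step. Second, it delivers the stronger conclusion $p_\CF=p_\F$ as full joint distributions, from which $\mathcal{L}_\F(h)=\mathcal{L}_\CF(h)$ for every $h$ is immediate, rather than being verified term by term.
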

\begin{proof}
Let $u = p_{F}(T=1) =  \frac{1}{2}$, $p_{F}(T=1) = p_{CF}(T=0)$
$$ 
\begin{aligned}
\mathcal{L}_{\F}(h) & = \int (y - h(x,t))^2 p_{\F}(x,t, y) \, dx, \, dt \; dy \\
& = u \int (y - h(x,1))^2 p_{\F}(x, y|T=1) \, dx \, dy + (1-u) \int (y - h(x,0))^2 p_{\F}(x, y|T=0) \, dx \, dy \\
& = u \int (y - h(x,1))^2 p_{\F}(x, y|T=0) \, dx \, dy + (1-u)\int (y - h(x,0))^2 p_{\F}(x, y|T=1) \, dx \, dy \\
& = u\int (y - h(x,1))^2 p_{\CF}(x, y|T=1) \, dx \, dy  + (1-u)\int (y - h(x,0))^2 p_{\CF}(x, y|T=0) \, dx \, dy  \\
& = \int (y - h(x,t))^2 p_{\CF}(x,t, y) \, dx \, dy \\ 
& = \mathcal{L}_{\CF}(h)
\end{aligned}
$$
\end{proof}

To prove Proposition \ref{thm:bound} we also include a new definition for an ``ideal" factual distribution. Subsequently, we will prove its consistency. The ideal factual distribution is defined as follows:

\begin{equation}
    p_{\IFF} = \frac{1}{2} p_{\F} + \frac{1}{2} p_{\CF}.
\end{equation}
In other words, to sample a dataset from $p_{\IFF}$, we sample from the factual distribution $p_{\F}$ half of the time and from the counterfactual distribution $p_{\CF}$ in the other half of the times. Let $p_{\ICF}$ denote the counterfactual distribution corresponding to $p_{\IFF}$. We next show that $p_{\IFF}$ is consistent (thus called ideal distribution).

\begin{lemma} [Consistency of $p_{\IFF}$.]
\label{thm:pif_property}
    The error of the ideal factual distribution equals the error of its corresponding counterfactual distribution, i.e., for every hypothesis $h:\mathcal{X}\times \{0,1\} \rightarrow \mathcal{Y}$, we have that $\mathcal{L}_{\IFF}(h) = \mathcal{L}_{\ICF}(h)$.
\end{lemma}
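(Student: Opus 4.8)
The plan is to exploit a symmetry of the construction rather than to compute the two losses separately. The operation that sends a factual distribution to its corresponding counterfactual distribution is an involution given by flipping the treatment label, and $p_{\IF}=\tfrac12 p_\F+\tfrac12 p_{\CF}$ is exactly the symmetrization of $p_\F$ under this operation. Hence $p_{\IF}$ is a fixed point of it, which yields the stronger fact that $p_{\ICF}=p_{\IF}$ as measures; the desired equality $\mathcal{L}_{\IF}(h)=\mathcal{L}_{\ICF}(h)$ then follows instantly for every $h$, since identical distributions produce identical integrals.

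Concretely, I would first record the counterfactual correspondence at the level of full joint densities. Writing the stated identity $p_{\CF}(x,1-t)=p_\F(x,t)$ in the outcome-aware form that is implicit in the treatment swaps performed in the proof of the randomized-controlled-trial consistency lemma, namely $p_{\CF}(x,t,y)=p_\F(x,1-t,y)$, I define the linear map $\mathcal{C}$ by $(\mathcal{C}p)(x,t,y)=p(x,1-t,y)$. Then $p_{\CF}=\mathcal{C}p_\F$, and more generally the counterfactual distribution associated to any factual distribution $q$ is $\mathcal{C}q$. The key structural observation is that $\mathcal{C}$ is an involution, $\mathcal{C}^2=\mathrm{id}$, because flipping the treatment label twice restores it.

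With this in hand the computation is immediate. Since $p_{\ICF}$ is by definition the counterfactual distribution corresponding to $p_{\IF}$, I apply $\mathcal{C}$ and use its linearity together with $\mathcal{C}^2=\mathrm{id}$:
\[
p_{\ICF}=\mathcal{C}p_{\IF}=\mathcal{C}\Bigl(\tfrac12 p_\F+\tfrac12\,\mathcal{C}p_\F\Bigr)=\tfrac12\bigl(\mathcal{C}p_\F+\mathcal{C}^2 p_\F\bigr)=\tfrac12\bigl(\mathcal{C}p_\F+p_\F\bigr)=p_{\IF}.
\]
Because the two distributions coincide pointwise, for every measurable $h$ we get $\mathcal{L}_{\IF}(h)=\int (y-h(x,t))^2 p_{\IF}(x,t,y)\,dx\,dt\,dy=\int (y-h(x,t))^2 p_{\ICF}(x,t,y)\,dx\,dt\,dy=\mathcal{L}_{\ICF}(h)$, which is the claim.

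The only genuine obstacle here is conceptual rather than computational: one must pin down precisely what ``the counterfactual distribution corresponding to $p_{\IF}$'' means and confirm that it is produced by the same treatment-inversion map $\mathcal{C}$ at the level of the full joint over $(x,t,y)$, not merely the $(x,t)$ marginal that the text states explicitly. Once this identification is justified—and it is consistent with the conditional swaps already carried out in the earlier consistency lemma—the argument collapses to the one-line symmetry above. As a fallback that avoids the operator formalism, I could instead mirror that earlier proof directly: expand $\mathcal{L}_{\IF}(h)$ into its two treatment-conditional integrals, substitute $p_{\IF}(x,y\mid T=t)$ in terms of $p_\F$ and $p_{\CF}$, relabel $t\leftrightarrow 1-t$, and check that the resulting expression matches $\mathcal{L}_{\ICF}(h)$ term by term.
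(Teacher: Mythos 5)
Your proof is correct and takes essentially the same route as the paper: the paper's entire argument is the observation that $p_{\ICF} = \tfrac{1}{2}p_{\CF} + \tfrac{1}{2}p_{\F} = p_{\IF}$, followed by the remark that identical distributions give identical losses, and your linearity-plus-involution computation with $\mathcal{C}$ is exactly the algebra behind that observation. The joint-density identification $p_{\CF}(x,t,y)=p_{\F}(x,1-t,y)$ that you flag as the one conceptual obstacle is indeed the identity the paper itself uses implicitly (e.g., in expanding $\mathcal{L}_{\CF}$ in the proof of Theorem~\ref{thm:optimality}), so your argument sits squarely within the paper's own formalism.
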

\begin{proof}
    We observe that $p_{\ICF} = \frac{1}{2} p_{\CF} + \frac{1}{2}p_\F$. Therefore, $p_{\ICF} = p_{\IFF}$ and the result follows.
\end{proof}
Intuitively, this result is saying that the ideal counterfactual augmentation gives us a factual distribution that perfectly balances the factual and counterfactual worlds. It follows from Lemma \ref{thm:optimality} that achieving this property guarantees that the dataset is identically distributed to the one generated from a Randomized Controlled Trial. However, it is impossible to sample from $p_{\CF}$. 

Also, we cite this Theorem that we will use in our proof:
\begin{theorem} [Theorem 1 in ~\citet{Ben-David2010}]
\label{thm:ben_david}
Let $f$ be the true function for a learning task such that $f(x) = \mathbb{E}\left[Y|X=x\right]$ where $X$ has a density $p$ and let another true function $g(x)= \mathbb{E}\left[Y|X=x\right]$ modeling another learning task, where $X$ has a density $q$. Let $h$ by a hypothesis function estimating the true function $f$, therefore we have
$$
\begin{aligned}
\mathbb{E}_{X\sim q(x)}[\|g(X) - h(X)\|^2] \leq & \; \mathbb{E}_{X\sim p(x)}[\|f(X) - h(X)\|^2] + 2 V(p(x),p(x)) + \mathbb{E}_{X \sim p(x)}[\|f(X) - g(X)\|^2]
\end{aligned}
$$
\end{theorem}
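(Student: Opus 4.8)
The plan is to prove the bound via the standard two-step domain-adaptation argument: first transfer the risk from the target density $q$ to the source density $p$ at a cost controlled by the total variation $V(p,q)$, and then decompose the source-side discrepancy between $h$ and $g$ through the intermediate function $f$. Throughout I write $\ell(a,b)=\|a-b\|^2$ for the per-point loss and, for a density $r$, set $\epsilon_r(u,v)=\mathbb{E}_{X\sim r}[\ell(u(X),v(X))]$, so that the left-hand side is $\epsilon_q(h,g)$ and the three right-hand terms are $\epsilon_p(h,f)$, $V(p,q)$, and $\epsilon_p(f,g)$. I also read the middle term as $V(p,q)$, correcting the evident typo $V(p,p)$ in the statement.

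First I would change the measure. Writing the target risk as an integral and adding and subtracting $p$ gives
\begin{equation}
\epsilon_q(h,g)=\int \|g(x)-h(x)\|^2 q(x)\,dx = \epsilon_p(h,g) + \int \|g(x)-h(x)\|^2\big(q(x)-p(x)\big)\,dx .
\end{equation}
Bounding the last integrand pointwise by $\sup_x\|g(x)-h(x)\|^2\cdot|q(x)-p(x)|$ and using the standing assumption that outcomes and hypotheses lie in a bounded (normalized) set so that $\|g-h\|^2\le 1$, the second term is at most $\int|q(x)-p(x)|\,dx=V(p,q)$. This isolates the distribution-shift contribution exactly as the middle term of the claim.

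Next I would control the source term $\epsilon_p(h,g)$ by inserting the true source function $f$. The target form requires $\epsilon_p(h,g)\le \epsilon_p(h,f)+\epsilon_p(f,g)$, i.e.\ a triangle inequality for the loss under the fixed measure $p$. This is the main obstacle: the squared loss $\|a-b\|^2$ is not a metric and only satisfies the relaxed inequality $\|a-c\|^2\le 2\|a-b\|^2+2\|b-c\|^2$, so a fully rigorous derivation either (i) works with the metric loss $\|a-b\|$, for which the pointwise triangle inequality — and hence, by linearity of expectation, $\epsilon_p(h,g)\le\epsilon_p(h,f)+\epsilon_p(f,g)$ — holds immediately, or (ii) adopts, as in the cited Ben-David et al.\ framework, the standing assumption that the bounded loss obeys the triangle inequality. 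Granting this property, linearity of expectation upgrades the pointwise bound to $\epsilon_p(h,g)\le\epsilon_p(h,f)+\epsilon_p(f,g)$.

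Finally, combining the change-of-measure step with the triangle step yields
\begin{equation}
\epsilon_q(h,g)\le \epsilon_p(h,f)+V(p,q)+\epsilon_p(f,g),
\end{equation}
which is exactly the claimed inequality. The only delicate point is the triangle inequality for the loss, which I would handle through the boundedness/metric-loss assumption above; every remaining step is an elementary manipulation of integrals together with the definition of $V$.
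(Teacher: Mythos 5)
You have reconstructed the canonical two-step argument (change of measure controlled by the total variation, then a triangle-type decomposition through the intermediate true function), and this is essentially how the result is proved in the cited source; note, however, that the paper itself contains \emph{no} proof of this statement at all --- it is imported verbatim, typos included, from Ben-David et al.\ (2010) as an external lemma used inside the proof of Theorem~\ref{thm:bound}, so there is no internal derivation to compare against. Your two flagged caveats are precisely the discrepancies between the paper's transcription and the original: first, the middle term $V(p(x),p(x))$ is indeed a typo for $V(p,q)$, as confirmed by the paper's own application, where the bound is instantiated with $V(p_{\IF},p_{\AF})$; second, Ben-David et al.\ state and prove their Theorem~1 for a loss bounded in $[0,1]$ that obeys the triangle inequality (absolute loss), whereas the paper transcribes it with squared loss, for which the pointwise triangle inequality fails. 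As you observe, one must therefore either adopt the boundedness-plus-triangle-inequality assumption of the original framework, or accept the relaxed inequality $\|a-c\|^2 \leq 2\|a-b\|^2 + 2\|b-c\|^2$, which yields the bound only up to factors of $2$ that would then propagate into the constant of Theorem~\ref{thm:bound} (where the explicit factor $4$ could plausibly absorb them). Likewise, your change-of-measure step bounds $\int \|g-h\|^2(q-p)$ by $V(p,q)$ only under the normalization $\|g-h\|^2 \leq 1$; without boundedness the additive $V$ term should carry a $\sup_x\|g(x)-h(x)\|^2$ prefactor, a hypothesis the paper's statement silently omits. In short, your proposal is correct modulo exactly the assumptions the cited source makes, and is as rigorous as the statement, as transcribed, permits.
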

We can now prove Proposition~\ref{thm:bound}.
\begin{proof}

We have $f: \mathcal{X}\times \{0,1\} \to \mathcal{Y}$ to be the function underlying the true causal relationship between $(X,T)$ and $Y$.
It follows from Theorem \ref{thm:ben_david} that: 
$$
\mathcal{L}_{\IFF}(h) \leq \mathcal{L}_{\AF}(h) + 2 V(p_{\IFF},p_{\AF}) + \mathbb{E}_{x,t \sim p_{\AF}}[\|f(x,t) - \Tilde{f}(x,t)\|^2]
$$
where $\mathcal{L}_{\IFF}$ is the factual loss with respect to the ideal density and $\mathcal{L}_{\AF}$ is the factual loss with respect to the density of the augmented data. 

By decomposition of the $\varepsilon_{\PEHE}$ we have that,
$$
\begin{aligned}
    \varepsilon_{\PEHE}(h) & = \int_{\mathcal{X}}\left(h(x,1)-h(x,0)-f(x,1)+ f(x,0)\right)^2 p_{\IFF}(x) dx \\
    & = \int_{\mathcal{X}}\left(h(x,1)-h(x,0)-f(x,1)+ f(x,0)\right)^2 p_{\IFF}(x|T=1) p(T=1) dx dt \\
    & + \int_{\mathcal{X}}\left(h(x,1)-h(x,0)-f(x,1)+ f(x,0)\right)^2 p_{\IFF}(x|T=0) p(T=0) dx dt \\
    &  \leq 2 \cdot \mathcal{L}_{\IFF}(h) + 2 \cdot \mathcal{L}_{\ICF}(h)
\end{aligned}
$$
Therefore, it follows from Lemma~\ref{thm:pif_property} that,
$$
    \varepsilon_{\PEHE}(h) \leq 4 \cdot \big(\mathcal{L}_{\AF}(h) + 2 V(p_{\RCT}(x,t), p_{\AF}(x,t)) + \mathbb{E}_{x,t \sim p_{\AF}}[\|f(x,t) - \Tilde{f}_n(x,t)\|^2] \big)
$$
And since we have that,
$$
\begin{aligned}
& \mathbb{E}_{x,t \sim p_{\AF}}[\|f(x,t) - \Tilde{f}_n(x,t) \|^2] \big) = \\ 
& (\frac{1}{1+\beta})  \cdot \mathbb{E}_{x,t \sim p_F}[||f(x,t) - \Tilde{f}_n(x,t) ||] + \cdot \frac{\beta}{1+\beta}  \mathbb{E}_{x,t \sim q}[||f(x,t) - \Tilde{f}_n(x,t) ||]
\end{aligned}
$$
And by observing that the first term $\mathbb{E}_{x,t \sim p_{\F}}[\|f(x,t) - \Tilde{f}_n(x,t)\|^2] = 0$, since the algorithm keeps the samples from the factual distribution to be the same.
\end{proof}

\subsection{Proof of Proposition~\ref{thm:neighbors} and Proposition~\ref{prop:conv_to_rct}}

\begin{proof}[Proof of Proposition \ref{thm:neighbors}]
We have that, 
$$
\begin{aligned}
\mathbb{P}(X^0\in \mathcal{R}_n^0) 
    & = \sum_{i=k}^{n_1} \binom{n_1}{i} (1-\gamma)^i\gamma^{n_1-i}\\
    & = 1 -  \sum_{i=0}^{k-1} \binom{n_1}{i} (1-\gamma)^i\gamma^{n_1-i} \\
    & = 1 - \sum_{i=1}^{k-1} \frac{n_1!}{(n_1-i)!i!} (1-\gamma)^i\gamma^{n_1-i} \\
    & \geq 1 -  \frac{n_1!}{(n_1-k + 1)!} \gamma^{n_1} \sum_{i=0}^{k-1} \frac{1}{i!} \left(\frac{1-\gamma}{\gamma}\right)^i \\
    & \geq 1 -  n_1^k \gamma^{n_1} \sum_{i=0}^{k-1} \frac{1}{i!} \left(\frac{1-\gamma}{\gamma}\right)^i 
\end{aligned}
$$
Similarly, we have,
$$
\begin{aligned}
\mathbb{P}(X^1\in \mathcal{R}_n^1) 
    & = \sum_{i=k}^{n_0}\binom{n_0}{i}(1-\gamma)^i\gamma^{n_0-i}\\
    & = 1 - \sum_{i=1}^{k-1}\frac{n_0!}{(n_0-i)!i!}(1-\gamma)^i\gamma^{n_0-i} \\
    & \geq 1 -  n_0^k\gamma^{n_0}\sum_{i=0}^{k-1} \frac{1}{i!} \left(\frac{1-\gamma}{\gamma}\right)^i 
\end{aligned}
$$
Therefore we have,
\begin{equation*}
1 - \alpha_{n_0} = \mathcal{O}(n_0^k \gamma^{n_0}),
\end{equation*}
\begin{equation*}
1 - \alpha_{n_1} = \mathcal{O}(n_1^k \gamma^{n_1}),
\end{equation*}
\end{proof}

\begin{proof}[Proof of Proposition \ref{prop:conv_to_rct}]

We start by defining the probability densities of the control and treatment groups resulting from the augmentation process as,
$$
p_{\AF}^1 = \frac{1}{1+\beta_{n_1}} p^1 + \frac{\beta_{n_1}}{1+\beta_{n_1}} \frac{p^0 \mathbbm{1}_{\mathcal{R}_0}}{\alpha_{n_1}}
$$
and,
$$
p_{\AF}^0 = \frac{1}{1+\beta_{n_0}} p^0 + \frac{\beta_{n_0}}{1+\beta_{n_0}} \frac{p^1 \mathbbm{1}_{\mathcal{R}_1}}{\alpha_{n_0}}
$$
with,
$$
\beta_{n_1} = \alpha_{n_1} \left(\frac{1-u}{u}\right)
$$
and,
$$
\beta_{n_0} = \alpha_{n_0} \left(\frac{u}{1-u}\right)
$$
$$
\begin{aligned}
V(p^1_{\AF},p^0_{\AF}) 
& = \frac{1}{2}\int |p^1_{\AF} - p^0_{\AF}| \\
& = \frac{1}{2}\int \left| \frac{1}{1+\beta_{n_1}} p^1 + \frac{\beta_{n_1}}{1+\beta_{n_1}} \frac{p^0 \mathbbm{1}_{\mathcal{R}_0}}{\alpha_{n_1}} -  \frac{1}{1+\beta_{n_0}} p^0 - \frac{\beta_{n_0}}{1+\beta_{n_0}} \frac{p^1 \mathbbm{1}_{\mathcal{R}_1}}{\alpha_{n_0}}
\right|\\
& \leq  \frac{1}{2}\int \left| \frac{1}{1+\beta_{n_1}} p^1 - \frac{\beta_{n_0}}{1+\beta_{n_0}} \frac{p^1 \mathbbm{1}_{\mathcal{R}_1}}{\alpha_{n_0}} \right|  + \frac{1}{2} \int \left|\frac{\beta_{n_1}}{1+\beta_{n_1}} \frac{p^0 \mathbbm{1}_{\mathcal{R}_0}}{\alpha_{n_1}} -  \frac{1}{1+\beta_{n_1}} p^0 \right|\\
& \leq \frac{1}{1+\beta_{n_1}} V(p^1,\frac{p^1\mathbbm{1}_{\mathcal{R}_1}}{\alpha_{n_0}}) + \frac{\beta_{n_0}}{1+\beta_{n_0}} V(p^0,\frac{p^0\mathbbm{1}_{\mathcal{R}_0}}{\alpha_{n_1}}) + |\frac{1}{1+\beta_{n_1}} - \frac{\beta_{n_0}}{1+\beta_{n_0}}| \\
\end{aligned}
$$
We have that,

$$
\begin{aligned}
V(p^1,\frac{p^1\mathbbm{1}_{\mathcal{R}_1}}{\alpha_{n_0}}) 
& = \frac{1}{2}\left(\int_{\mathcal{R}_1} |p^1 - \frac{p^1}{\alpha_{n_0}}| + \int_{\mathcal{R}^c_1} p^1 \right)\\
& = \frac{1}{2}\left(\int_{\mathcal{R}_1} p^1 |1-\frac{1}{\alpha_{n_0}}| + (1-\alpha_{n_0})\right)\\
& = \frac{1}{2}\left( \frac{|\alpha_{n_0} -1|}{\alpha_{n_0}} \int_{\mathcal{R}_1} p^1 + (1-\alpha_{n_0})\right)\\
& = \frac{1}{2}\left( \frac{|\alpha_{n_0} -1|}{\alpha_{n_0}} \alpha_{n_0} + (1-\alpha_{n_0})\right) \\
& = (1-\alpha_{n_0})
\end{aligned}
$$
Similarly,
$$
V(p^0,\frac{p^0\mathbbm{1}_{\mathcal{R}_0}}{\alpha_{n_1}}) 
= (1-\alpha_{n_1})
$$

Substituting this into the bound and letting $z=\frac{u}{1-u}$ we have that,
$$
\begin{aligned}
V(p^1_{\AF},p^0_{\AF}) 
& \leq \frac{1-\alpha_{n_0}}{1+\beta_{n_1}}  + \frac{\beta_{n_0}(1-\alpha_{n_1})}{1+\beta_{n_0}} {\alpha_{n_1}}) + |\frac{1}{1+\beta_{n_1}} - \frac{\beta_{n_0}}{1+\beta_{n_0}}| \\
& = \frac{1-\alpha_{n_0}}{1+ \psi^{-1}\alpha_{n_1}} + \frac{\psi \alpha_{n_0}\left(1-\alpha_{n_1}\right)}{1+ \alpha_{n_0} \psi} + \frac{\left|1-\alpha_{n_1}\alpha_{n_0}\right|}{\left(1+\psi^{-1} \alpha_{n_1}\right)\left(1 + \alpha_{n_0} \psi\right)}
\end{aligned}
$$
\end{proof}


\newcommand{\cC}{\mathcal{C}}
\newcommand{\cR}{\mathcal{R}}
\subsection{Proof of Proposition~\ref{prop:finite_sample} and Proposition~\ref{prop:finite-error}}
\begin{proof}[Proof of Proposition~\ref{prop:finite_sample}]
The proof for $t=0$ and $t=1$ is symmetric, thus fix $t \in \{0,1\}$. For notational simplicity, we use $z$ in the proof to denote $\Bar{n}_t$, and let 
$$
A = (K(\mathbf{x}_z,\mathbf{x}_z)+\sigma^2\cdot I_z)^{-1} \in \mathbb{R}^{z \times z}.
$$
and
$$
U^t_K = \max_{x,x' \in \mathcal{R}^t_n}K(x,x').
$$
Consider $\tau >0$. A set $S$ is a $\tau$-\emph{cover} for $\mathcal{R}^{1-t}_n$ if $\forall x \in \mathcal{R}^{1-t}_n, \exists x' \in S$ such that $||x'-x||\le \tau$. Let $\mathcal{C}(\tau,\mathcal{R}^{1-t}_n)$ be the covering number of $\mathcal{R}^{1-t}_n$ with radius $\tau$:
$$
\mathcal{C}(\tau,\mathcal{R}^{1-t}_n) = \inf\{|S|: \text{$S$ is $\tau$-cover of $\cR^{1-t}_n$}\}.
$$
Since $\mathcal{R}^{1-t}_n \subset \mathbb{R}^d$, we have~\cite{vaart2023empirical} 
$$
\mathcal{C}(\tau,\mathcal{R}^{1-t}_n) \le \left(1+\frac{r}{\tau}\right)^d,
$$
where $r= \max_{x,x' \in \mathcal{R}^{1-t}_n}||x-x'||$. Consider a minimum $\tau$-cover $\mathcal{C}_\tau$ for $\mathcal{R}^{1-t}_n$ with (by definition of covering number) $\mathcal{C}(\tau,\mathcal{R})$ elements. We have that~\cite{info_gp}, with probability at least $1-\mathcal{C}(\tau,\mathcal{R})\exp(-\xi(\tau)/2)$, 
$$
\sup_{x \in \mathcal{C}_\tau}|f(x,t)-\Tilde{f}_n(x,t)| \le \sqrt{\xi(\tau)}\sup_{x \in \mathcal{C}_\tau}\sigma_n(x). 
$$
Choosing $\xi(\tau)=2\log(\mathcal{C}(\tau,\mathcal{R})/\delta)$, we have with probability $1-\delta$, 
$$
\sup_{x \in \mathcal{C}_\tau}|f(x,t)-\Tilde{f}_n(x,t)| \le \sqrt{\xi(\tau)}\sup_{x \in \mathcal{C}_\tau}\sigma_n(x). 
$$
Moreover, by definition of $\cC_\tau$, $\max_{x \in \cR^t_n}\min_{x' \in \cC_\tau}||x-x'|| \le \tau$. Because $f(x,t)$ is $L_f$-Lipschitz continuous, we have for all $x \in \cR^{1-t}_n$ 
$$
\min_{x' \in \mathcal{C}_\tau} |f(x,t)-f(x',t)| \le \tau L_f.
$$
With the fact that~\cite{uniform-error-gp} $\Tilde{f}_z(x,t)$ and $\sigma_z(x)$ is Lipschitz continuous with respective Lipschitz constant 
\begin{align}
& C_1 = L_K\sqrt{z}||A\mathbf{y}_n||,\\
& C_2(\tau) = \sqrt{2\tau L_K (1+z\cdot||A||_F\cdot U^t_K)},
\end{align}
we have with probability at least $1-\delta$ that 
$$
\sup_{x \in \cR^{1-t}_n}|\Tilde{f}_z(x,t) - f(x,t)| \le \sqrt{\xi(\tau)}\sup_{x \in \cR^{1-t}_n}\sigma_z(x) + C_2(\tau)\sqrt{\xi(\tau)} + (C_1+L_f)\tau 
$$

To continue, we will proceed to upper bound $C_1$:
\begin{align*}
    C_1 &= L_K\sqrt{z}||A\mathbf{y}_z|| \le L_K\sqrt{z}||A||_F||\mathbf{y}_z|| \le L_K\sqrt{z}\frac{||\mathbf{y}_z||}{\sigma^2}
\end{align*}
due to the fact that $||A||_F \le 1/\sigma^2$. 
Assume that $f(x,t) \le F \le +\infty$, by the assumption of the data generation process $y=f(x,t) + \epsilon$, $\epsilon \sim \mathcal{N}(0,\sigma^2)$, and triangular inequality of norm,
\begin{align}
    ||\mathbf{y}_z|| &\le ||f(\mathbf{x}_z,\mathbf{t}_z)|| + ||\gamma_z||\\
    & \le \sqrt{z}F + ||\gamma_z||,
\end{align}
where $\gamma_z$ is a multi-variate Gaussian random variable in $\mathbb{R}^z$ with mean $\mathbf{0}$ and covariance matrix $\sigma^2\cdot I_z$. Hence $||\gamma_z||/\sigma^2$ is a Chi-squared random variable with degrees of freedom equal to $z$. Then we have with probability at least $1-\delta/2$,
$$
C_1 \le L_K(zF+2z\sqrt{\eta_z\sigma^2})/\sigma^2,
$$
where $\eta_z = \log(\pi^2z^2/\delta)$.
On the other hand, $C_2$ can be upper bounded as 
$$
C_2(\tau) \le \sqrt{2\tau L_K (1+z\cdot U^t_K/\sigma^2)}.
$$
Hence, by choosing $\tau = 1/z^2$, we have 
$$
(C_1+L_f)\tau \in \mathcal{O}(1/z),
$$
and with probability at least $1-\delta$, we have
\begin{align*}
    &\sup_{X \in \mathcal{R}}|f(X,t)-\Tilde{f}_n(X,t)| \le \sqrt{\frac{4L_K+2U_K/\sigma^2}{z}d\log(1+z^2r)} + \sqrt{2d\log(1+z^2r)\sup_{x \in \mathcal{R}^{1-t}_n}\sigma_n(x)} + \mathcal{O}(1/z)
\end{align*}
After reorganizing terms, the proof is complete. 
\end{proof}

\begin{proof}[Proof of Proposition~\ref{prop:finite-error}]
With Proposition~\ref{prop:finite_sample}, we have probability at least $(1-\delta)^2$ that for both $t=0$ and $t=1$
\begin{align*}
&\sup_{x \in \mathcal{R}^{1-t}_n}|f(x,t)-\Tilde{f}_{\Bar{n}_t}(x,t)| \le \left(\sqrt{\frac{C^t_K}{\Bar{n}_t}}+\sqrt{\sup_{x \in \mathcal{R}^{1-t}_n}\sigma_{\Bar{n}_t}(x)}\right)\sqrt{d\log\left(\frac{1+\Bar{n}_t^2 r_t}{\delta}\right)} + \mathcal{O}(1/\Bar{n}_t),
\end{align*}
This implies that 
\begin{align*}
    &\sup_{t \in \{0,1\}}\sup_{x \in \mathcal{R}^{1-t}_n}|f(x,t)-\Tilde{f}_{\Bar{n}_t}(x,t)| \le 
    \sup_{t \in \{0,1\}}\bigg\{\left(\sqrt{\frac{C^t_K}{\Bar{n}_t}}+\sqrt{\sup_{x \in \mathcal{R}^{1-t}_n}\sigma_{\Bar{n}_t}(x)}\right)\sqrt{d\log\left(\frac{1+\Bar{n}_t^2 r_t}{\delta}\right)} + \mathcal{O}(1/\Bar{n}_t)\bigg\} \\
    & \le  \sup_{t \in \{0,1\}}\bigg\{\left(\sqrt{\frac{C^t_K}{\Bar{n}_t}}+\sqrt{\sup_{x \in \mathcal{R}^{1-t}_n}\sigma_{\Bar{n}_t}(x)}\right)\sqrt{d\log\left(\frac{1+\Bar{n}_t^2 r_t}{\delta}\right)} \bigg\} + \mathcal{O}(1/\Bar{n}_0\wedge\Bar{n}_1) \\
    & \le  \sup_{t \in \{0,1\}}\bigg\{\left(\sqrt{\frac{C^0_K \vee C^1_K}{\Bar{n}_0 \wedge \Bar{n}_1}}+\sqrt{\sup_{x \in \mathcal{R}^{1-t}_n}\sigma_{\Bar{n}_t}(x)}\right)\sqrt{d\log\left(\frac{1+\Bar{n}_t^2 r_t}{\delta}\right)} \bigg\} + \mathcal{O}(1/\Bar{n}_0\wedge\Bar{n}_1) \\
    & \le \sqrt{d}\left(\sqrt{\frac{C^0_K \vee C^1_K}{\Bar{n}_0 \wedge \Bar{n}_1}}+ \sup_{t \in \{0,1\}}\sqrt{\sup_{x \in \mathcal{R}^{1-t}_n}\sigma_{\Bar{n}_t}(x)}\right)\sqrt{\log\left(\frac{1+(\Bar{n}_0\vee\Bar{n}_1)^2 r_t}{\delta}\right)} + \mathcal{O}(1/\Bar{n}_0\wedge\Bar{n}_1) 
\end{align*}
By change of variable $(1-\delta)^2 = 1-\delta'$, we have with probability $1-\delta'$ for $\delta' \in (0,1)$, 
\begin{align*}
 & \sup_{t \in \{0,1\}}\sup_{x \in \mathcal{R}^{1-t}_n}|f(x,t)-\Tilde{f}_{\Bar{n}_t}(x,t)| \\
 &\le \sqrt{d}\left(\sqrt{\frac{C^0_K \vee C^1_K}{\Bar{n}_0 \wedge \Bar{n}_1}}+ \sup_{t \in \{0,1\}}\sqrt{\sup_{x \in \mathcal{R}^{1-t}_n}\sigma_{\Bar{n}_t}(x)}\right)\sqrt{\log\left(\frac{1+(\Bar{n}_0\vee\Bar{n}_1)^2 r_t}{\sqrt{1-\sqrt{1-\delta'}}}\right)} + \mathcal{O}(1/\Bar{n}_0\wedge\Bar{n}_1) \\
& = \sqrt{d}\Tilde{\mathcal{O}}\left(\sqrt{\frac{C^0_K \vee C^1_K}{\Bar{n}_0 \wedge \Bar{n}_1}}+ \sqrt{\sup_{x \in \mathcal{R}^1_n}\sigma_{\Bar{n}_0}(x)\vee \sup_{x \in \mathcal{R}^0_n}\sigma_{\Bar{n}_1}(x)}\right)+\mathcal{O}(1/\Bar{n}_0\wedge\Bar{n}_1)
\end{align*}
    
\end{proof}

\section{Pesudocode}
In this section, we include the pseudocode for COCOA~\ref{alg:cocoa}. 
\begin{algorithm}
   \caption{Contrastive Counterfactual Augmentation}
   \label{alg:cocoa}
\begin{algorithmic}
   \STATE {\bfseries Input:} Factual dataset $D_{\text{F}} =\{(x_i,t_i,y_i)\}_{i=1}^n$; sensitivity parameter $\epsilon$; threshold $k$
   \STATE {\bfseries Output:} Augmented factual dataset $D_{\text{AF}}$ as training dataset for CATE estimation models
   \STATE Initialize $D_\text{A} = \emptyset$
   \STATE Construct datasets $D_\epsilon^+$ and $D_\epsilon^-$ from $D_\text{F}$
   \STATE Optimize a parametric model $g_{\theta}$ with contrastive learning and $(D_\epsilon^+,D_\epsilon^-)$
   \FOR{$i = 1$ {\bfseries to} $n$}
   \STATE Determine $N_i = \{(x_j, y_j ) | j \in [n], t_j = 1-t_i, g_{\theta}(x_i,x_j)=1\}$
   \IF{$|N_i| \geq k$}
   \STATE Estimate $\hat y_i$ with $\psi(x_i,N_i)$ 
   \STATE Add $(x_i, 1-t_i, \hat{y}_i)$ to $D_A$
   \ENDIF
   \ENDFOR
   \STATE Set $D_{\text{AF}} = D_\text{A} \cup D_{\text{F}}$
\end{algorithmic}
\end{algorithm}

\section{Dataset Descriptions}
\label{sec:datasets}
\paragraph{IHDP} 
The IHDP dataset is a semi-synthetic dataset that was introduced based on real covariates available from the Infant Health and Development Program (IHDP) to study the effect of development programs on children. The features (covariates) in this dataset come from a Randomized Control Trial. The potential outcomes were simulated following Setting B in~\cite{hill2011}. The IHDP dataset consists of $747$ individuals ($139$ in the treatment group and $608$ in the control group), each with $25$ features. The potential outcomes are generated as follows:
$$Y_{0} \sim \mathcal{N}(\exp(\beta^{T}(X + W)), 1)$$
and
$$Y_{1} \sim \mathcal{N}(\beta^{T}(X+W) - \omega,1)$$
where $W$ has the same dimension as $X$ with all entries equal $0.5$ and $\omega=4$. The regression coefficient $\beta$ is a vector of length $25$ where each element is randomly sampled from a categorical distribution with the support $(0, 0.1, 0.2, 0.3, 0.4)$ and the respective probability masses $\mu = (0.6, 0.1, 0.1, 0.1,0.1)$. 

\paragraph{News}
The News Dataset is a semi-synthetic dataset designed to assess the causal effects of various news topics on reader responses. It was first introduced in \cite{johanson}. The documents were sampled from news items from the NY Times corpus (downloaded from UCI \cite{newman2008gender}). The covariates available for CATE estimation are the raw word counts for the $100$ most probable words in each topic.
The treatment \( t \in \{0, 1\} \) denotes the viewing device.  \( t = 0 \) means \textit{with computer} and \( t = 1 \) means \textit{with mobile}. A topic model is trained on a comprehensive collection of documents to generate \( z(x) \in \mathbb{R}^k \) that represents the topic distribution of a given news item \( x \) \citep{johanson}. 

Let the treatment effects be represented by \( z_{c_1} \) (for $t=1$) and \( z_{c_0} \) (for $t=0$) \( z_{c_1} \) is defined as the topic distribution of a randomly selected document while \( z_{c_0} \) is the average topic representation across all documents. The reader's opinion of news item \( x \) on device \( t \) is influenced by the similarity between \( z(x) \) and \( z_{c_t} \), expressed as:

\[
y(x,t) = C \cdot\left(  z(x) ^{T} z_{c_0} + t \cdot z(x)  ^{T} z_{c_1}\right) + \epsilon
\]

where \( C=50\) is a scaling factor and \( \epsilon \sim \mathcal{N}(0, 1) \). The assignment of a news item \( x \) to a device \( t \in \{0, 1\} \) is biased towards the preferred device for that item, modeled using the softmax function:

\[
p(t = 1 | x) = \frac{e^{\kappa \cdot z(x)^T z_{c_1}}}{e^{\kappa \cdot z(x) ^T z_{c_0}} + e^{\kappa \cdot z(x)^T z_{c_1}}}
\]

Here, \( \kappa \) determines the strength of the bias and it is assigned to be $10$.

\paragraph{Twins}
The Twins dataset~\cite{louizos2017causal} is based on the collected birthday data of twins born in the United States from 1989 to 1991. It is assumed that twins share significant parts of their features. Consider the scenario where one of the twins was born heavier than the other as the treatment assignment. The outcome is whether the baby died in infancy (i.e., the outcome is mortality). Here, the twins are divided into two groups: the treatment and the control groups. The treatment group consists of heavier babies from the twins. On the other hand, the control group consists of lighter babies from the twins. The potential outcomes, $Y_0$ and $Y_1$, are generated through:

$$Y_{0} \sim \mathcal{N}(\exp(\beta^{T}X), 0.2)$$
and
$$Y_{1} \sim \mathcal{N}(\alpha^{T}X ,0.2)$$
Where $\beta$ and $\alpha$ are sampled from a high dimensional standard normal distribution.

\paragraph{Linear dataset} We synthetically generate a dataset with $N=1500$ samples and $d=10$ features. The feature vectors $X =\begin{pmatrix}
    x_1, x_2,\ldots, x_d
\end{pmatrix}^T \in \mathbb{R}^{d}$ are drawn from a standard normal distribution. 
The treatment assignment $t \in \{0,1\}$ is biased, with the probability of treatment being $$p(t=1|x)=\frac{1}{1+\exp(-(x_1+x_2))}$$
We generate potential outcomes using two linear functions with coefficients $\beta_{0}=(0.5, ,\ldots,0.5) \in \mathbb{R}^d$ and $\beta_{1}=(0.3,\ldots, 0.3) \in \mathbb{R}^d$ as follows:

$$Y_0 = \beta_0 X + \mathcal{N}(0,0.01) $$
$$Y_1 = \beta_1 X + \mathcal{N}(0,0.01) $$

\paragraph{Non-Linear dataset}
We construct a synthetic dataset consisting of $N=1500$ instances with $d=10$ features. The feature vectors, denoted by $X =\begin{pmatrix}
x_1, x_2,\ldots, x_d
\end{pmatrix}^T \in \mathbb{R}^{d}$, are sampled from a standard normal distribution.
The treatment assignment $t \in \{0,1\}$ is biased, with the probability of treatment being $$p(t=1|x)=\frac{1}{1+\exp(-(x_1+x_2))}$$
We generate potential outcomes using two linear functions with coefficients $\beta_{0}=(0.5, ,\ldots,0.5) \in \mathbb{R}^d$ and $\beta_{1}=(0.3,\ldots, 0.3) \in \mathbb{R}^d$ as follows:

$$Y_0 = \exp\left(\beta_0 X \right)+ \mathcal{N}(0,0.01) $$
$$Y_1 = \exp(\left(\beta_1 X \right)+ \mathcal{N}(0,0.01) $$

\section{Additional Empirical Results}
\label{app:ablation}
In this section, we present additional results for the completeness of the empirical study for COCOA. Specifically, we (\textit{i}) add the results for the synthetic datasets, (\textit{ii}) provide details for the toy example used to generate Figure \ref{fig:tradeoff}, (\textit{ii}) present more visualizations illustrating the effect of contrastive learning, (\textit{iv}) study the performance of our proposed method on ATE estimation, (\textit{v}) conduct ablation studies on the local regression module, (\textit{vi}) present additional results to demonstrate robustness against overfitting, and (\textit{vii}) perform ablation studies on different parameters for the contrastive learning module.

\subsection{Results for synthetic data}
\label{appendix:synthetic}
In this section, we present the $\sqrt{\varepsilon_{\PEHE}}$ results for various CATE estimation models on synthetic datasets, both linear and non-linear. Table~\ref{synthetic-results-table} summarizes the performance of each model with COCOA augmentation (w/ aug.) and without augmentation (w/o aug.). Lower $\sqrt{\varepsilon_{\PEHE}}$ indicates better performance. The results demonstrate that COCOA augmentation consistently improves the performance across different models and datasets.

\begin{table*}[t]
\caption{$\sqrt{\varepsilon_{\PEHE}}$ across various CATE estimation models with and without COCOA augmentation on Linear and Non-Linear synthetic datasets. Lower $\sqrt{\varepsilon_{\PEHE}}$ corresponds to better performance.}
\label{synthetic-results-table}
\begin{center}
\begin{tabular}{l|cc|cc}
\hline 
& \multicolumn{2}{c|}{\textbf{\texttt{Linear}}} & \multicolumn{2}{c}{\textbf{\texttt{Non-linear}}} \\
\textbf{Model} & \textbf{w/o aug.} & \textbf{w/ aug.} & \textbf{w/o aug.} & \textbf{w/ aug.} \\
\hline
TARNet & $0.93 \scriptstyle \pm .09$ & $0.81 \scriptstyle \pm .02$ & $7.41 \scriptstyle \pm .23$ & $6.64 \scriptstyle \pm .11$ \\
CFR-Wass & $0.87 \scriptstyle \pm .05$ & $0.74 \scriptstyle \pm .05$ & $7.32 \scriptstyle \pm .21$ & $6.22 \scriptstyle \pm .07$ \\
CFR-MMD & $0.91 \scriptstyle \pm .04$ & $0.78 \scriptstyle \pm .06$ & $7.35 \scriptstyle \pm .19$ & $6.28 \scriptstyle \pm .10$ \\
T-Learner & $0.90 \scriptstyle \pm .01$ & $0.89 \scriptstyle \pm .01$ & $7.68 \scriptstyle \pm .12$ & $7.51 \scriptstyle \pm .07$ \\
S-Learner & $0.64 \scriptstyle \pm .01$ & $0.63 \scriptstyle \pm .01$ & $7.22 \scriptstyle \pm .01$ & $6.92 \scriptstyle \pm .01$ \\
BART & $0.65 \scriptstyle \pm .00$ & $0.30 \scriptstyle \pm .00$ & $5.49 \scriptstyle \pm .00$ & $4.50 \scriptstyle \pm .00$ \\
CF & $0.63 \scriptstyle \pm .00$ & $0.27 \scriptstyle \pm .00$ & $5.46 \scriptstyle \pm .00$ & $4.46 \scriptstyle \pm .00$ \\
\hline
\end{tabular}
\end{center}
\end{table*}

\subsection{Trade-off Toy Example}
\label{appendix:tradeoff}
In this section, we synthetically generate a dataset for a binary treatment scenario with $1000$ samples per treatment group and $d=4$ features. 
We sample a vector of coefficients, $$
\beta \sim \mathcal{N}(\mathbf{0}, \mathbf{I}_d)
$$
where \(\mathbf{0} \in \mathbb{R}^d\) is the zero vector and \(\mathbf{I}_d\) is the \(d \times d\) identity matrix.

Next, we generate feature vectors $X \in \mathbb{R}^{d}$ for the two treatment groups:
$$
X_0 \sim \mathcal{N}(\mathbf{-1}, 0.5 \mathbf{I}_d)
$$
and,
$$
X_1 \sim \mathcal{N}(\mathbf{1}, 0.5 \mathbf{I}_d)
$$

where \(\mathbf{-1} \in \mathbb{R}^d\) and \(\mathbf{1} \in \mathbb{R}^d\) are vectors with all elements equal to -1 and 1, respectively, and \(\mathbf{I}_d\) is the \(d \times d\) identity matrix.

The potential outcomes are generated as follows:
    \[
    Y_0 = (\beta^T X_0)^3 + \mathcal{N}(0, 0.1)
    \]
and
\[
    Y_1 = (\beta^T X_1)^2 + \mathcal{N}(0, 0.1)
    \]

We implement a function to augment the datasets using a nearest-neighbor approach with a specified radius (radius is set to 8). The augmentation involves imputing potential outcomes for individuals from the opposite treatment group if they have at least three close neighbors within the specified radius. We then perform linear regression to impute the outcomes. We include further empirical results in Figure \ref{fig:traoff-appendix}.

\begin{figure}
    \centering
    \includegraphics[width=5.5in]{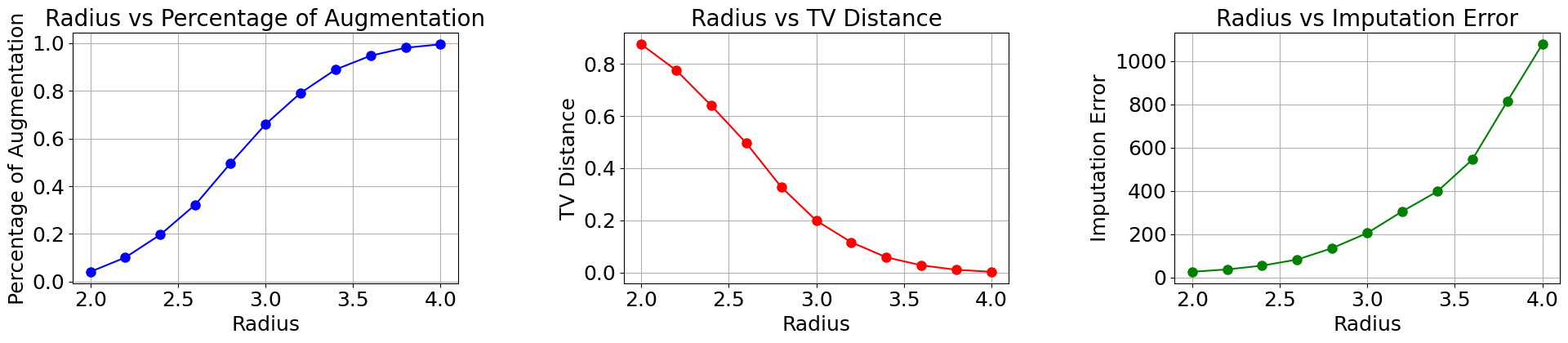}
    \caption{Trade-off between imputation error and statistical disparity.The first plot displays the percentage of augmentation as a function of the radius. The second and third plots show the Total Variation (TV) distance and imputation error, respectively, for different radius values.}
    \label{fig:traoff-appendix}
\end{figure}

\subsection{Contrastive Learning Motivation}

In this section, we provide more motivation for the use of contrastive learning to learn a representation space in which we identify similar individuals instead of using traditional methods (e.g., euclidean distance the ambient space). Figures \ref{fig:heatmap_treatment} and \ref{fig:heatmap_control} illustrate this. We also include an ablation on the effect of the embedding dimension for contrastive learning on the learned representation for the IHDP dataset as illustrated in Figure \ref{fig:dimension_contrastive}.

\begin{figure}
    \centering
    \includegraphics[width=5.5in]{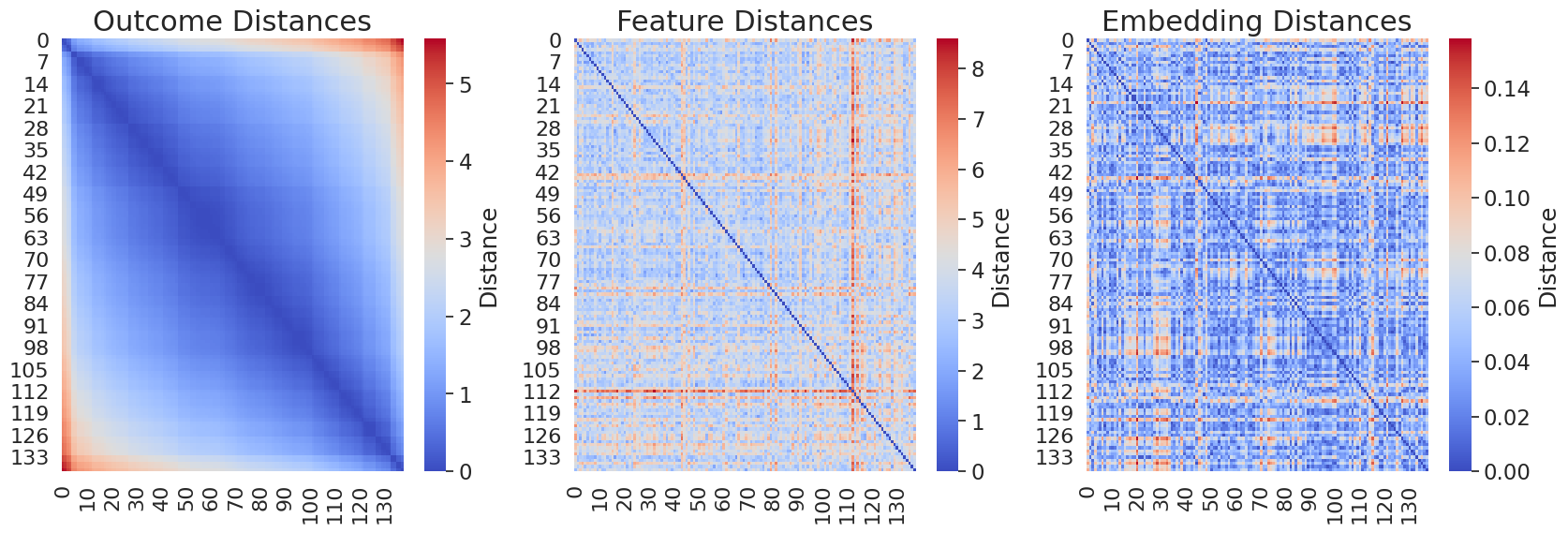}
    \caption{Comparison between euclidean distance and latent distance lerned by contrastive learning for the IHDP dataset (treatment group). The first heatmap illustrates the outcome distances. The second heatmap shows the feature distances, reflecting differences between feature vectors. The third heatmap presents the embedding distances, demonstrating how the learned embeddings capture the same similarities as the potential outcome.}
    \label{fig:heatmap_treatment}
\end{figure}

\begin{figure}
    \centering
    \includegraphics[width=5.5in]{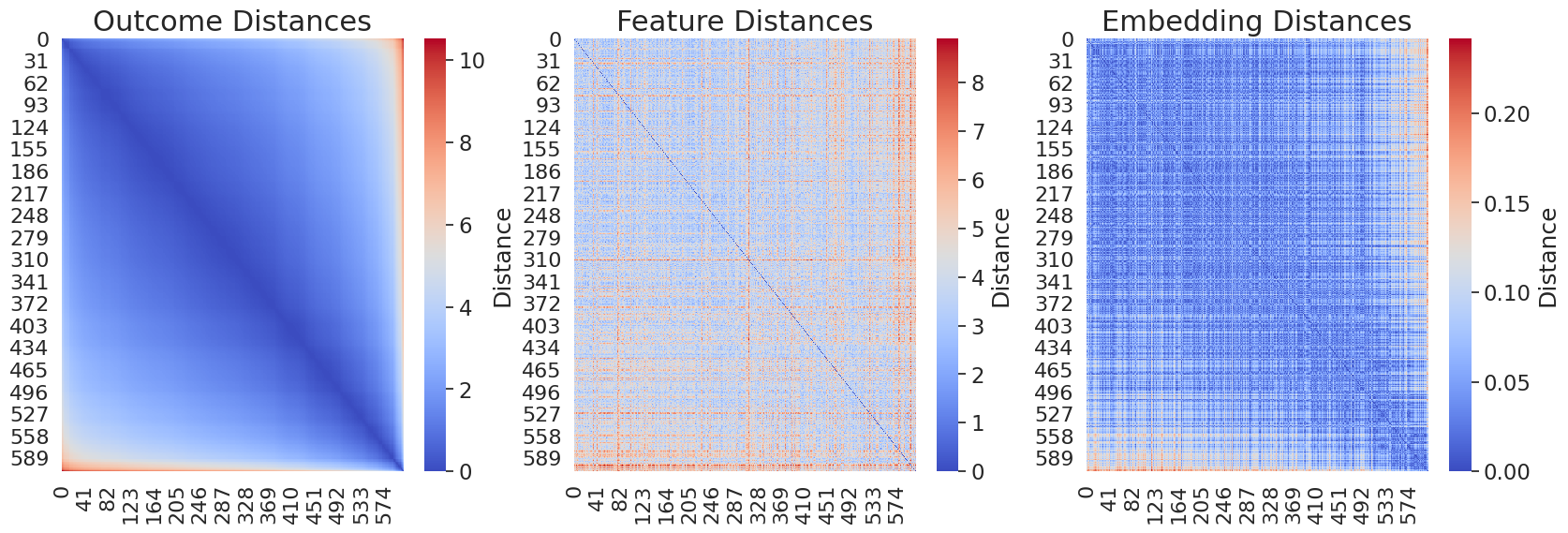}
    \caption{Comparison between euclidean distance and latent distance lerned by contrastive learning for the IHDP dataset (control group). The first heatmap illustrates the outcome distances. The second heatmap shows the feature distances, reflecting differences between feature vectors. The third heatmap presents the embedding distances, demonstrating how the learned embeddings capture the same similarities as the potential outcome.}
    \label{fig:heatmap_control}
\end{figure}

\begin{figure}
    \centering
    \includegraphics[width=5.5in]{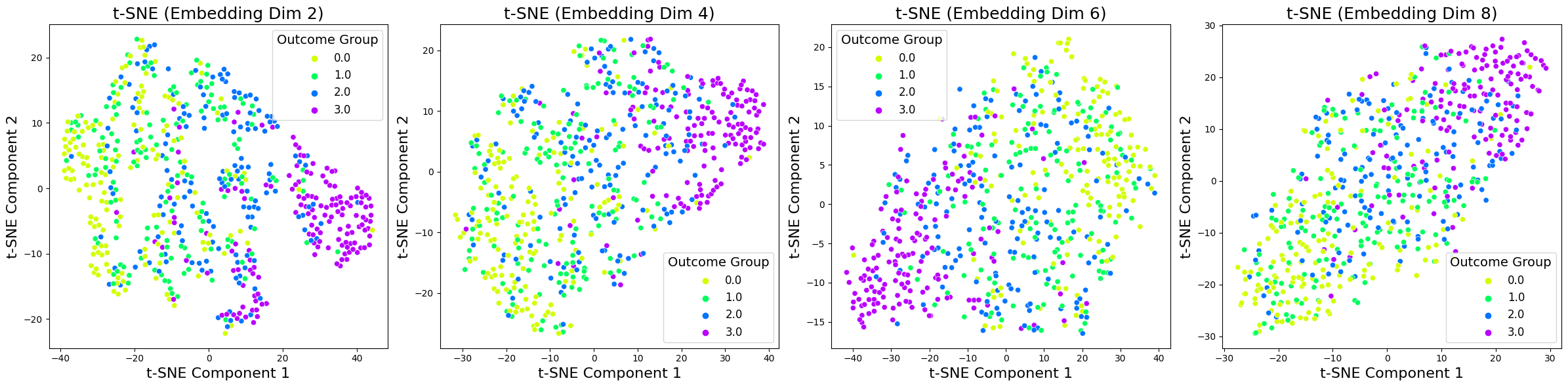}
    \caption{t-SNE visualizations of the IHDP dataset control group embeddings for different embedding dimensions. The figure illustrates t-SNE plots for the control group with embedding dimensions of 2, 4, 6, and 8. The points are colored based on outcome groups, created by dividing the outcomes into four quantiles. Each subplot shows how the embeddings distribute in a 2D space, capturing the relationship between the learned embeddings and outcome groups. Outcome groups represent different quantile ranges of potential outcomes: Group 0 (yellow) includes the lowest quantile, Group 1 (cyan) includes the second lowest, Group 2 (blue) includes the second highest, and Group 3 (magenta) includes the highest quantile.}
    \label{fig:dimension_contrastive}
\end{figure}
\subsection{ATE Estimation Performance}
\label{appendix:ate}
In this section, we provide additional empirical results when applying our methods to ATE estimation. 
The Average Treatment Effect (ATE) is defined as:
$$\tau_{\ATE} = \mathbb{E}[Y_{1}-Y_{0}].$$

The error of ATE estimation is defined as:
\begin{equation}
\varepsilon_{\ATE} = \left| \hat{\tau}_{\ATE} - \tau_{\ATE} \right|,
\end{equation}
Our results are summarized in Tables \ref{table-ate}, \ref{table:ATE}, and \ref{table:similarity-1}. We observe that our methods, while not tailored for ATE estimation, still bring some benefits for a subset of the estimation models. 

\begin{table}[ht]
\caption{$\varepsilon_{\text{ATE}}$ across various CATE estimation models, with COCOA augmentation (w/ aug.) and without augmentation (w/o aug.) in Twins, Linear, and Non-Linear datasets. Lower $\varepsilon_{\text{ATE}}$ corresponds to the better performance.}
\label{table-ate}
\begin{center}
\begin{sc}
\begin{tabular}{l|cc|cc|cc}
\hline 
& \multicolumn{2}{c|}{\textbf{\texttt{Twins}}} & \multicolumn{2}{c|}{\textbf{\texttt{Linear}}} & \multicolumn{2}{c}{\textbf{\texttt{Non-linear}}} \\
\textbf{Model} & \textbf{w/o aug.} & \textbf{w/ aug.} & \textbf{w/o aug.} & \textbf{w/ aug.} & \textbf{w/o aug.} & \textbf{w/ aug.} \\
\hline
TARNet & $0.33 \scriptstyle \pm .19$ & $0.41 \scriptstyle \pm .29$ & $0.10 \scriptstyle \pm .02$ & $0.04 \scriptstyle \pm .02$ & $0.23 \scriptstyle \pm .13$ & $0.04 \scriptstyle \pm .02$ \\
CFR-Wass & $0.47 \scriptstyle \pm .16$ & $0.14 \scriptstyle \pm .09$ & $0.13 \scriptstyle \pm .04$ & $0.06 \scriptstyle \pm .01$ & $0.19 \scriptstyle \pm .09$ & $0.03 \scriptstyle \pm .01$ \\
CFR-MMD & $0.19 \scriptstyle \pm .09$ & $0.18 \scriptstyle \pm .12$ & $0.12 \scriptstyle \pm .05$ & $0.05 \scriptstyle \pm .03$ & $0.25 \scriptstyle \pm .15$ & $0.04 \scriptstyle \pm .01$ \\
T-Learner & $0.02 \scriptstyle \pm .02$ & $0.05 \scriptstyle \pm .03$ & $0.01 \scriptstyle \pm .01$ & $0.01 \scriptstyle \pm .01$ & $0.05 \scriptstyle \pm 0.02$ & $0.05 \scriptstyle \pm .01$ \\
S-Learner & $0.89 \scriptstyle \pm .03$ & $0.79 \scriptstyle \pm .07$ & $0.03 \scriptstyle \pm .01$ & $0.05 \scriptstyle \pm .01$ & $0.45 \scriptstyle \pm .05$ & $0.27 \scriptstyle \pm .02$ \\
BART & $0.28 \scriptstyle \pm .08$ & $0.21 \scriptstyle \pm .10$ & $0.37 \scriptstyle \pm .00$ & $0.07 \scriptstyle \pm .01$ & $0.80 \scriptstyle \pm .00$ & $0.26 \scriptstyle \pm .00$ \\
CF & $0.28 \scriptstyle \pm .06$ & $0.14 \scriptstyle \pm .15$ & $0.39 \scriptstyle \pm .00$ & $0.06 \scriptstyle \pm .01$ & $0.77 \scriptstyle \pm .00$ & $0.32 \scriptstyle \pm .00$ \\
\hline
\end{tabular}
\end{sc}
\end{center}
\end{table}

\begin{table}[ht]
\centering
\caption{$\varepsilon_{\text{ATE}}$ across various CATE estimation models, with COCOA augmentation (w/ aug.), without augmentation (w/o aug.), and with Perfect Match augmentation in News and IHDP datasets. Lower $\varepsilon_{\text{ATE}}$ corresponds to the better performance.}
\label{table:ATE}
\begin{sc}
\begin{tabular}{l|cc|cc}
\hline
& \multicolumn{2}{c|}{\textbf{\texttt{News}}} & \multicolumn{2}{c}{\textbf{\texttt{IHDP}}} \\
\textbf{Model} & \textbf{w/o aug.} & \textbf{w/ aug.} & \textbf{w/o aug.} & \textbf{w/ aug.} \\
\hline
TARNet & $0.97 \scriptstyle \pm .45$ & $0.96 \scriptstyle \pm .38$ & $0.12 \scriptstyle \pm .05$ & $0.07 \scriptstyle \pm .03$ \\
CFR-Wass & $1.00 \scriptstyle \pm .29$ & $0.75 \scriptstyle \pm .22$ & $0.10 \scriptstyle \pm .03$ & $0.05 \scriptstyle \pm .02$ \\
CFR-MMD & $0.89 \scriptstyle \pm .38$ & $0.71 \scriptstyle \pm .22$ & $0.16 \scriptstyle \pm .04$ & $0.09 \scriptstyle \pm .04$ \\
T-Learner (NN) & $0.49 \scriptstyle \pm .26$ & $0.76 \scriptstyle \pm .20$ & $0.27 \scriptstyle \pm .06$ & $0.07 \scriptstyle \pm .03$ \\
S-Learner (NN) & $0.40 \scriptstyle \pm .06$ & $0.49 \scriptstyle \pm .27$ & $1.72 \scriptstyle \pm .21$ & $0.40 \scriptstyle \pm .02$ \\
BART & $0.77 \scriptstyle \pm .13$ & $0.60 \scriptstyle \pm .00$ & $0.02 \scriptstyle \pm .01$ & $0.02 \scriptstyle \pm .01$ \\
Causal Forests & $0.72 \scriptstyle \pm .01$ & $0.60 \scriptstyle \pm .00$ & $0.11 \scriptstyle \pm .01$ & $0.03 \scriptstyle \pm .02$ \\
\hline
Perfect Match & \multicolumn{2}{c|}{$2.00 \scriptstyle \pm 1.01$} & \multicolumn{2}{c}{$0.24 \scriptstyle \pm .20$} \\
\hline
\end{tabular}
\end{sc}
\end{table}

\begin{table}[ht]
\caption{$\varepsilon_{\ATE}$ across different similarity measures: Contrastive Learning (CL), propensity scores (PS), and Euclidean distance (ED), using CFR-Wass across IHDP, News, and Twins datasets.}
\label{table:similarity-1}
\centering
\begin{sc}
\begin{tabular}{l|c|c|c}
\hline
     \textbf{Measure of Similarity}  & \textbf{ED} & \textbf{PS} & \textbf{CL}\\
     \hline
     IHDP  & $3.12 \scriptstyle \pm 1.33$ & $3.85 \scriptstyle \pm .22$ &  $\textbf{0.05} \scriptstyle \pm .02$ \\
     News  & $0.68 \scriptstyle \pm .20$ & $\textbf{0.54} \scriptstyle \pm .25$ &  $0.75 \scriptstyle \pm .22$ \\
     Twins & $\textbf{0.13} \scriptstyle \pm .15$ & $0.46 \scriptstyle \pm .09$ &  $0.14 \scriptstyle \pm .09$ \\
     \hline
\end{tabular}
\end{sc}
\end{table}

\subsection{Local Regression Module}
\label{appendix:lr}

In this section, we compare the performance of using Gaussian Processes (GP)with different kernels vs. local linear regression. We next define the local linear regression module and present the empirical results in Table \ref{table:local_regressor}.

\paragraph{Local Linear Regression.}For a fixed individual $x$ who received treatment $t$, and has a selected neighbors \(D_{x,t}\). Under the assumption that we can locally approximate the true function with a linear function.  Suppose \(X_D\) is the matrix of the observed feature values in \(D_{x,t}\) augmented with a column of ones for the intercept, and \(Y_D\) is the column vector of observed factual outcomes. The local linear regression coefficients, \( \hat{\beta} \), are computed as:
\[
\hat{\beta} = (X_D^TX_D)^{-1}X_D^TY_D
\]
Then we impute the value of $x$ as $\hat{y} = [1, x]^T \hat{\beta}$.

\begin{table}[ht]
\caption{Comparison of $\varepsilon_{\PEHE}$ and $\varepsilon_{\ATE}$ across different local regression modules: Gaussian Process (GP) with various kernels (DotProduct, RBF, and Matern) and Linear Regression. The first three rows present $\sqrt{\varepsilon_{\text{PEHE}}}$, while the subsequent three rows display $\varepsilon_{\text{ATE}}$.} 
\label{table:local_regressor}
\centering
\begin{sc}
\begin{tabular}{l|c|c|c|c}
\hline
     \textbf{LR}  & \textbf{GP (DotProduct)} & \textbf{GP (RBF)}  &  \textbf{GP (Matern)} & \textbf{Linear Regression}\\
    \hline
     IHDP & $\textbf{0.63} \scriptstyle \pm .01$ & $0.63 \scriptstyle \pm .00$ &  $0.65 \scriptstyle \pm .02$ &  $0.75 \scriptstyle \pm .01$ \\
     News  & $3.56 \scriptstyle \pm .01$ & $3.55 \scriptstyle \pm .04$ &  $\textbf{3.44} \scriptstyle \pm .05$ &  $3.53 \scriptstyle \pm .08$  \\
     Twins & $\textbf{0.51} \scriptstyle \pm .11$ & $0.51 \scriptstyle \pm .02$ &  $0.54 \scriptstyle \pm .04$ &  $0.68 \scriptstyle \pm .08$\\
     \hline
    \hline
     IHDP  & $0.02 \scriptstyle \pm .01$ & $\textbf{0.01} \scriptstyle \pm .00$ &  $0.03 \scriptstyle \pm .01$ &  $0.09 \scriptstyle \pm .01$ \\
     News & $0.60 \scriptstyle \pm .00$ & $0.24 \scriptstyle \pm .12$ &  $\textbf{0.05} \scriptstyle \pm .03$ &  $0.21 \scriptstyle \pm .10$ \\
     Twins & $\textbf{0.21} \scriptstyle \pm .10$ & $0.24 \scriptstyle \pm .04$ &  $0.29 \scriptstyle \pm .04$  &  $0.38 \scriptstyle \pm .10$ \\
     \hline
\end{tabular}
\end{sc}
\end{table}

\subsection{Ablation For Contrastive Learning Parameters}
\label{appendix:neighbors}

In this section, we provide a comprehensive set of ablation studies for the effect of the hyper-parameters of the contrastive learning module. 

\paragraph{Ablation on K and R.} We provide extra ablation studies on the IHDP dataset and the Non-linear dataset to study the effect of \textit{(i)} the number of neighbors (K) and \textit{(ii)} the embedding radius (R) on both $\varepsilon_{PEHE}$ and $\varepsilon_{ATE}$. We observe a consistently enhanced performance across different CATE estimation models. See results in figures \ref{fig:more_ablation_ihdp} and \ref{fig:more_ablation_non_linear}. We also provide ablation studies on the sensitivity of the proposed Contrastive Learning module to the parameter $\epsilon$, which is used to create the training points for the contrastive learning module by creating positive and a negative dataset.

\paragraph{Ablation on the sensitivity parameter $\epsilon$} We provide ablation on the sensitivity parameter $\epsilon$, a similarity classifier for the potential outcomes. The results for the $\varepsilon_{\PEHE}$ as a function of $\epsilon$ are presented in Figure~\ref{fig:performance-comparison}. It can be observed that the error of CATE estimation models is consistent for a wide range of $\epsilon$, demonstrating the robustness of COCOA to the choice of hyper-parameters.

\begin{figure}[ht]
    \centering
    \includegraphics[width=0.42\textwidth]{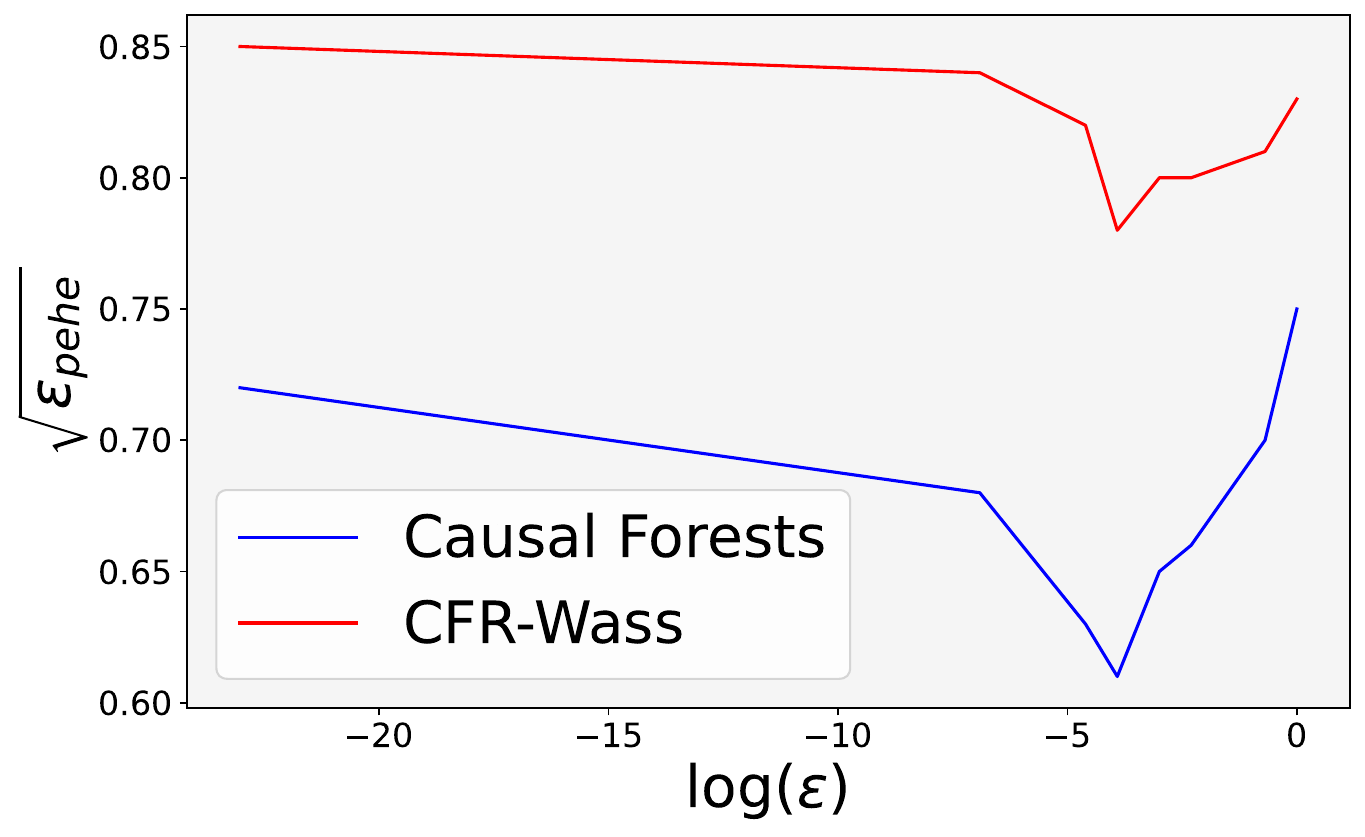}
    \includegraphics[width=0.42\textwidth]{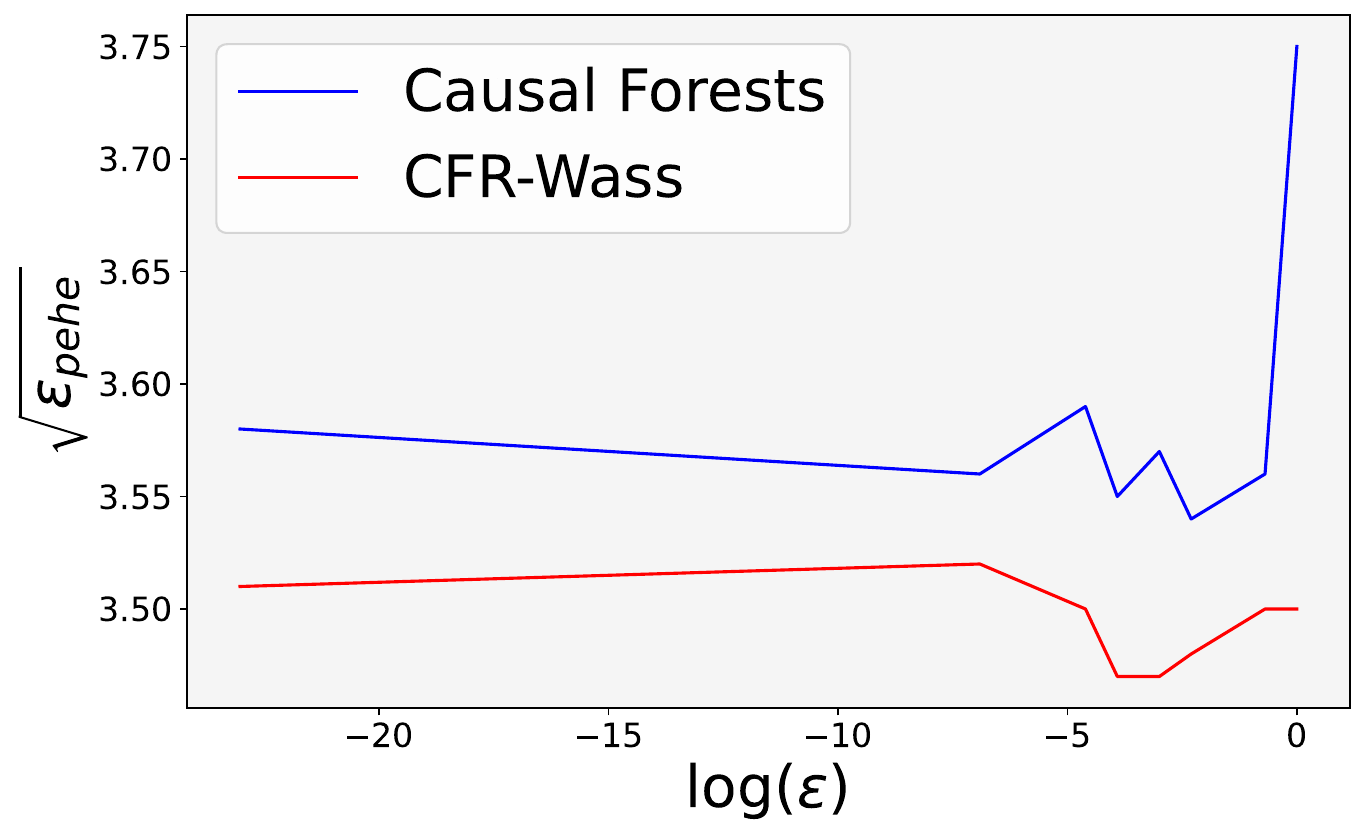}
    
    \caption{$\varepsilon_{\text{PEHE}}$ as a function of the similarity sensitivity parameter $\epsilon$. The figure on the left presents results for the IHDP dataset, while the one on the right is for the News dataset. Performances of two different models (CFR-Wass and Causal Forests) are plotted for both datasets.}

    \label{fig:performance-comparison}
\end{figure}

\subsection{Overfitting to the Factual Distribution}
\label{appendix:overfitting}
In this section, we provide more empirical results on the robustness against overfitting to the factual distribution for the Linear and Non-Linear synthetic datasets, as presented in Figure~\ref{fig:ablation_epochs}.  
\begin{figure}[ht]
    \centering
    \begin{minipage}{.32\textwidth}
        \includegraphics[width=\linewidth]{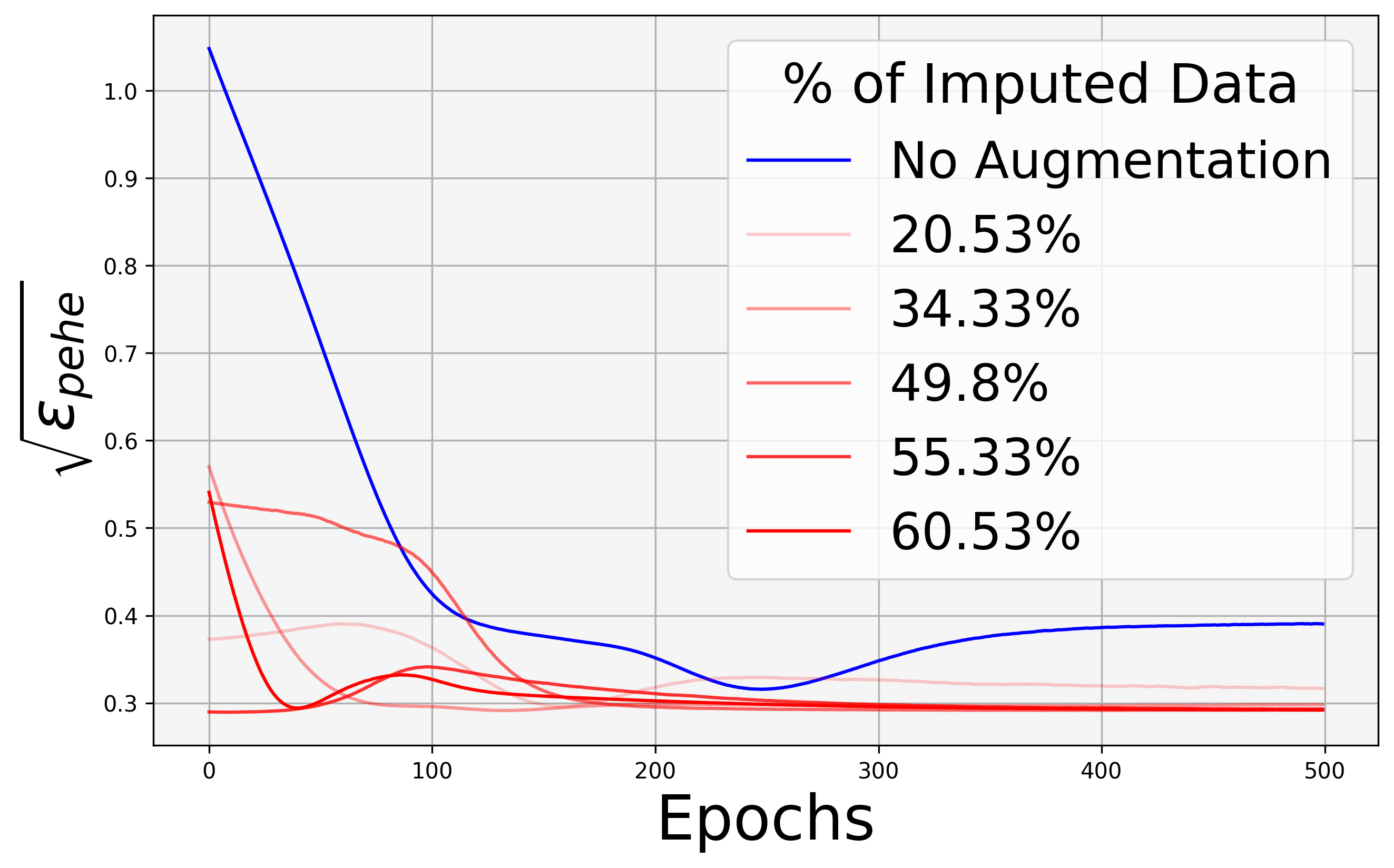}
    \end{minipage}
    \hfill
    \begin{minipage}{.32\textwidth}
        \includegraphics[width=\linewidth]{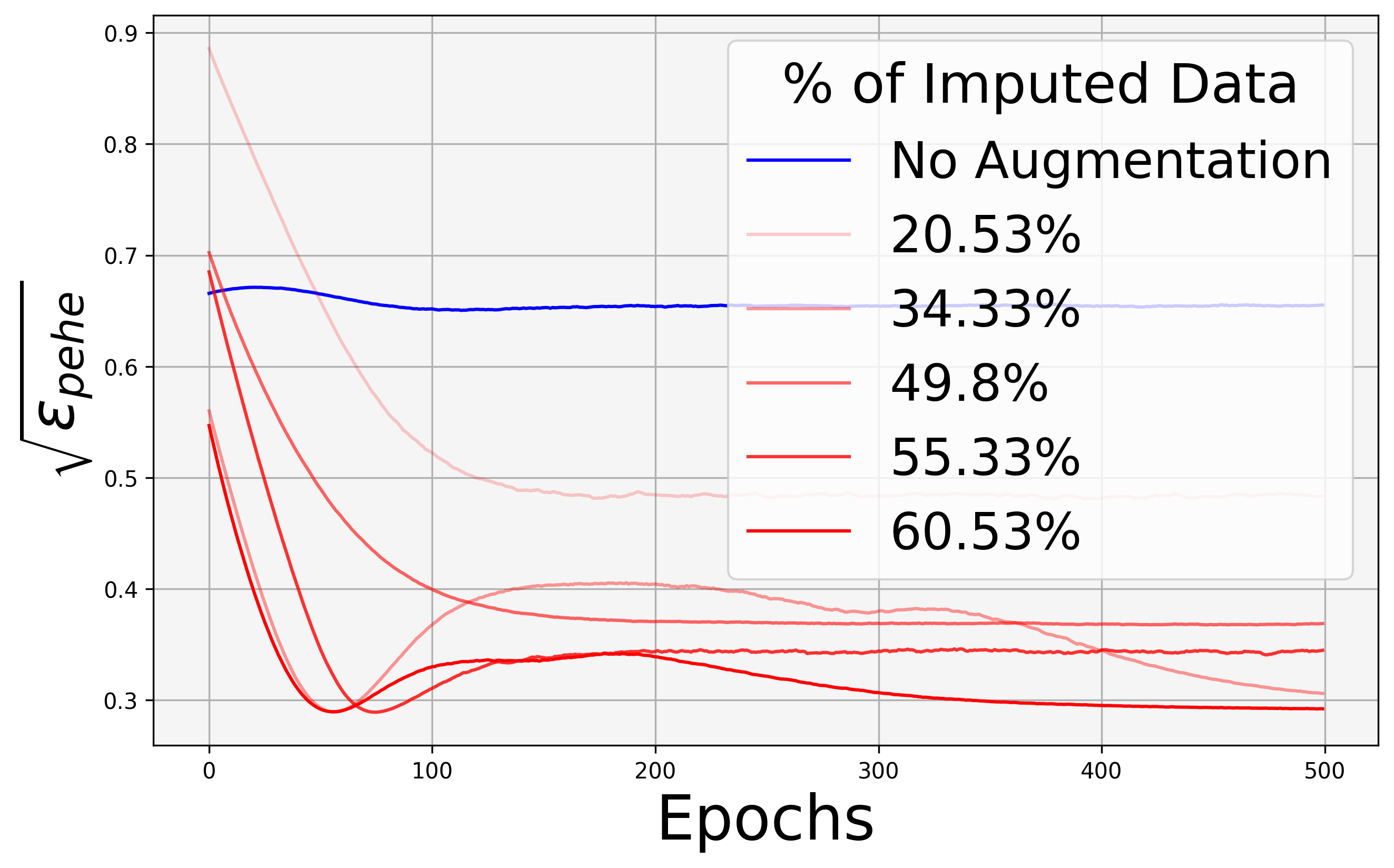}
    \end{minipage}
    \hfill
    \begin{minipage}{.32\textwidth}
        \includegraphics[width=\linewidth]{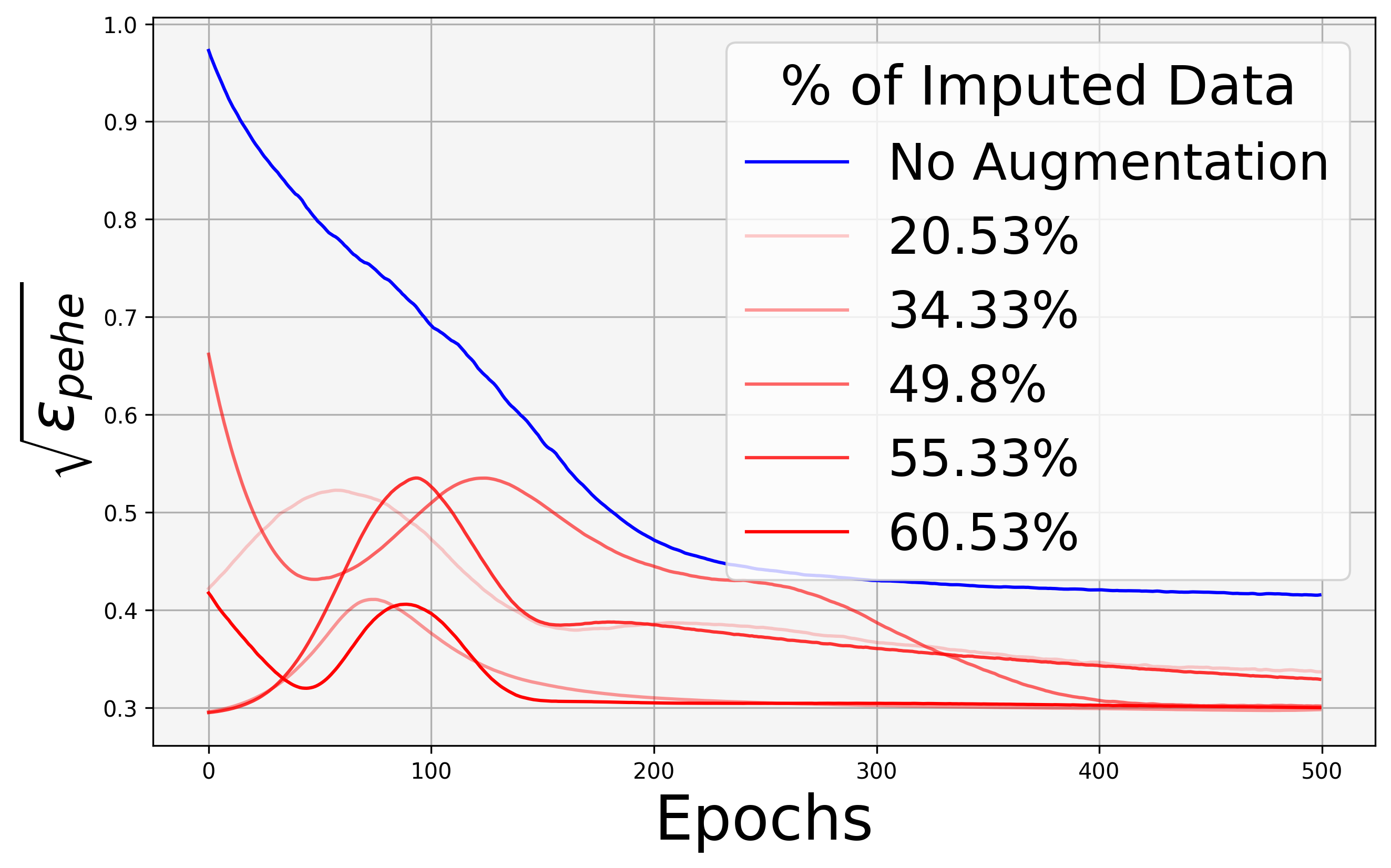}
    \end{minipage}

    \begin{minipage}{.32\textwidth}
        \includegraphics[width=\linewidth]{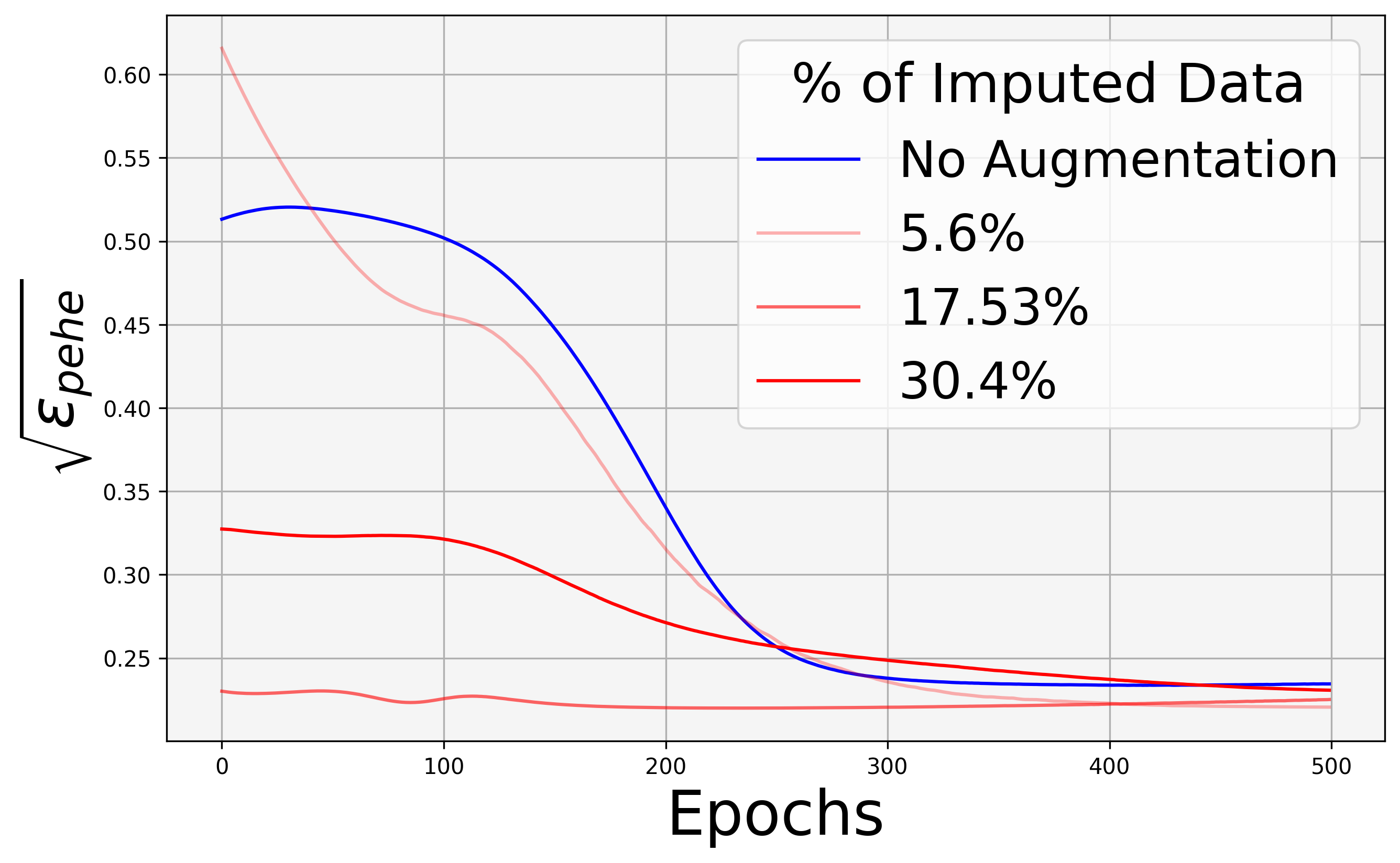}
    \end{minipage}
    \hfill
    \begin{minipage}{.32\textwidth}
        \includegraphics[width=\linewidth]{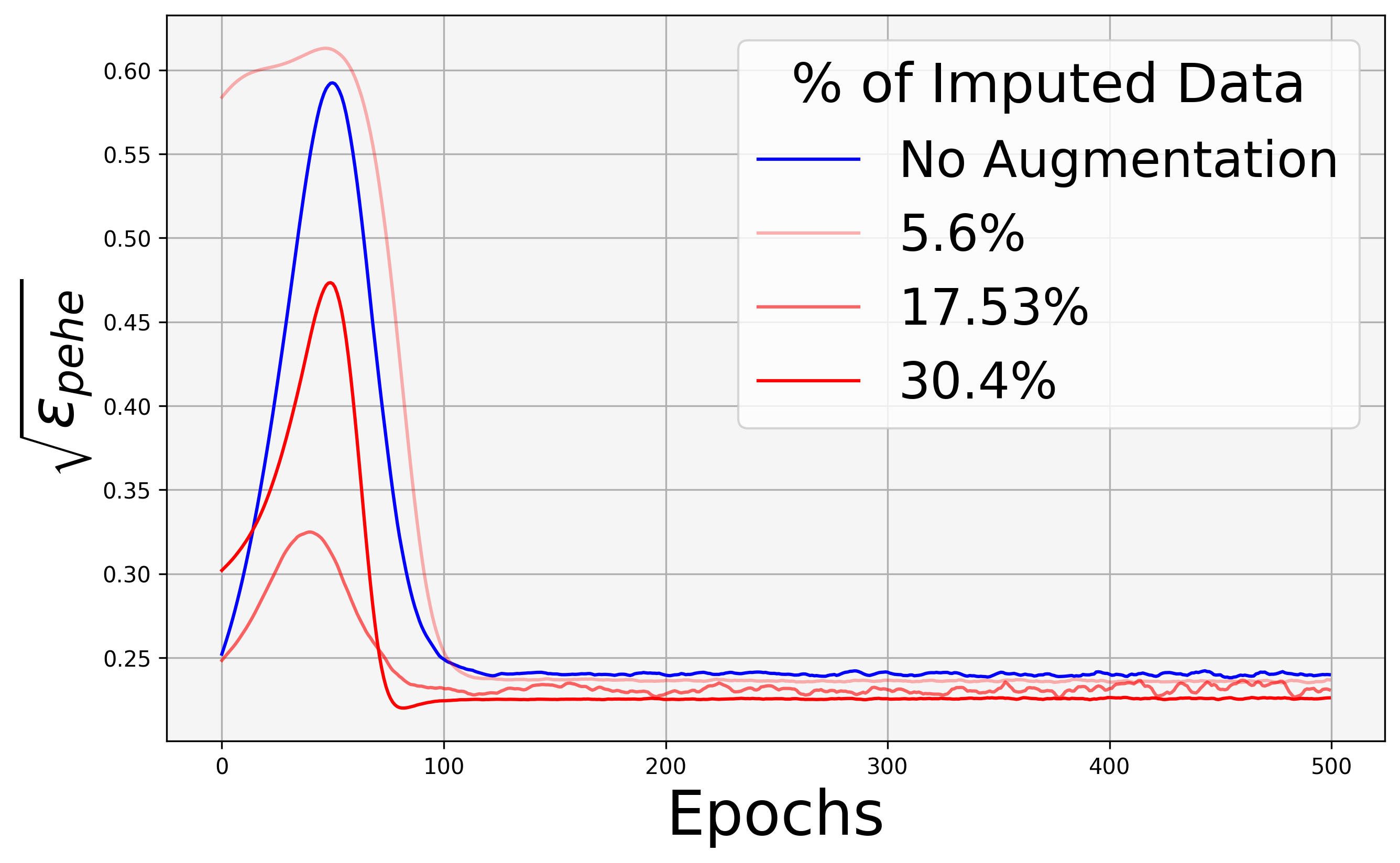}
    \end{minipage}
    \hfill
    \begin{minipage}{.32\textwidth}
        \includegraphics[width=\linewidth]{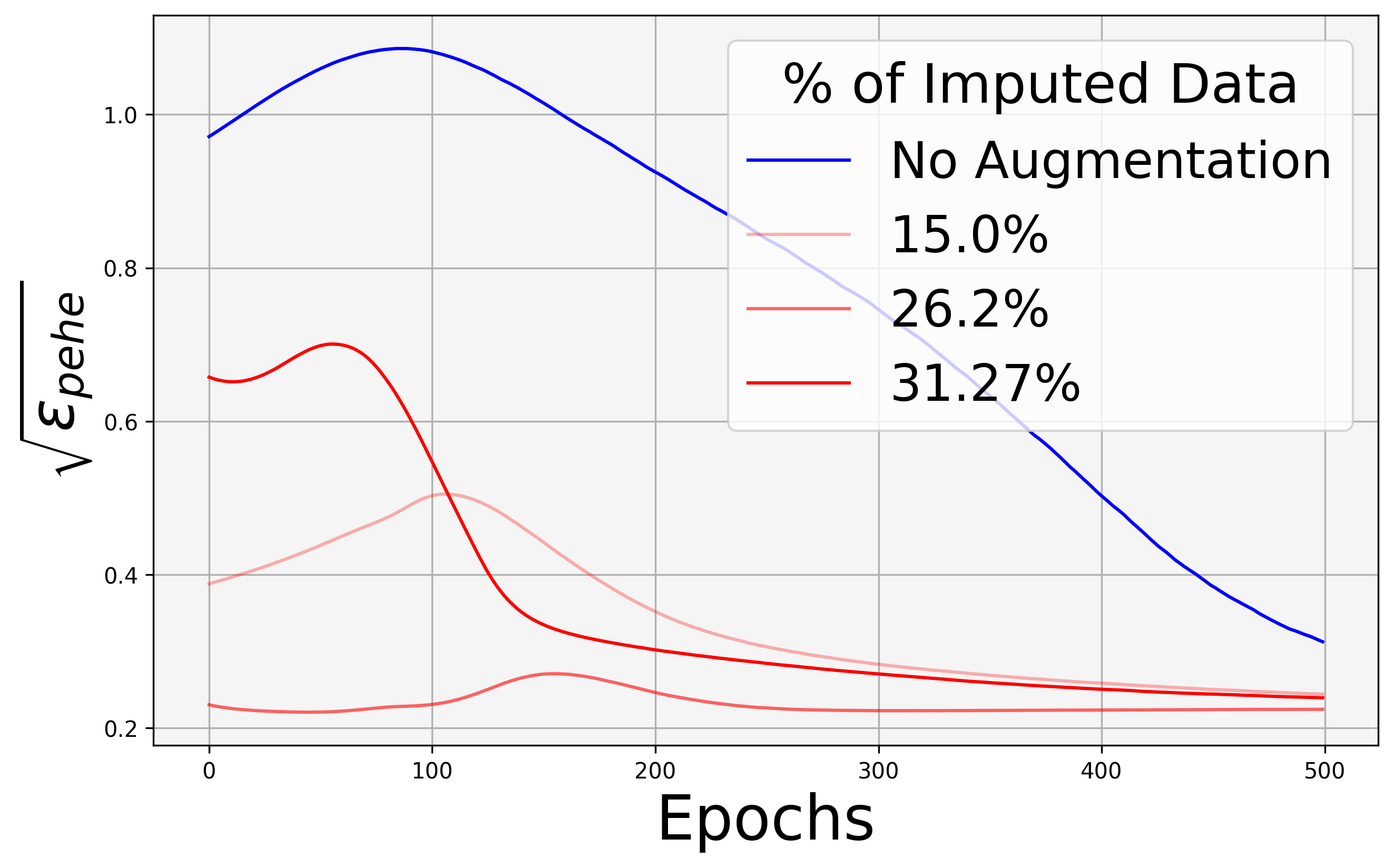}
    \end{minipage}

    \caption{Comparison of training progression for TARNet, CFR-Wass, and T-learner models on linear and non-linear datasets. Top row: Models trained on the linear dataset, showcasing TARNet, CFR-Wass, and T-learner, respectively. Bottom row: The same models trained on the non-linear dataset. This visualization demonstrates the effects of COCOA on preventing overfitting across different data complexities and the performance of three CATE estimation models trained with various levels of data augmentation.}
    \label{fig:ablation_epochs}
\end{figure}

\begin{figure*}[t]
    \centering
    \begin{minipage}{.24\linewidth}
        \centering
        \includegraphics[width=\linewidth]{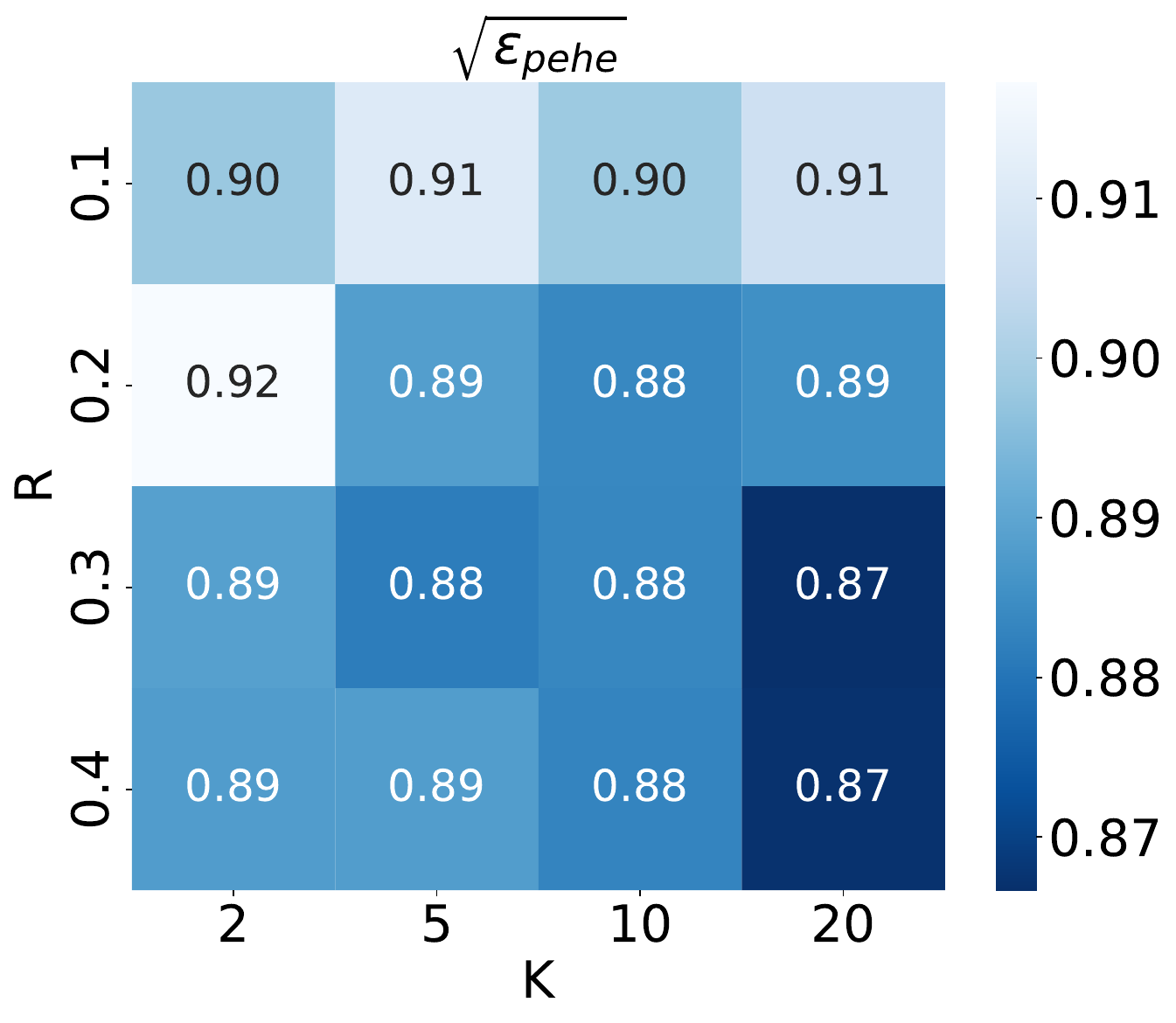}
    \end{minipage}
    \hfill
    \begin{minipage}{.24\linewidth}
        \centering
        \includegraphics[width=\linewidth]{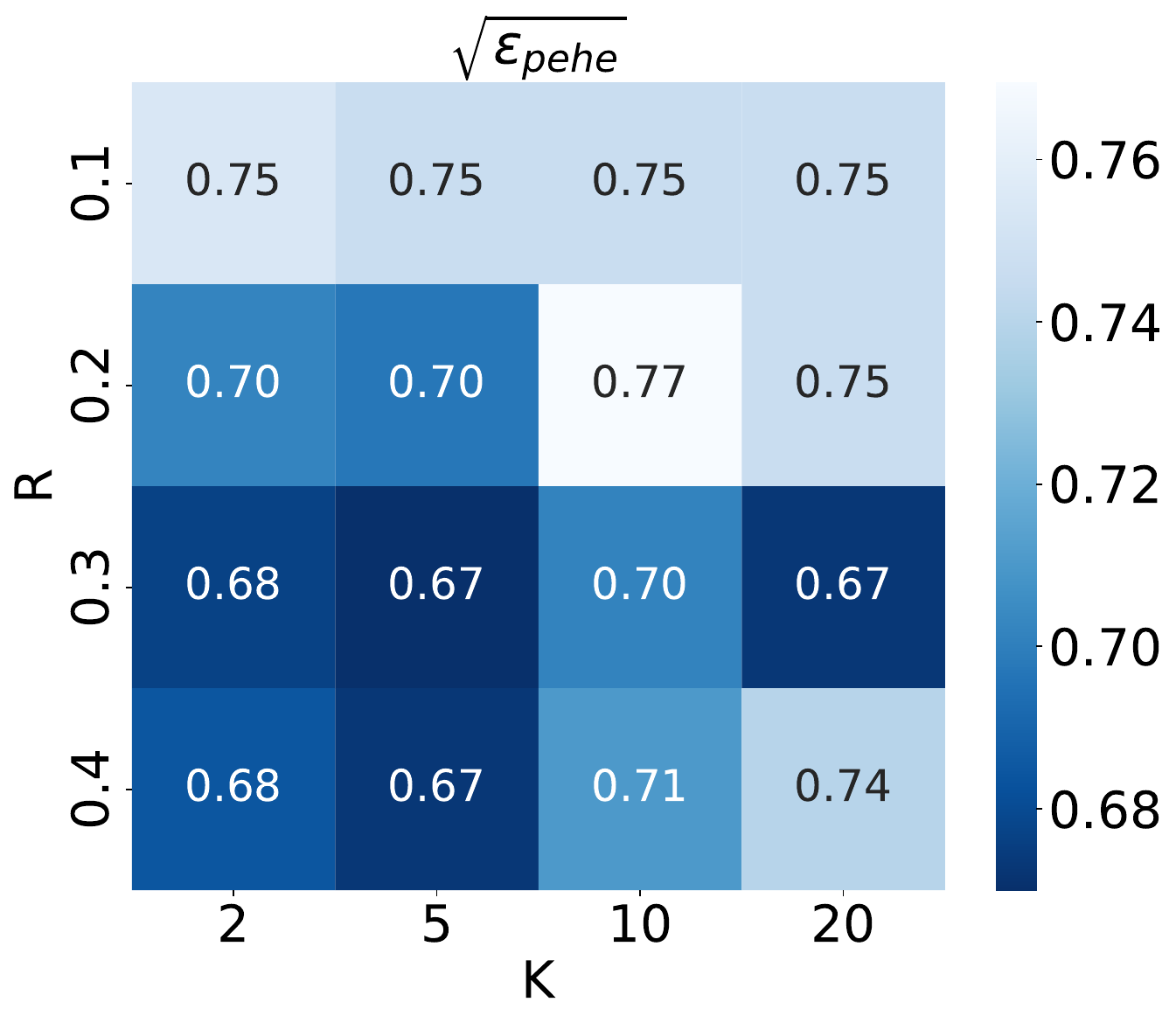}
    \end{minipage}
    \hfill
    \begin{minipage}{.24\linewidth}
        \centering
        \includegraphics[width=\linewidth]{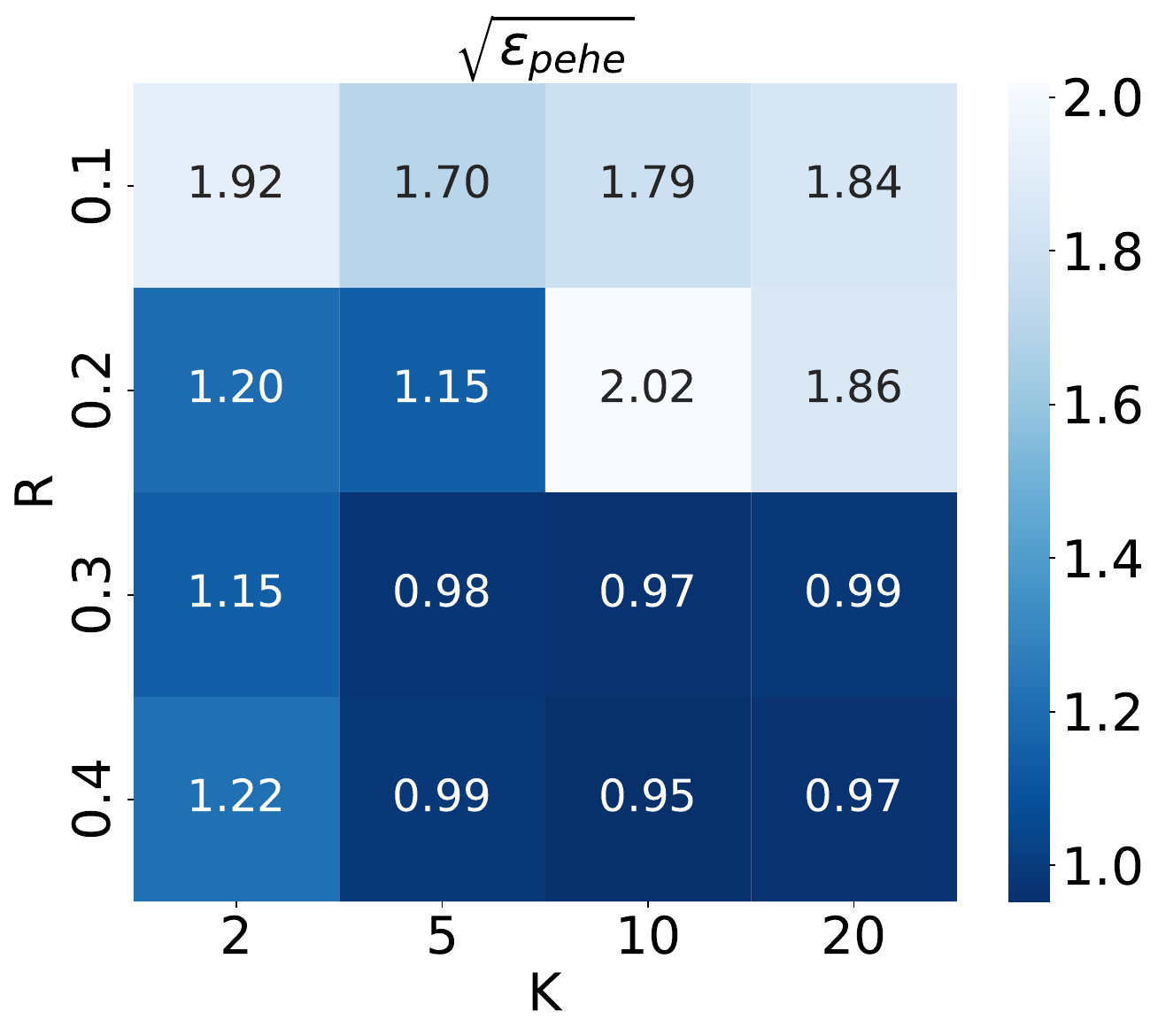}
    \end{minipage}
    \hfill
    \begin{minipage}{.24\linewidth}
        \centering
        \includegraphics[width=\linewidth]{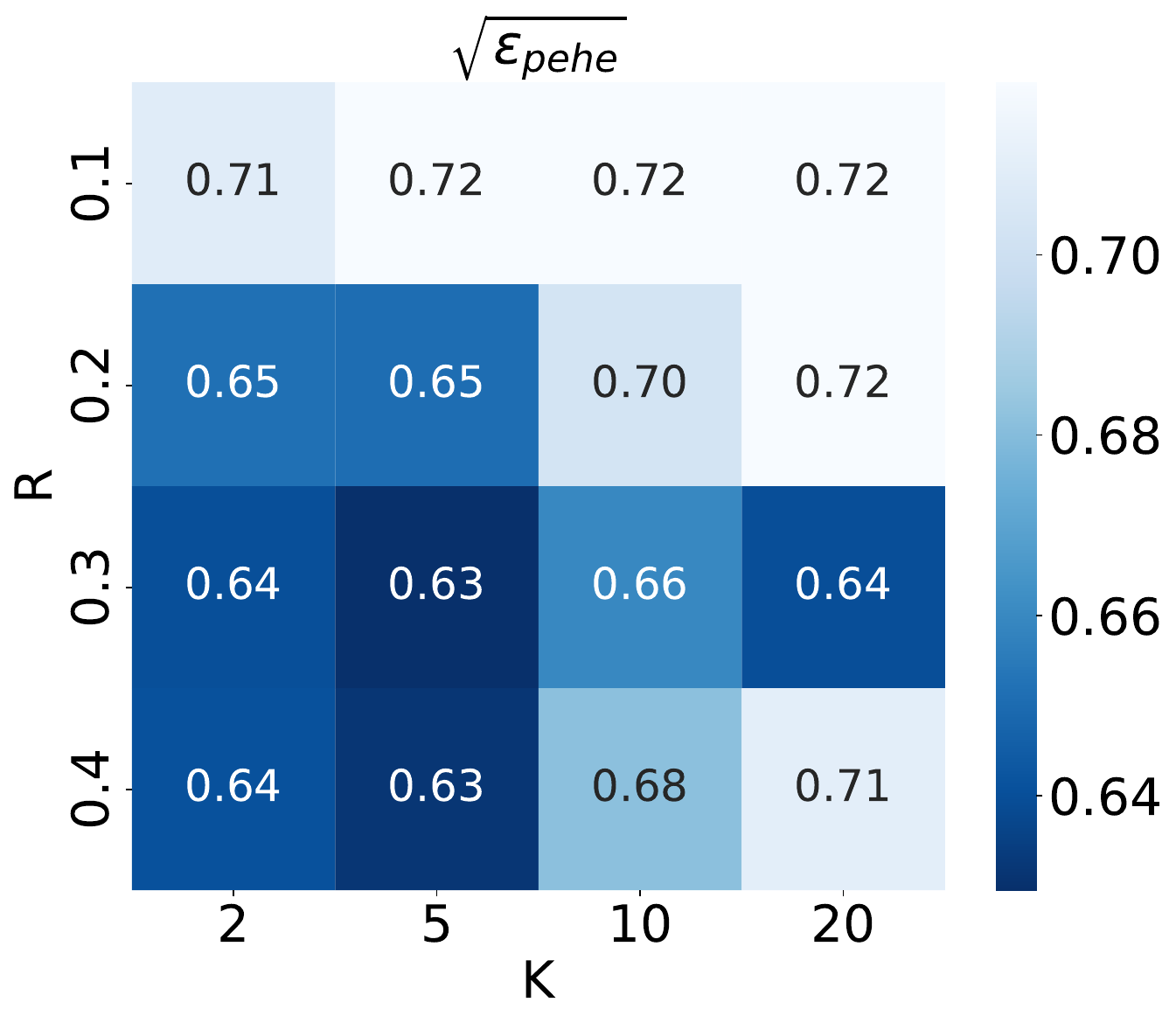}
    \end{minipage}

    \begin{minipage}{.24\linewidth}
        \centering
        \includegraphics[width=\linewidth]{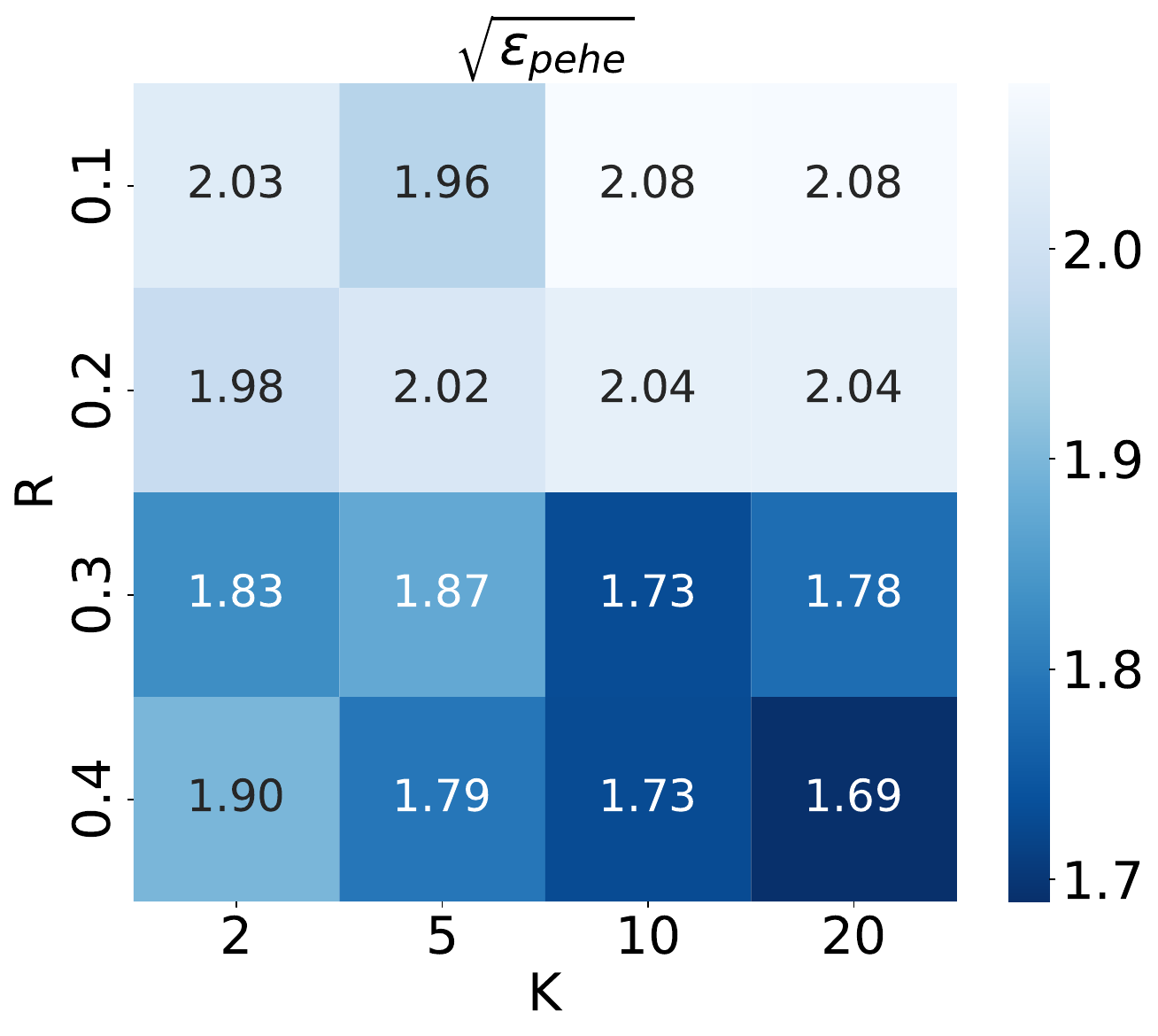}
    \end{minipage}
    \hfill
    \begin{minipage}{.24\linewidth}
        \centering
        \includegraphics[width=\linewidth]{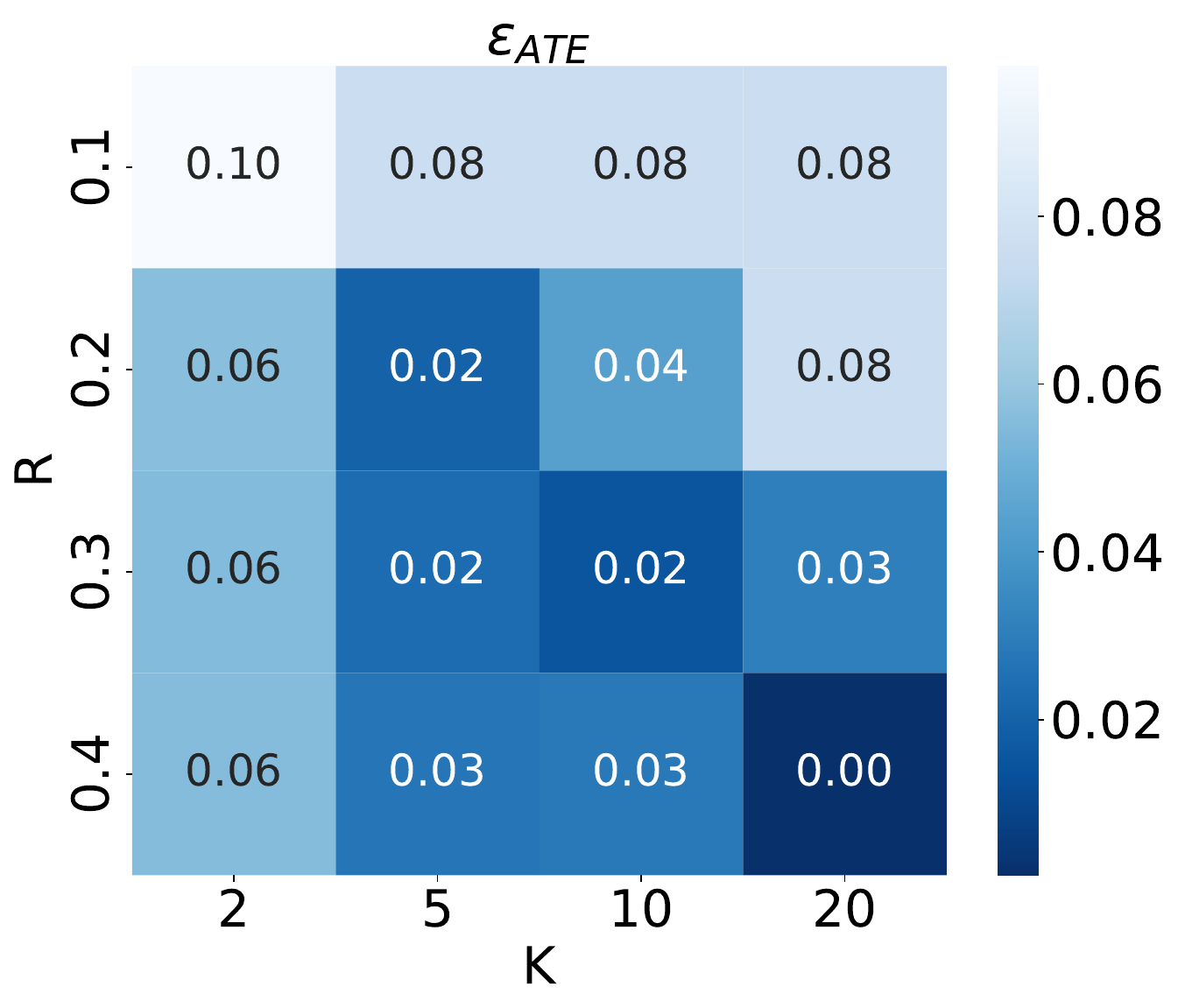}
    \end{minipage}
    \hfill
    \begin{minipage}{.24\linewidth}
        \centering
        \includegraphics[width=\linewidth]{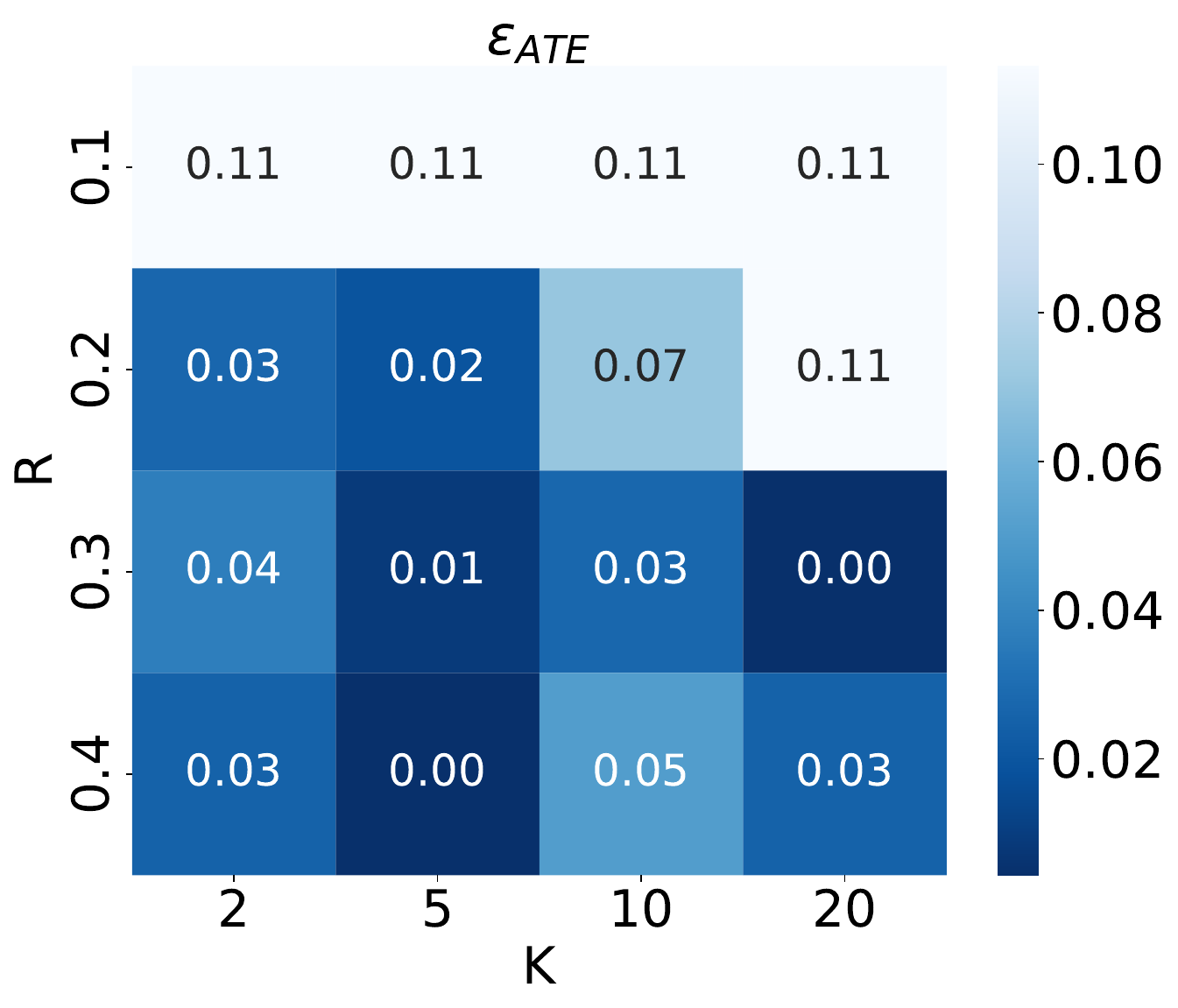}
    \end{minipage}
    \hfill
    \begin{minipage}{.24\linewidth}
        \centering
        \includegraphics[width=\linewidth]{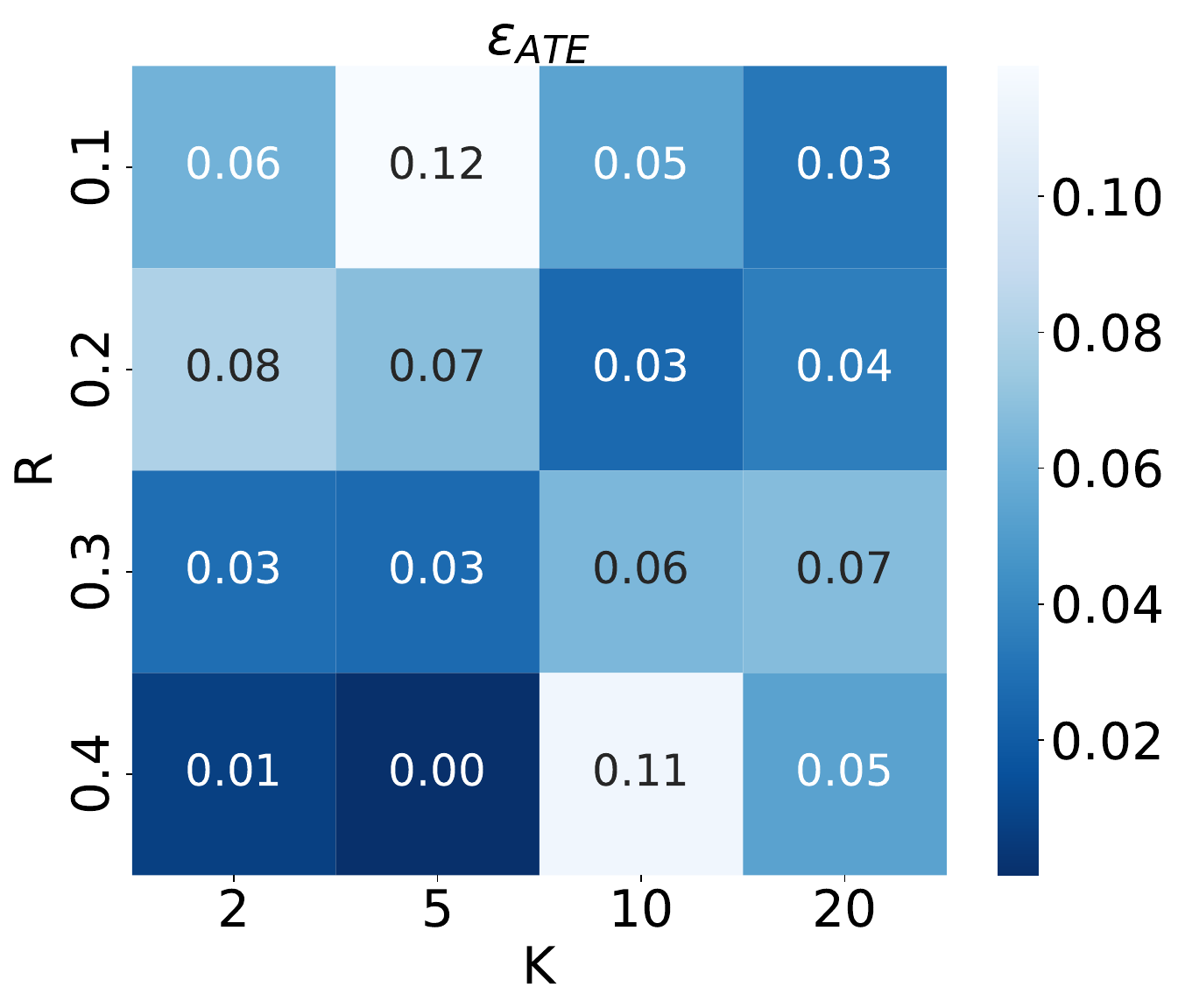}
    \end{minipage}
    
    \caption{Ablation studies on the impact of the size of the $\epsilon-$Ball (R) and the number of neighbors (K) on the performance. The first row from left to right: IHDP with TARNet, BART, S-Learner, and Causal Forests. The second row: IHDP with Causal Forests, T-Learner, BART, and TARNet. These studies illustrate the trade-off between minimizing the discrepancy between the distributions—achieved by reducing K and increasing R—and the quality of the imputed data points, which is achieved by decreasing R and increasing K.}
    \label{fig:more_ablation_ihdp}
\end{figure*}

\begin{figure*}[b]
    \centering
    \begin{minipage}{.32\linewidth}
        \centering
        \includegraphics[width=\linewidth]{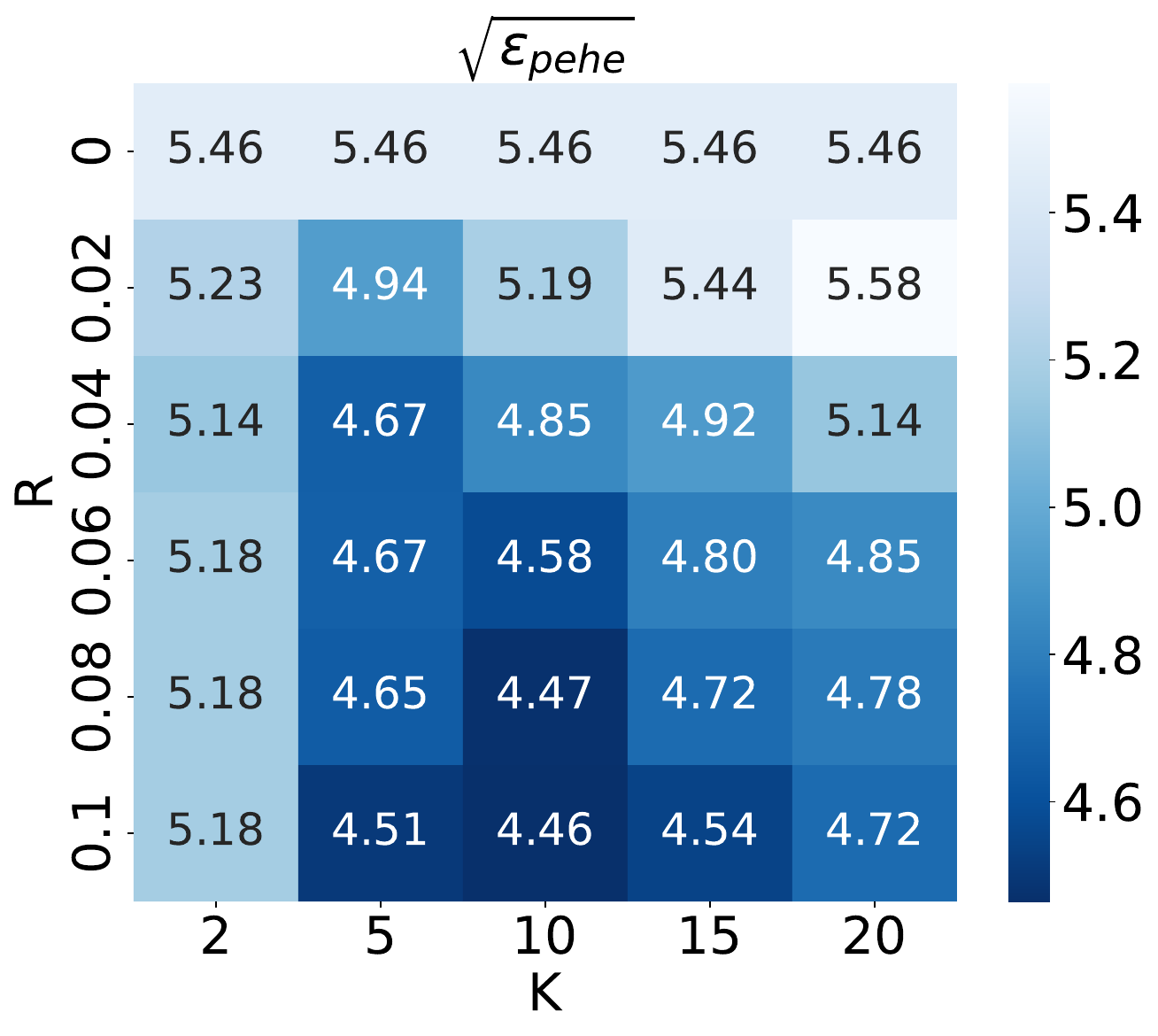}
    \end{minipage}
    \hfill
    \begin{minipage}{.32\linewidth}
        \centering
        \includegraphics[width=\linewidth]{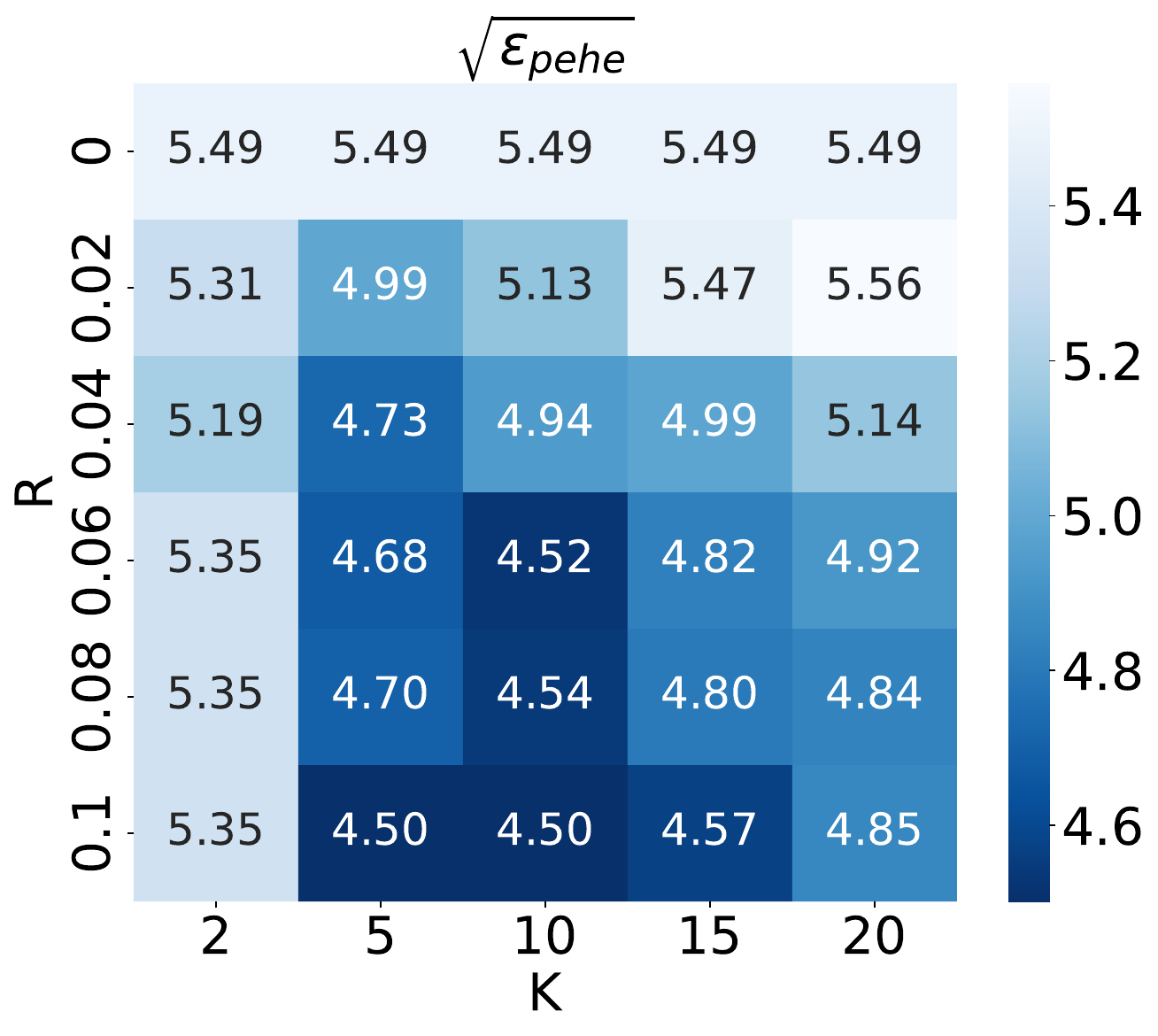}
    \end{minipage}
    \hfill
    \begin{minipage}{.32\linewidth}
        \centering
        \includegraphics[width=\linewidth]{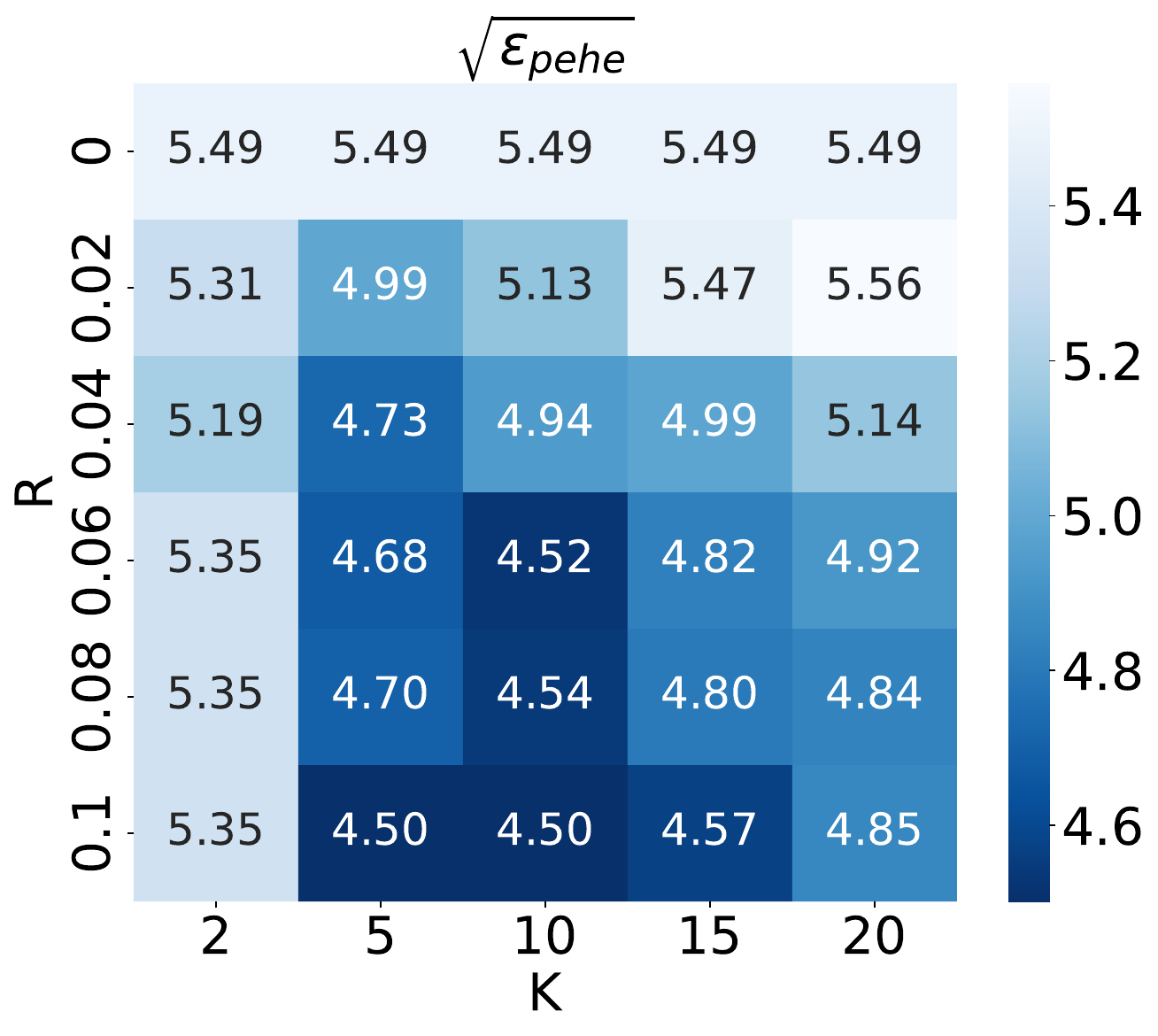}
    \end{minipage}
    
    \begin{minipage}{.32\linewidth}
        \centering
        \includegraphics[width=\linewidth]{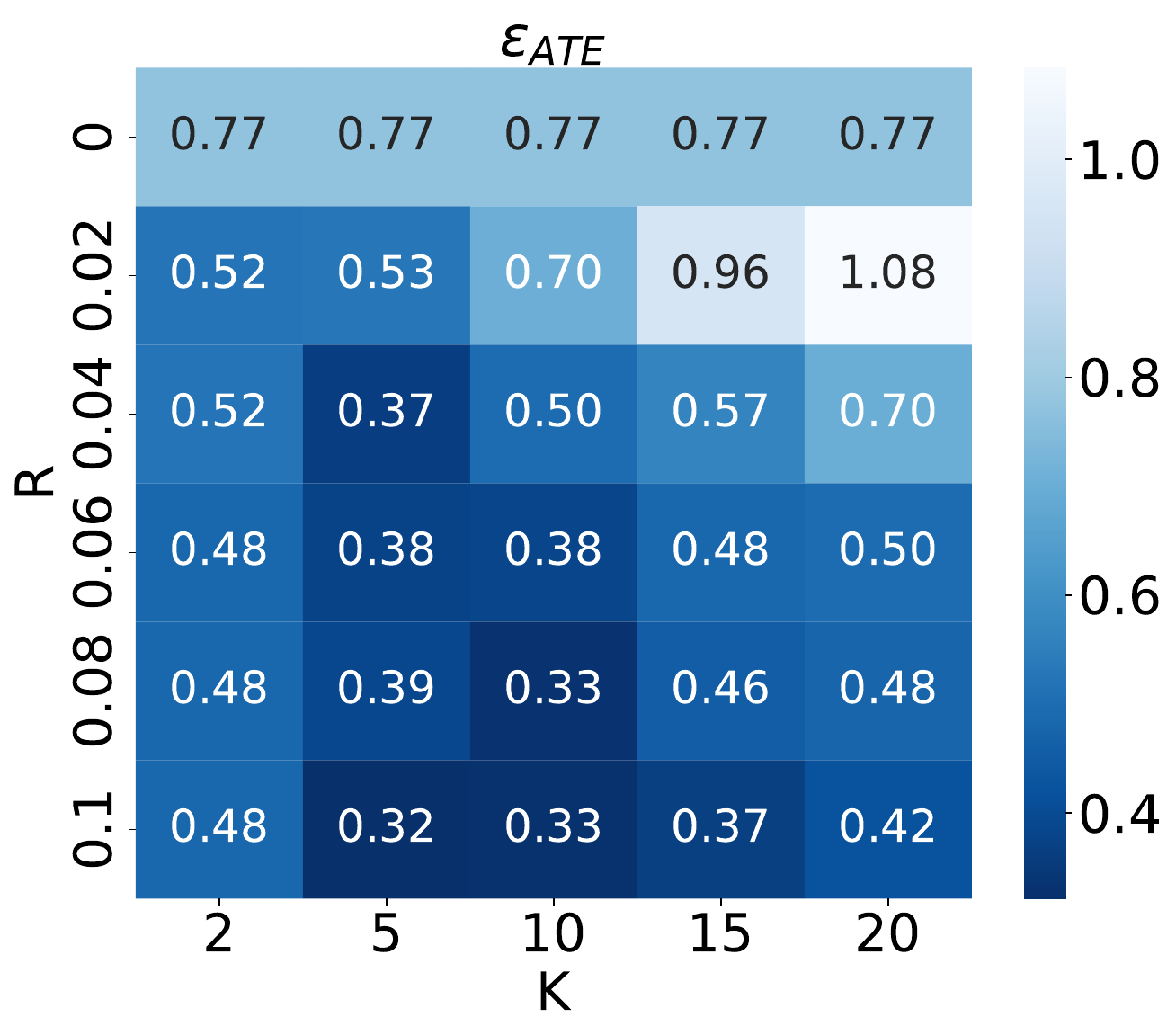}
    \end{minipage}
    \hfill
    \begin{minipage}{.32\linewidth}
        \centering
        \includegraphics[width=\linewidth]{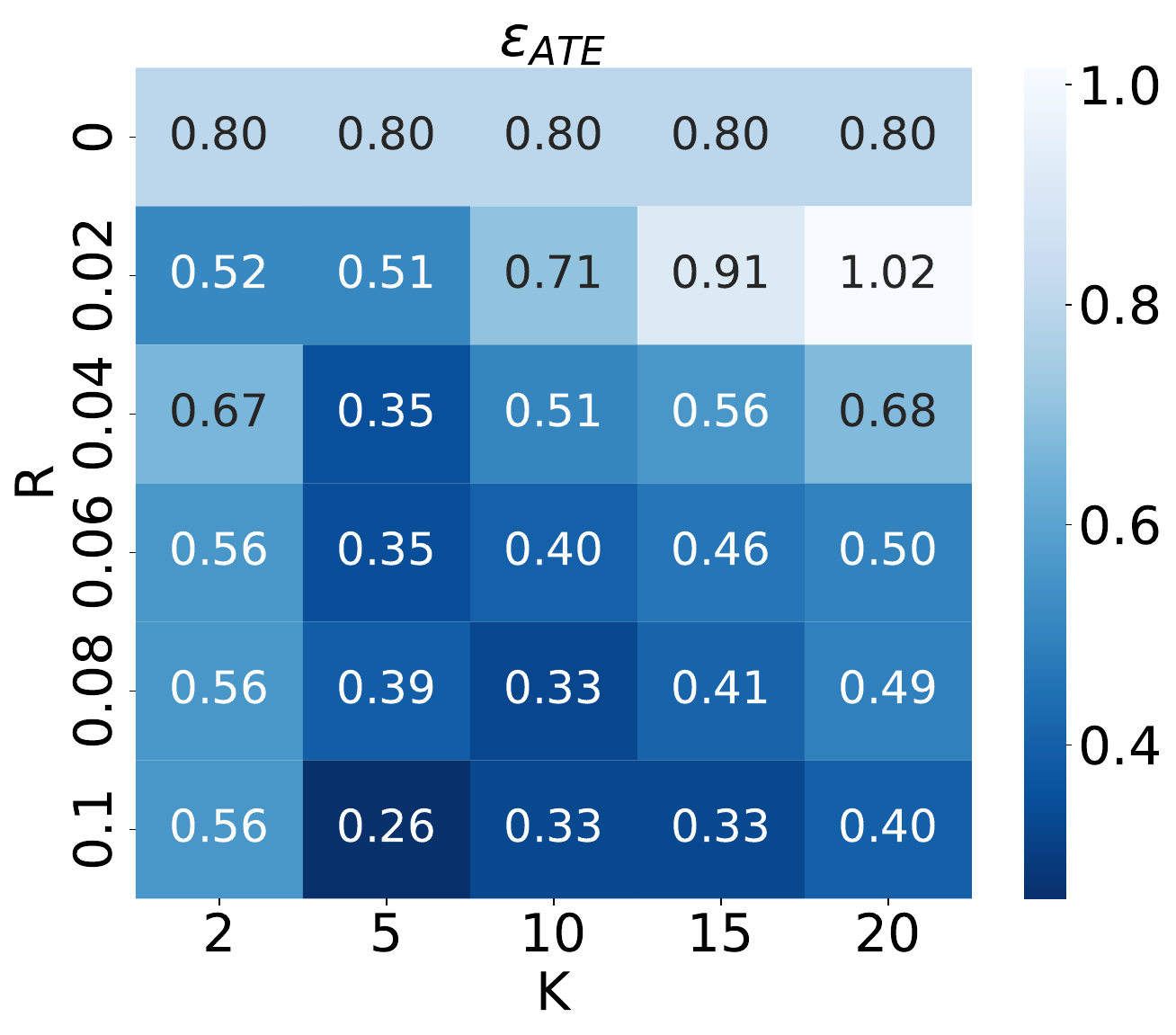}
    \end{minipage}
    \hfill
    \begin{minipage}{.32\linewidth}
        \centering
        \includegraphics[width=\linewidth]{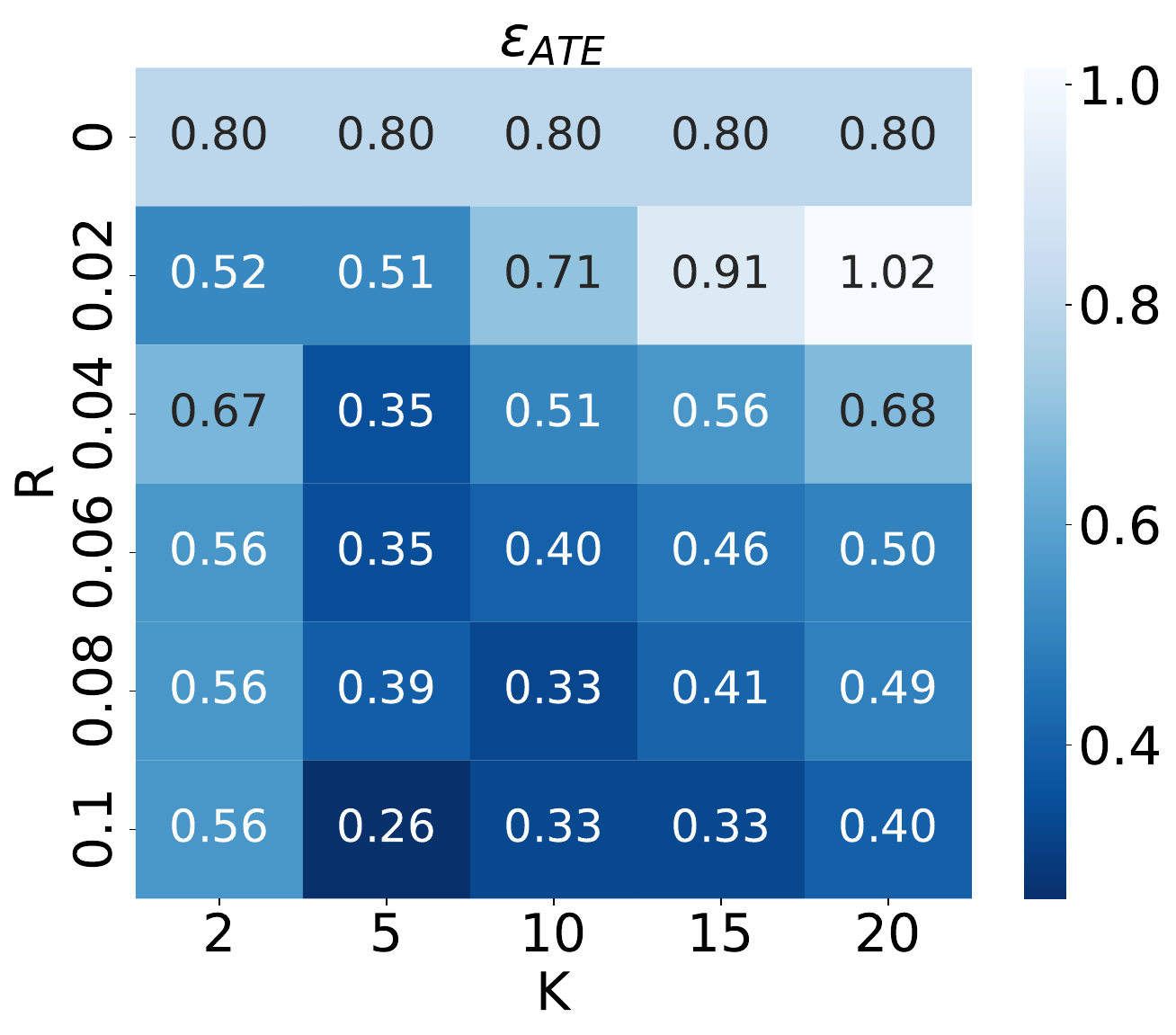}
    \end{minipage}

    \caption{Ablation studies on the Non-linear dataset. Top row from left to right: Causal Forests (PEHE), BART (PEHE), TARNet (PEHE). Bottom row from left to right: Causal Forests (ATE), BART (ATE), TARNet (ATE). Each pair of images represents the performance of the respective models evaluated in terms of Precision in Estimation of Heterogeneous Effect (PEHE) and the error in Average Treatment Effect (ATE) estimation on a non-linear dataset.}
    \label{fig:more_ablation_non_linear}
\end{figure*}

\end{document}